\numberwithin{equation}{section}
\let\c@table\c@figure \makeatother
\renewcommand{\nllabel}[1]
 {{\let\@currentlabel\algocf@currentlabel
  \let\@currentcounter\algocf@currentcounter
  \label{#1}}}%
\renewcommand{\algocf@nl@sethref}[1]{%
  \renewcommand{\theHAlgoLine}{\thealgocfproc.#1}%
  \hyper@refstepcounter{AlgoLine}%
  \gdef\algocf@currentlabel{#1}%
  \gdef\algocf@currentcounter{AlgoLine}%
 }%
\newcommand*{\AssertKeywordText}{verify}
\newcommand*{\RefuteKeywordText}{verify not}
\newcommand{\Proc}[2]{\SetAlgoNoEnd\SetAlgoNoLine{}\ProcHelper{#1}{\SetAlgoShortEnd\SetAlgoVlined{}#2}}
\let\oldnl\nl% Store \nl in \oldnl
\newcommand*{\nonl}{\renewcommand{\nl}{\let\nl\oldnl}}% Remove line number for one line
\def\mycopyright#1{%
    \protected@xdef \@thanks {\@thanks \protect \footnotetext [\the \c@footnote ]{#1}}%
}
\newcommand{\nocontentsline}[3]{}
\newcommand{\tocless}[2]{\bgroup\let\addcontentsline=\nocontentsline#1{#2}\egroup}
\def\nbdash-{\nobreakdash-\hspace{0pt}}
\newcommand*{\Wrt}{w.r.t.\@\xspace}
\mathchardef\mhyphen="2D
\renewcommand*{\emptyset}{\varnothing}
\newcommand*{\mi}[1]{\mathit{#1}}
\newcommand*{\tup}[1]{( #1 )}
\newcommand*{\SetSize}[1]{| #1 |}
\newcommand*{\myparallel}{{\mkern3mu\vphantom{\perp}\vrule depth 0pt\mkern2mu\vrule depth 0pt\mkern3mu}}
\newcommand*{\BoundedOracle}[4][]{\textnormal{\ensuremath{{#2}^{{#3}\ifthenelse{\equal{#4}{\empty}}{\empty}{[#4]}}{#1}}}\xspace}
\newcommand*{\DoubleBoundedParOracle}[5][]{\textnormal{\ensuremath{{#2}^{{#3}\ifthenelse{\equal{#4}{\empty}}{\empty}{[#4]}}_{\ifthenelse{\equal{#5}{\empty}}{\myparallel}{\parallel \mkern -1.5mu \langle #5 \rangle}}{#1}}}\xspace}
\newcommand*{\DoubleBoundedPlusParOracle}[5][]{{\setbox0=\hbox{${\scriptstyle [#4]}$}\setbox1=\hbox{${\scriptscriptstyle +}$}\textnormal{\ensuremath{{#2}^{{#3}\ifthenelse{\equal{#4}{\empty}}{\empty}{{[#4]}{\raisebox{(\ht0 - \ht1 - \dp1)/2 + \dp1}{${\scriptscriptstyle +}$}}}}_{\ifthenelse{\equal{#5}{\empty}}{\myparallel}{\parallel \mkern -1.5mu \langle #5 \rangle}}{#1}}}}\xspace}
\newcommand*{\ComplementPrefixKerned}{\textnormal{co\nbdash-}\kern-0.08em{}}
\newcommand*{\NPTime}{\textnormal{NP}\xspace}
\newcommand*{\BH}[1][]{{\ifthenelse{\equal{#1}{\empty}}{\ensuremath{\mathrm{BH}}}{\NPTime(#1)}}\xspace}
\newcommand*{\DP}[1][]{\textnormal{\ensuremath{\mathrm{D}^\mathrm{P}_{#1}}}\xspace}
\newcommand*{\BHThree}[1][]{{\ifthenelse{\equal{#1}{\empty}}{\ensuremath{\mathrm{BH}_3}}{\ensuremath{\mathrm{BH}_3(#1)}}}\xspace}
\newcommand*{\NExpTime}{\textnormal{\textsc{NExp}}\xspace}
\newcommand*{\BHNExp}[1][]{{\ifthenelse{\equal{#1}{\empty}}{\textnormal{\textsc{ExpBH}}}{\NExpTime(#1)}}\xspace}
\newcommand*{\BoundedHausdRed}[1]{\leq_{\ifthenelse{\equal{#1}{\empty}}{\empty}{#1\mhyphen}\mathrm{hd}}}
\newcommand*{\BoundedHausdRedCLASSVERBOSE}[2]{{\leq}_{\ifthenelse{\equal{#1}{\empty}}{\empty}{#1\mhyphen}\mathrm{hd}}\mkern-1mu(#2)}
\theoremstyle{plain}
\newtheorem*{theorem*}{Theorem}
\newtheorem{theorem}{Theorem}[section]
\newtheorem{lemma}[theorem]{Lemma}
\theoremstyle{definition}
\newtheorem{definition}[theorem]{Definition}
\newtheoremstyle{problemstyle}% name
  {\topsep}% Space above
  {\topsep}% Space below
  {\normalfont}% Body font
  {}%Indent amount (empty = no indent, \parindent = para indent)
  {\bfseries}%  Thm head font
  {:}%       Punctuation after thm head
  { }%      Space after thm head: " " = normal interword space;
\theoremstyle{problemstyle}
\newtheorem*{probenvironment}{Problem}
\crefname{line}{line}{lines}
\Crefname{line}{Line}{Lines}
\crefname{equation}{Eq.}{Eqs.}
\Crefname{equation}{Eq.}{Eqs.}
\def\NL{\text{\rm NL}\xspace}
\def\PTIME{\text{\rm PTIME}\xspace}
\def\NP{\text{\rm NP}\xspace}
\def\DP{\text{\rm D\textsuperscript{P}}\xspace}
\def\LOGSPACE{\text{\rm L}\xspace}
\def\co{\rm co\text{-}}
\def\hard{\text{-{\rm hard}}\xspace}
\def\complete{\text{-{\rm complete}}\xspace}
\newcommand{\rev}[1]{#1}
\newcommand*{\wrt}{w.r.t.\ }
\newcommand*{\x}{\Instance{x}} % instance
\newcommand*{\y}{\Instance{y}}
\newcommand*{\z}{\Instance{z}}
\newcommand*{\M}{\Model{M}} % model
\newcommand*{\BDD}{\Model{G}} % BDD
\newcommand*{\PER}{\Model{S}} % perceptron; SVM
\newcommand*{\MLP}{\Model{N}} % MLP
\newcommand*{\W}{\mathbf{W}} % used in MLP
\newcommand*{\bb}{\mathbf{b}} % used in MLP
\newcommand*{\h}{\mathbf{h}} % used in MLP
\newcommand*{\w}{\mathbf{w}} % used in perceptron/SVM
\newcommand*{\Instance}{\mathbf} % similar to above
\newcommand{\Model}{\mathcal} % used to build arbitrary-name models
\newcommand*{\lang}[1]{\mathcal{L}[#1]}
\newcommand*{\preordereq}{\preccurlyeq}
\newcommand*{\preorder}{\prec}
\newcommand*{\genericrelation}{\mathrel{\lozenge}}
\newcommand*{\ngenericrelation}{\mathrel{\blacklozenge}}
\newcommand*{\var}{v} % a variable in a condition
\newcommand*{\paths}[2]{\Pi(#1,#2)}
\newcommand*{\class}{c} % a class of the classifier
\newcommand*{\closure}[1]{\mathsf{cl}(#1)}
\newcommand*{\BDDCL}{\mathsf{BDD}}
\newcommand*{\MLPCL}{\mathsf{MLP}}
\newcommand*{\PERCL}{\mathsf{PRC}}
\newcommand*{\DTL}{\mathsf{DT}} % MLP
\newcommand*{\CLS}{\mathsf{C}}
\def\nbdash-{\nobreakdash-\hspace{0pt}}
\newcommand*{\Set}[1]{\{ #1 \}}
\newcommand*{\ModelsOf}[1]{\llbracket #1 \rrbracket}
\newcommand*{\ElementOfInstance}[2]{\Instance{#1}[#2]}
\newcommand*{\condition}{condition\xspace}
\newcommand*{\conditions}{conditions\xspace}
\newcommand*{\glob}{global\xspace}
\newcommand{\IsMinNec}{\textsc{IsMinNecessary}\xspace}
\newcommand{\FindMinNec}{\textsc{FindMinNecessary}\xspace}
\newcommand{\IsNec}{\textsc{IsNecessary}\xspace}
\newcommand{\IsNotNec}{\textsc{IsNotNecessary}\xspace}
\newcommand{\UniformRootedAcyclicReach}{\textsc{UniformRootedAcyclicReach}\xspace}
\newcommand{\DagReach}{\textsc{DagReach}}
\newcommand{\UNSAT}{\textsc{UnSat}}
\newcommand{\SATUNSAT}{\textsc{Sat-UnSat}}
\title{On the Complexity of Global Necessary Reasons to Explain Classification}
\date{}
\author{Marco Calautti\,\orcidlink{0000-0003-0921-4040}}
\affil{DI, University of Milan, Italy}
\affil{marco.calautti@unimi.it}
\author{Enrico Malizia\,\orcidlink{0000-0002-6780-4711}}
\affil{DISI, University of Bologna, Italy}
\affil{enrico.malizia@unibo.it}
\author{Cristian Molinaro\,\orcidlink{0000-0003-4103-1084}}
\affil{DIMES, University of Calabria, Italy}
\affil{c.molinaro@dimes.unical.it}
\begin{document}

%\pagenumbering{roman}

\maketitle

\begin{abstract}
Explainable AI has garnered  considerable attention in recent years, as understanding the reasons behind decisions or predictions made by AI systems is crucial for their successful adoption. 
Explaining classifiers' behavior is one prominent problem.
%Work in this area has proposed notions of both \emph{local} explanations, concerned with explaining a classifier's behavior for a specific instance, and \emph{global} explanations, concerned with explaining the overall classifier's behavior regardless of any specific instance.
Work in this area has proposed notions of both \emph{local} and \emph{global} explanations, where the former are concerned with explaining a classifier's behavior for a specific instance, while the latter are concerned with explaining the overall classifier's behavior regardless of any specific instance.
In this paper, we focus on global explanations, and explain classification in terms of ``minimal'' necessary conditions for the classifier to assign a specific class to a generic instance.
We carry out a thorough complexity analysis of the problem for natural minimality criteria and important families of classifiers considered in the literature.
\end{abstract}

%\Proofsep

%\clearpage

%\tableofcontents

%\clearpage

%\pagenumbering{arabic}

%%% BODY OF THE ARTICLE

\section{Introduction}\label{sec:introduction}

Explainable AI (XAI) has become a very active research area in the latest years.
Being able to explain AI systems' behavior is crucial for their successful adoption, and this becomes even more important in critical domains, such as healthcare and finance, where decisions made by AI systems impact people's life.
Among different interesting problems in XAI, explaining classifiers' decisions has attracted significant attention \cite{BaehrensSHKHM10,0001I22,abs-2406-11873}.%\from{c}{c}{mettere + riferimenti}

Work in this area has proposed notions to explain classifiers' behavior \emph{on a specific feature vector (instance)}, providing so-called \emph{local} explanations, as well as notions to explain the overall classifier's behavior \emph{regardless of any particular instance}, providing so-called \emph{global} explanations.
For both notions, a key issue is to analyze the computational complexity of the problems at hand, since this is crucial to understand how to approach the development of algorithmic solutions and their inherent limits. In the field of XAI, there has been an extensive body of work addressing complexity issues \cite{BassanAK24,OrdyniakPS23,ArenasBBM23,CooperS23,CooperA23,CarbonnelC023,HuangIICA022,AudemardBBKLM22,ColnetM22,BarceloM0S20,0001GCIN20}

This paper falls within this ongoing research stream.
%Specifically, we deal with global explanations, and focus on the notion of a \emph{global necessary reason}, that is, a necessary condition for a classifier to assign a class of interest (without referring to any specific instance).
Specifically, we deal with global explanations, and focus on so-called \emph{global necessary reasons}, defined as conditions that instances must satisfy in order to be classified with a class of interest.

This notion has been considered by \citet{IgnatievNM19}, where an interesting relationship with another kind of global explanation is shown.
However, this is all we know about global necessary reasons, \rev{from a technical point of view.}\footnote{\rev{We point out that another notion of "global necessary reason" has been considered in~\cite{BassanAK24}, but this is fundamentally different from the one by \citet{IgnatievNM19} which we consider in this paper, and its complexity has already been thoroughly analyzed by \citet{BassanAK24}. A comparison between this notion and the one we consider in this paper can be found in \cref{sec:related}.}}

\rev{
On the other hand, global necessary reasons offer critical insights into classifiers' behavior for diverse purposes. 
When the class of interest is a desired outcome, a global necessary reason identifies conditions that must be necessarily met by any instance to achieve the desired prediction. 
% Thus, despite the explanation being global, it also provides  valuable insights into the classifier's \emph{local} behavior, in that a
% global necessary reason 
% can be profitably used to make decisions w.r.t.\ a single instance: the global necessary reason 
% can be used to decide what needs to change in a given instance in order to change the class of the instance.
Conversely, if the class is undesirable, a global necessary reason indicates how to avoid that class, as violating the condition provided by the global necessary reason \emph{always} leads to a different classification. 
Global necessary reasons also help discover biases in the classifier, e.g., a global necessary reason stating that one must be male to obtain a loan unveils a bias.}

Thus, the goal of this paper is to deepen the study of global necessary reasons, making several steps forward.
Specifically, we start by introducing a logic-based language to express global necessary reasons, as logic offers formal guarantees of rigor and has proven to be well-suited for explainability purposes---see, e.g., \cite{Silva22,0001I22,Darwiche23,abs-2406-11873}.

As different global necessary reasons may convey different amounts of information, we also take into account a notion of ``minimality'', whose role is to allow us to identify the most informative global necessary reasons.

We then provide a systematic complexity analysis of key problems related to (both arbitrary and minimal) global necessary reasons.
In particular, given a classifier $\M$, a class $\class$ of interest, and a logical expression $\phi$, we study the problems of checking whether $\phi$ is an arbitrary (resp., minimal) global necessary reason for why $\M$ classifies instances with $\class$.
We analyze the complexity of such problems for important families of classifiers, namely, binary decision diagrams (BDDs), perceptrons, and multilayer perceptrons (MLPs), (see, e.g., \citealp{BarceloM0S20}), and common minimality criteria, namely, cardinality and set-inclusion (see, e.g., \citealp{CooperS23}). 

The complexity results we derive provide several interesting insights into (minimal) global necessary reasons.
Specifically, the complexity does not increase when minimality is taken into account for perceptrons and BDDs, while minimality increases the complexity for MLPs.
Somewhat surprisingly, the two minimality criteria turned out to lead to the same family of explanations, and, as a consequence, the complexity does not change across the two minimality criteria for all classifier families here considered.
More precisely, the problems we consider are in \LOGSPACE (i.e., solvable in logarithmic space) for perceptrons and \NL{}\complete for BDDs. 
On the other hand, for MLPs, we show \co\NP{}\complete{}ness and \DP{}\complete{}ness for arbitrary and minimal global necessary reasons, respectively.

\rev{Besides being interesting in their own right, the above complexity results also allow us to draw key insights on the complexity of \emph{computing} minimal global necessary reasons. In particular, we show that (1) computing minimal global necessary reasons is at least as hard as the decision problem for minimal global necessary reasons, and (2) minimal global necessary reasons can be computed efficiently, i.e., in polynomial time, given access to a subroutine (a.k.a.\ oracle) solving the decision problem for arbitrary global necessary reasons.}
Such results are significant in that they imply minimal global necessary reasons can be computed very efficiently for perceptrons and BDDs, since the decision problem for arbitrary global necessary reasons is in $\LOGSPACE$ and $\NL\complete$, respectively, and thus can be solved on highly-parallel machines---see, e.g., \cite{Arora09,10.5555/203244}. 
\rev{Furthermore, in the case of MLPs, our complexity results imply that minimal global necessary reasons can be computed by resorting to SAT solvers, which have proven to be very efficient at solving computationally hard problems (even \co\NP{}\complete{} and \DP{}\complete{} ones) concerning the computation of classifiers' explanations \cite{Silva22}. Moreover, since computing minimal global necessary reasons is at least as hard as the decision problem for minimal global necessary reasons, which is $\DP\hard$ for MLPs, it follows that a SAT-based approach is, somehow, mandatory.}

Finally, while deriving our results, we also identify properties and develop techniques that we believe are interesting in their own right and, importantly, may be used in future work for analyzing different kinds of (global) explanations.

Because of the implications discussed above, we believe this paper is a first foundational step towards %the development of algorithms for computing 
a full understanding and the adoption of global necessary reasons.

%Given an arbitrary classifier $\M$ and a class $\class$, a global necessary reason is a (logical) condition $\phi$ that is satisfied  whenever $\M$ assigns $\class$, regardless of any specific instance.
%In a sense, $\phi$ provides an over-approximation of the behavior of $\M$ when it comes to assign class $\class$, in that the instances satisfying 

%\cite{Silva22} motivazioni per uso di logica e la necessità di spiegare anche modelli che spesso vengono ritenuti interpretabili.

\section{Preliminaries}\label{sec:preliminaries}

%\red{*** REMOVE ***}
%%
%In what follows, for an integer $n > 0$, we define $[n]$ as the set $\{1,\ldots,n\}$.
%An \emph{instance} is a vector $\x = \tup{x_1, \dots, x_i, \dots, x_n}$ over the values $\{0,1\}$. We use $|\x|$ to denote its length, and for $i \in [n]$, we use $\x[i]$ to denote the value $x_i$. %, and $x_i$ denotes the $i$-th element of $\x$, for $1 \leq i \leq |\x|$.
%For an integer $n > 0$, we focus on binary classifiers with $n$ Boolean input features, which are abstractly modeled as functions $\M\colon\{0,1\}^n\rightarrow\{0,1\}$ that take as input an instance $\x$ from $\{0,1\}^n$ and output a binary classification for $\x$. We call such functions $n$-feature binary classifiers (or simply classifiers).
%%, and use $\binclas{n}$ do denote the class of all $n$-feature binary classifiers.
%%
%\red{*** END REMOVE ***}

%In this paper, we will investigate explanations of classifiers.
%Intuitively, a classifier receives in input an instance $\x$, which roughly is a tuple of values, one for each feature of the domain at hand, and returns its classification for $\x$.

\paragraph{Classification.}
Throughout the paper, $n$ denotes the number of features of the instance domain, hence $n$ will be assumed to be a (strictly) positive integer.
An \emph{$n$\nbdash-instance} is an $n$\nbdash-dimensional binary vector $\x = \tup{x_1, \dots, x_n} \in \Set{0,1}^n$. 
We denote by $\ElementOfInstance{x}{i}$ the value $x_i$ from $\x$, for $1 \leq i \leq n$.
%We denote by $\SetSize{\x}$ the length of $\x$, and $\ElementOfInstance{x}{i}$ denotes the value $x_i$, for $1 \leq i \leq n$.

An \emph{$n$\nbdash-feature (binary) classifier} $\M$ can abstractly be modeled by a function $\M\colon\{0,1\}^n\rightarrow\{0,1\}$ mapping $n$\nbdash-instances $\x$ to the binary class $\M(\x)$. 
%\from{c}{c}{$\class=\M(\x)$ ??}
%\red{The output $\class$ of $\M$ on $\x$ is the \emph{classification} of $\x$ by $\M$.}
Restricting to binary classifiers makes our framework cleaner, while still covering several relevant practical scenarios.

We study three common families of classifiers, \rev{which are frequently mentioned in the literature as being at
the extremities of the interpretability spectrum, and form the basis of more advanced classifiers. Indeed, the family of classifiers we study has been considered in other foundational works---e.g., see~\cite{BassanAK24}, \cite{Arenas21}, and \cite{BarceloM0S20}.} %, on which one can build a classifier. 

\smallskip

\noindent
\textbf{Binary decision diagram (BDD).}
Intuitively, an $n$\nbdash-feature binary decision diagram is a graph-based classifier, where the class assigned to an $n$\nbdash-instance $\x$ is given by the last node reached via the graph path associated with $\x$'s feature values. %; the path's last node states the class for $\x$.

More formally, a \emph{(free) $n$\nbdash-feature binary decision diagram}, or $n$\nbdash-BDD for short, is defined by a rooted directed acyclic graph (DAG) $\BDD = (V,E,\lambda,\eta)$, where $\lambda$ and $\eta$ are node- and edge-labeling functions, respectively.
We recall that in a rooted DAG, there is a \emph{single} node without incoming edges, which is the root, and the nodes without outgoing edges are called sinks.
The $n$\nbdash-BDD $\BDD$ is such that:

\begin{itemize}[nosep]
	\item each sink of $\BDD$ is labeled with either 1 or 0;
	\item each internal node (i.e., a node that is not a sink) is labeled with an element from $\Set{1,\dots,n}$;
	\item each internal node has two outgoing edges, one labeled with $1$ and the other labeled with $0$;
	\item no two nodes on a path of $\BDD$ originating from the root have the same label.
\end{itemize}

%An $n$\nbdash-BDD 
Then, $\BDD$ classifies an $n$\nbdash-instance $\x$ as $\class$, denoted by $\BDD(\x) = \class$, iff there is a path $\pi = u_1,\ldots,u_m$ from the root of $\BDD$ to a sink of $\BDD$ such that $u_m$ is labeled with $\class$, and, for each $i$ with $1 \leq i \leq m-1$, if $u_i$ is labeled with $j$, then the edge $(u_i,u_{i+1})$ of $\BDD$ is labeled with $\ElementOfInstance{x}{j}$.%
\footnote{We note that, by definition, if $\BDD(\x) = \class$ there always exists exactly one path witnessing this.}
%
%An $n$\nbdash-BDD is \emph{free} if for every path from the root to a leaf, no two nodes on the path have the same label. 
%A \emph{decision tree} is simply a free BDD whose underlying graph is a tree.
%An \emph{($n$-feature) BDD classifier} is a classifier $M \in \binclas{n}$ for which there exists an $n$\nbdash-BDD $G$ such that for every  $\class \in \{0,1\}$, and every instance $\x \in \{0,1\}^n$, $\M(\x) = \class$ iff $G$ classifies $\x$ as $\class$. 
We use $\BDDCL$ to denote the family of all $n$\nbdash-BDD classifiers, for all $n > 0$.

%An $n$\nbdash-BDD is \emph{free} if for every path from the root to a leaf, no two nodes on the path have the same label. 
%A \emph{decision tree} is simply a free BDD whose underlying graph is a tree.
% \from{c}{all}{trovo un po' pesante (fare la precisazione riportata ne) il paragrafo che segue.}
% \from{m}{c}{Dovrei aver semplificato la definzione dei classificatori}
%An \emph{($n$-feature) BDD classifier} is a classifier $\M \in \binclas{n}$ for which there exists an $n$\nbdash-BDD $\BDD$ such that for every  $\class \in \{0,1\}$, and every instance $\x \in \{0,1\}^n$, $\M(\x) = \class$ iff $\BDD$ classifies $\x$ as $\class$. 
%We use $\BDDCL$ to denote the family of all $n$-feature BDD classifiers, for all $n > 0$.

\smallskip

\noindent
\textbf{Perceptron}.
Intuitively, an $n$\nbdash-feature perceptron, a.k.a.\ support vector machine, is a binary classifier that uses an $n$\nbdash-dimensional hyperplane to cut the instance domain into two halfspaces associated with the two classes:
an $n$\nbdash-instance $\x$ is classified as `$1$' iff $\x$ is located above or on the hyperplane.

More formally, an \emph{$n$\nbdash-feature perceptron} $\PER$, or $n$-perceptron for short, is defined by a pair $\PER = \tup{\w, b}$, where $\w = \tup{w_1,\dots,w_n} \in \mathbb{Q}^n$ and $b \in \mathbb{Q}$ are the perceptron's weights and bias, respectively.
Then, $\PER$ classifies an $n$\nbdash-instance $\x$ as `$1$', denoted by $\PER(\x) = 1$, iff $\x\cdot\w + b \geq 0$;
otherwise $\PER$ classifies $\x$ by `$0$', which is denoted as $\PER(\x) = 0$.
Equivalently, $\PER$ can be seen as an object which receives as input the values $\tup{x_1,\dots,x_n}$, weighed via the weights $\tup{w_1,\dots,w_n}$,
and outputs the value $\PER(\x) = \mi{step}(\x\cdot\w + b)$, where $\mi{step}(\cdot)$ is the Heaviside step function, which is defined as $\mi{step}(x) \coloneq 0$ if $x < 0$, and $\mi{step}(x) \coloneq 1$ if $x \geq 0$.
We denote by $\PERCL$ the family of all $n$\nbdash-perceptrons, for all $n > 0$.

\smallskip

\noindent
\textbf{Multilayer perceptron (MLP).}
Intuitively, an $n$-feature multilayer perceptron is a layered network of (artificial) neurons, which are generalized perceptrons whose ``activation'' function may be different from Heaviside's (see above).
In such a network, all the first layer neurons receive as input the $n$\nbdash-instance values $\tup{x_1,\dots,x_n}$, and all the $i$\nbdash-th layer neurons receive as input the outputs of all the $(i{-}1)$\nbdash-th layer neurons.
The outputs of the network coincide with the outputs of the neurons at the last layer.

More formally, an \emph{$n$-feature multilayer perceptron} $\MLP$, or $n$\nbdash-MLP for short, is defined by a tuple $\MLP = \tup{\W^1, \dots ,\W^k, \linebreak[0] \bb^1, \dots, \bb^k, \linebreak[0] f^1, \dots, f^k}$, where $k > 0$ is the number of layers, such that, for each layer $i$ with $1 \le i \le k$ (below, $d_i$ is the number of neurons on the $i$\nbdash-th layer, and $d_0 = n$ is the size of the input of $\MLP$):
\begin{itemize}[nosep]
    \item $\W^i \in \mathbb{Q}^{d_{i-1} \times d_i}$ is the $i$\nbdash-th weight matrix of $\MLP$, collecting in its columns the weights of all the $i$\nbdash-th layer neurons;
    \item $\bb^i \in \mathbb{Q}^{d_i}$ is the $i$\nbdash-th bias vector of $\MLP$, collecting the biases of all the $i$\nbdash-th layer neurons; and
    \item $f^i \colon \mathbb{Q}^{d_i} \rightarrow \mathbb{Q}^{d_i}$ is the $i$\nbdash-th ($d_i$\nbdash-dimensional) activation function of $\MLP$, collecting the activation functions of all the $i$\nbdash-th layer neurons.
\end{itemize}
Since we deal with \emph{binary} classifiers, here $d_k = 1$.
%Furthermore, we say that an MLP as defined above has $(k-1)$ hidden layers.
We assume 
%\Wlog
that the activation function of the non-output neurons (i.e., the function $f^i$, with $1 \le i \le k-1$) is the ReLU function $\mi{relu}(x) \coloneq \mi{max}(0, x)$, whereas the activation function $f^k$ of the single output neuron is the Heaviside step function (see, e.g., \citealp{Barcelo20,Silva22}).

Given an $n$\nbdash-instance $\x$, we inductively define 
$$
\h^i \coloneq f^i(\h^{i-1}\W^i + \bb^i), \quad \text{ for each $i$, with } 1 \leq i \leq k,
$$
where $\h^0 \coloneq \x$.
%An $n$-MLP 
Then, $\MLP$ classifies an $n$\nbdash-instance $\x$ as $\class$, denoted by $\MLP(\x) = c$, iff $\h^k = c$.
%An \emph{$n$-feature MLP classifier} is a classifier $\M \in \binclas{n}$ for which ,there exists an $n$-feature MLP $\MLP$ such that, for every value $c \in \{0,1\}$, and every instance $\x \in \{0,1\}^n$, $\M(\x) = c$ iff $\MLP$ classifies $\x$ as $\class$.
We use $\MLPCL$ to denote the family of all $n$-MLPs, for all $n > 0$.

%The output of $\M$ on $\x$ is defined as $\M(\x) \coloneq \h^k$.
%We assume all weights and biases to be rational numbers. 
%That is, we assume that there exists a sequence
%of positive integers $d_0, d_1, \dots, d_k$ such that $\W^i \in \mathbb{Q}^{d_{i-1}\times d_i}$  and $\bb^i \in \mathbb{Q}^{d_i}$. 

%The size of an MLP $\M$, denoted by $|\M|$, is the total size of its weights and biases, in which the size of a rational number $p/q$ is $\log_2(p) + \log_2(q)$ (with the convention that $\log_2(0) = 1$).

\smallbreak

We point out that, in this paper, we do not deal with the task of training classifiers.
Instead, we are interested in explaining the behavior of (already learned) classifiers---explanations in this setting are often called ``post-hoc'' explanations.
For this reason, the classifiers will be assumed to be given as input with all the (already trained) parameters characterizing them.

\paragraph{Computational Complexity.}
We briefly recall the complexity classes that we encounter.
%The complexity class $\textsc{ac}^0$ is the class of all decision problems that can be solved by uniform families of Boolean circuits of polynomial size and constant depth.
\LOGSPACE is the class of all decision problems that can be decided in logarithmic space by a deterministic Turing machine (the space constraint is over the work tape).
\NP and \NL are the classes of all decision problems that can be decided in polynomial time and logarithmic space, respectively, by a nondeterministic Turing machine.
\co\NP is the complement class of \NP, where `yes' and `no' answers are interchanged.
We recall that \LOGSPACE and \NL are closed under complement.
%${\textsc{p}^{\textsc{nexp}}}$ is the class of all problems that are decidable in deterministic polynomial time  using  a \textsc{nexp} oracle.
%The class $\Sigma_{2}^{\textsc{p}}$ is the class of all  problems that can be decided in nondeterministic polynomial time using an \textsc{np} oracle, and~$\Pi_{2}^{\textsc{p}}$ is the complement of $\Sigma_{2}^{\textsc{p}}$.
The class $\DP{} = \NP \wedge \co\NP$ is the class of all decision problems that are the conjunction  of a problem in \NP and a problem in \co\NP.
The inclusion relationships (which are all currently believed to be strict) between the above complexity classes are:
$\LOGSPACE\subseteq\NL\subseteq\NP,\co\NP\subseteq\DP$.

We refer the reader to any textbook on the topic, such as \cite{Arora09}, for a broader and more detailed introduction to computational complexity theory.

\section{Global Necessary Reasons}\label{sec:gloabl-necessary-reason}

In this section, we consider the notion of a \emph{global necessary reason} as a way to explain classifiers' behavior.
Intuitively, for a classifier and a class of interest, the idea is to provide a condition that must be necessarily satisfied (by any instance) for the classifier to assign the class.
Additionally, in order to identify the most informative global necessary reasons, we will be interested in ``minimal'' ones, as defined later on.
This section also introduces the (decision) problems whose complexity will be analyzed in the rest of the paper.

To express global necessary reasons, we use the logical language defined below.
Let $\lang{n}$ denote the set of all expressions, called \emph{\conditions}, of the form $\bigwedge_{i=1}^{m} \ell_i$, where $m \geq 0$, each $\ell_i$ is a \emph{literal} of the form $t \genericrelation t'$, with ${\genericrelation} {} \in \Set{=,\neq}$, and $t,t'$ are \emph{terms} from the set $\Set{0,1}\cup\Set{\var_i \mid 1 \leq i \leq n}$, where each $\var_i$ is a Boolean variable, i.e., over the values $\Set{0,1}$, associated to the $i$-th feature.
%
%
%A \emph{\condition} $\phi$ is a (possibly empty) expression of the form $\bigwedge_{i=1}^k \ell_i$, where $k \geq 0$ and each $\ell_i$ is called a \emph{literal} and is of the form $(t \theta t')$, with $\theta \in \{=,\neq\}$ and $t,t'$ terms. 
%We neglect, w.l.o.g., literals of the form $\var \neq 1$ and $\var \neq 0$ (as they can be replaced by $\var = 0$ and $\var = 1$, respectively) and of the form $t \theta t'$ with $t,t' \in \{0,1\}$ as \dots 
%
Notice that a \condition can be empty (i.e., $m=0$);
we use $\top$ to denote such a \condition.
%\from{c}{c}{aggiungere frasetta e un bel po' di riferimenti dove si usano congiunzioni di letterali per esprimere explanations}

\rev{
Thus, conditions are conjunctions of (in)equalities. 
As customary when designing a formal language, we strove for a balance between expressiveness and complexity, and our choice is based on the following considerations.

Conjunctions are widely deemed easy to interpret---indeed, most related work employs (even simpler forms of) conjunctions to express explanations (cf.~\cref{sec:related}). However,  slight extensions to the language can make it much less interpretable.
Arguably, the most natural extension we might have considered is allowing disjunctions.
This extension would lead to a more expressive language, which even allows to precisely characterize a classifier's behavior. That is, for an $n$-feature classifier $\M$ and a class $c \in \{0,1\}$ of interest, assuming $\x_1,\ldots,\x_m$ are all the $n$-instances classified with $c$ by $\M$, one can always devise the Boolean expression of the form $\phi_1 \vee \cdots \vee \phi_m$, where each $\phi_i$ is a condition encoding $\x_i$. Clearly, an $n$-instance $\x$ is classified with $c$ by $\M$ iff $\x$ satisfies the above expression. It is clear that this high level of expressiveness comes at the cost of providing complex expressions (e.g., unavoidably exponentially large) which would be hard to understand for a user, and at the same time it poses computational challenges.

Hence, we designed our language so that it goes beyond simple conjunctions of feature-value pairs adopted by \citet{IgnatievNM19}\footnote{To the best of our knowledge, this is the only work considering the same notion of explanation we consider.}, but at the same time supports a richer set of logical constructs that help  better explain classifiers' behaviour without hindering interpretability and computational complexity. 
In fact, our language can express \emph{relationships} among features by means of (in)equalities between feature variables, which is usually deemed as an important feature for explaining classifiers.
% while the explanations of \cite{IgnatievNM19} are restricted to equalities between a variable and a value.
From the technical point of view, (in)equalities between variables allow for a controlled form of disjunction, such as specifying alternatives between values assigned to features%, which cannot be done with the language of \cite{IgnatievNM19}
---e.g., $\var_1=\var_2$ expresses the two alternatives $\var_1=\var_2=0$ and $\var_1=\var_2=1$. 
From a computational point of view, as already mentioned in the introduction, the complexity results we derive justify the choice of the language also in terms of practical applicability.
}

We now proceed by introducing some notions related to our language, and then define global necessary reasons.
Given an $n$\nbdash-instance $\x$ and a \condition $\phi \in \lang{n}$, let $\phi[\x]$ denote the \condition obtained from $\phi$ by replacing every $\var_i$ in $\phi$ with $\ElementOfInstance{x}{i}$.
We say that $\x$ \emph{satisfies} $\phi$, denoted by $\x \models \phi$, iff $\phi[\x]$ is true under the usual semantics of comparison operators and Boolean logical connectives---in such a case, we also say that $\x$ is a \emph{model} of $\phi$.
When $\phi = \top$, every $n$\nbdash-instance trivially satisfies $\phi$.
For a \condition $\phi \in \lang{n}$, we define $\ModelsOf{\phi} = \{\x \in \{0,1\}^n \mid \x \models \phi\}$, i.e., the set of all $n$\nbdash-instances satisfying $\phi$.
Moreover, for two \conditions $\phi,\psi \in \lang{n}$, we write $\phi \models \psi$ iff $\ModelsOf{\phi} \subseteq \ModelsOf{\psi}$.

% In the following definitions, $\M$ is an $n$-feature classifier, $\phi$ a condition from $\lang{n}$, and $\class \in \{0,1\}$ is a class.

% \begin{definition}[Necessary reason]
%     We say that $\phi$ is a \emph{necessary reason} for $\class$ \wrt $\M$ iff
%     $$
%     \forall \x \in \{0,1\}^n,\ \left(\M(\x)=\class\right) \rightarrow \left(\x \models \phi\right).
%     $$
% \end{definition}

% \begin{definition}[Sufficient reason]
%     We say that $\phi$ is a \emph{sufficient reason} for $\class$ \wrt $\M$ iff
%     $$
%     \forall \x \in \{0,1\}^n,\ \left(\x \models \phi\right) \rightarrow \left(\M(\x)=\class\right).
%     $$
% \end{definition}

% \from{enrico}{all}{Io userei questa versione congiunta delle definizioni: risparmia qualche linea e permette un confronto visuale diretto della differenza. Incorporiamo le due linee prima della definizione nella definiione, così la definiione è auto-contenuta (non si deve leggere sopra)}

As already mentioned, the idea of a global necessary reason, for a class $\class \in  \{0,1\}$ \wrt a classifier $\M$ at hand, is to provide a condition that is satisfied by every instance that is classified as $\class$ by $\M$.

\begin{definition}[Global necessary reasons]
    Let $\M$ be an $n$\nbdash-feature  classifier, and let $\class\in \{0,1\}$ be a class.
    A condition $\phi \in \lang{n}$ is a \emph{global necessary reason} for $\class$ \wrt $\M$ iff
    \[
        \forall \x \in \{0,1\}^n,\ \left(\M(\x)=\class\right) \rightarrow \left(\x \models \phi\right).
    \]
\end{definition}

%The notion of a global necessary reason has been already considered by \citet{IgnatievNM19}, where, however, conditions are expressed as disjunctions of literals.
%Furthermore, the latter work lacks a complexity analysis, and considers only minimality \wrt set-inclusion, while we additionally consider the cardinality criterion, as discussed in the following.
%\from{c}{all}{While both conjunctions and disjunctions have been used to express explanations of different kinds,
%we chose to use conjunctions to express conditions as they provide a precise characterization of \emph{every} instance satisfying the condition in the following sense.
%Given the set of all instances satisfying a disjunction of literals, each literal might be satisfied by some instances and not by others.
%Given the set of all instances satisfying a conjunction of literals, every instance in such a set satisfies every literal in the conjunction, so we know that every literal describes a property enjoyed by every instance 
%}

For an $n$\nbdash-feature binary classifier $\M$ and a class $\class \in \Set{0,1}$, with an abuse of notation, we let $\ModelsOf{\M,\class}$ be the set of all $n$\nbdash-instances $\x$ such that $\M(\x) = \class$.
Clearly, a condition $\phi$ is a global necessary reason for $\class$ \wrt $\M$ iff $\ModelsOf{\M,\class} \subseteq \ModelsOf{\phi}$.

Different global necessary reasons might convey different amounts of information.
As an example, $\top$ is always a global necessary reason, but it does not provide useful information.
In this regard, we point out that every global necessary reason $\phi$ for a class $\class$ \wrt a classifier $\M$ ``over-approximates'' the assignment of $\class$ by $\M$ in that $\ModelsOf{\M,\class} \subseteq \ModelsOf{\phi}$.
Thus, a criterion to identify the most informative
global necessary reasons should select the ones for which such over-approximation is as small as possible. %, that is, be as close as possible to the model's behavior.
Such most informative global necessary reasons are the ``minimal'' $\phi$ such that $\ModelsOf{\M,\class} \subseteq \ModelsOf{\phi}$.
Formally, minimality is defined \wrt an arbitrary preorder $\preordereq$, i.e., a reflexive and transitive binary relation, over $\lang{n}$;
$\phi\preorder \phi'$ denotes that $\phi\preordereq \phi'$ and $\phi' \not\preordereq \phi$.
Then, minimal global necessary reasons are naturally defined as follows.

\begin{definition}[Minimal global necessary reason]
    Let $\M$ be an $n$\nbdash-feature classifier, and let $\class \in \{0,1\}$ be a class.
    A global necessary reason $\phi \in \lang{n}$ for $\class$ \wrt $\M$ is \emph{$\preordereq$-minimal} iff there is no global necessary reason $\phi' \in \lang{n}$ for $\class$ \wrt $\M$ such that $\phi' \preorder \phi$.
\end{definition}

%We are interested in minimal necessary reasons for a class $\class$ \wrt a classifier $\M$, because we want to individuate a ``smallest'' condition $\phi$ in $\lang{n}$ over-approximating $\M(\x) = \class$, i.e., the ``smallest'' $\phi$ such that $\ModelsOf{\M,\class} \subseteq \ModelsOf{\phi}$.
We consider two concrete common preorders (see, e.g., \citealp{CooperS23}), which we use to compare conditions \Wrt their models:
\begin{itemize}
    \item Model cardinality ($\leq$): Given two \conditions $\phi$ and $\phi'$, we write $\phi \leq \phi'$ iff $\SetSize{\ModelsOf{\phi}} \leq \SetSize{\ModelsOf{\phi'}}$.
    \item Model subset ($\subseteq$): Given two \conditions $\phi$ and $\phi'$, we write $\phi \subseteq \phi'$ iff $\ModelsOf{\phi} \subseteq \ModelsOf{\phi'}$ (or, equivalently, $\phi \models \phi'$).
\end{itemize}

As a simple example, for the two conditions $\phi:= (\var_1=1 \wedge \var_2 = \var_3)$ and $\phi':= (\var_1=1)$, we have that both $\phi \leq \phi'$ and $\phi \subseteq \phi'$ hold. 
Obviously, $\phi$ makes a more specific statement than $\phi'$ by additionally requiring $\var_2$ and $\var_3$ to assume the same value, and we consider such statements more informative (as long as they are global necessary reasons).

As customary in complexity analysis, we focus on decision problems\rev{---then, in \cref{sec:computing-complexity}, we will show how our results provide insights into the complexity of the problem of \emph{computing} minimal global necessary reasons.}
In particular, 
for a family $\CLS \in \{\BDDCL,\MLPCL,\PERCL\}$ of classifiers, and a preorder ${\preordereq} {} \in \{\le, \subseteq\}$, we study the following problem:

% \begin{center}
% \fbox{
% \begin{tabular}{rl}
% {\small Problem} : & $\IsMaxSuf[\CLS,\preordereq]$ \\
% {\small Input} : & An $n$-feature binary classifier $\M \in \CLS$, \\
% & a condition $\phi \in \lang{n}$, and\\
% & a class $\class \in \{0,1\}$.
%  \\
% {\small Question} : &  is $\phi$ a $\preordereq$-maximal sufficient reason \\
% & for $\class$ \wrt $\M$?
% \end{tabular}
% }
% \end{center}

\begin{center}
\fbox{
\begin{tabular}{rl}
{\small Problem} : & $\IsMinNec[\CLS,\preordereq]$ \\
{\small Input} : & An $n$-feature binary classifier $\M \in \CLS$, \\
& a class $\class \in \{0,1\}$, and\\
& a condition $\phi \in \lang{n}$.
 \\
{\small Question} : &  Is $\phi$ a $\preordereq$-minimal global necessary  \\
& reason for $\class$ \wrt $\M$?
\end{tabular}
}
\end{center}

%\Problem%
%{$\IsMinNec[\preceq]$}%
%{A model $\M$, a condition $\phi$, and a class $\class \in \{0, 1\}$}%
%{\yes if $\phi$ is a $\preceq$-minimal necessary reason for $\class$ \wrt $\M$, \no otherwise}
%
%

%Testo a casao per cercare di vedere quanto lunga sia la colonna e vedere se la tablella vermanete sbordi stiamo ecc

As we will see in the following, to study the complexity of the problem above, we will also need to focus on the complexity of deciding whether a condition is a global necessary reason (i.e., not necessarily minimal) for a class $\class$ \wrt a classifier $\M$.
We hence define the following problem (notice how this problem is not parametric in the preorder):

\begin{center}
	\fbox{
		\begin{tabular}{rl}
			{\small Problem} : & $\IsNec[\CLS]$ \\
			{\small Input} : & An $n$-feature binary classifier $\M \in \CLS$, \\
            & a class $\class \in \{0,1\}$, and\\
            & a condition $\phi \in \lang{n}$.
			\\
			{\small Question} : &  Is $\phi$ a global necessary reason for \\
            & $\class$ \wrt $\M$?
		\end{tabular}
	}
\end{center}

\smallskip
\noindent
\textbf{Remark:} In the rest of the paper, to streamline the presentation, we implicitly assume, unless specified otherwise, that classifiers are binary and over $n > 0$ features, instances and classes are from $\Set{0,1}^n$ and $\Set{0,1}$, respectively, and conditions and literals are from $\lang{n}$. Finally, if $\phi$ is a \glob necessary reason for a class $\class$ \Wrt to a classifier $\M$, and $\M$ and $\class$ are clear from the context, we may refer to $\phi$  simply as a \glob necessary reason, without mentioning $\M$ and $\class$.
%\red{Finally, since in this paper we deal with global explanations only, from now on we will call (minimal) global necessary reasons simply (minimal) necessary reasons.***DA TOGLIERE**}

%\input{discussion_NEW}
\section{Complexity Analysis}\label{sec:necessary}

In this section, we study the complexity of $\IsNec[\CLS]$ and $\IsMinNec[\CLS,\preordereq]$, for each family of classifiers $\CLS \in \Set{\BDDCL,\MLPCL,\PERCL}$, and preorder ${\preordereq} {} \in \Set{\leq,\subseteq}$.
A summary of the complexity results obtained in this paper for these two problems is reported in \cref{tab_SummaryComplexity};
\emph{all the missing proof details are in the appendix in the supplementary material.}

%The main results of this section are summarized in \cref{tab_complexityIsMinNecessary}.

% Given the symmetric nature of the task of deciding whether a condition $\phi$ is a minimal \glob necessary reason for the class `$1$' or `$0$', to streamline the presentation below we assume that we always want to explain the class `$1$';
% the discussion can easily be adapted to consider explanations for the class `$0$'.
% So, e.g., when we will say that $\phi$ is a \glob necessary reason , we will mean that $\phi$ is a \glob necessary reason for $\class = 1$.
% Moreover, if it will be clear from the text which classifier $\M$ a condition $\phi$ is a \glob necessary reason of, we will avoid to state ``\wrt $\M$''.

%The results are summarized in \cref{tab_SummaryComplexity}, which  shows that the complexity ranges from membership in \LOGSPACE to \DP-hardness.
%For both problems, perceptrons exhibits the lowest complexity, while MLPs have the highest one, with BDDs in the middle.
%As for \IsMinNec, the complexity does not change across the two minimality criteria, for every classifier family.
%Also, for perceptrons and BDDs, the complexity does not increase when we move from \IsNec to \IsMinNec, while introducing the minimality requirement increases the complexity from \co\NP-completeness to \DP-completeness for MLPs.

\begin{table}[t]
\centering
  \begin{tabular}{c c c c c c c}
    \toprule
    \multicolumn{3}{c}{\IsNec} &  \multicolumn{3}{c}{\IsMinNec} & \\
    \cmidrule(lr){1-3}
    \cmidrule(lr){4-6}
    $\PERCL$ & $\BDDCL$ & $\MLPCL$ & $\PERCL$ & $\BDDCL$ & $\MLPCL$ & \\
   \midrule
    \multirow{2}{*}{in \LOGSPACE} & \multirow{2}{*}{\NL} & \multirow{2}{*}{\co\NP} & in \LOGSPACE & \NL & \DP & ${\leq}$ \\
    & & & in \LOGSPACE & \NL & \DP & ${\subseteq}$ \\
    \bottomrule
  \end{tabular}
  \caption{Summary of the complexity of $\IsNec[\CLS]$ and of $\IsMinNec[\CLS,\preordereq]$, for each family of classifiers ${\CLS} \in \Set{\PERCL,\BDDCL,\MLPCL}$, and preorder ${\preordereq} {} \in \Set{\leq,\subseteq}$.
  All non-``in'' entries are completeness results.}
  \label{tab_SummaryComplexity}
\end{table}

% \red{%
% \begin{theorem}\label{thm:min-necessary-complexity}
% 	For a preorder ${\preordereq} {} \in \Set{\leq,\subseteq}$, the following hold:
% 	\begin{enumerate}
% 		\item $\IsMinNec[\PERCL,\preordereq]$ is in $\LOGSPACE$.
% 		\item $\IsMinNec[\BDDCL,\preordereq]$ is $\NL\complete$.
% 		\item $\IsMinNec[\MLPCL,\preordereq]$ is $\DP\complete$.
% 	\end{enumerate}
% \end{theorem}
% }

We start by carrying out some observations regarding the problem $\IsMinNec[\CLS,\preordereq]$. Given a classifier $\M$, a class $\class$, and a condition $\phi$, an obvious procedure deciding whether $\phi$ is a $\preordereq$\nbdash-minimal \glob necessary reason is to first check that $\phi$ is a \glob \emph{necessary} reason, and then verify that $\phi$ is $\preordereq$\nbdash-\emph{minimal}.
The latter can naively be checked by iterating over every possible \condition $\psi$ and checking, whenever $\psi \preorder \phi$, that $\psi$ is \emph{not} a \glob necessary reason.

The main issue with the above procedure is that in order to verify that $\phi$ is $\preordereq$\nbdash-minimal, the procedure iterates over \emph{all} the possible \conditions $\psi$, which are exponentially many in $n$.
In addition, for each such \condition $\psi$, the procedure must check whether $\psi \preorder \phi$, which in turn may require to iterate over the (possibly) exponentially many instances $\x$ such that $\x \models \psi$.
We are however able to provide the next characterization, enabling us to greatly simplify the process of checking whether a \glob necessary reason $\phi$ is $\preordereq$\nbdash-minimal;
this also enables us to pinpoint the exact complexity of $\IsMinNec[\CLS,{\preordereq}]$ by  deriving tight upper bounds.

\begin{restatable}{lemma}{TheoShortcutCardinalityMinimality}
\label{lem:characterization}
    Let $\M$ be a classifier, let $\class$ be a class, and let $\phi$ be a \glob necessary reason.
    Then, for each preorder ${\preordereq} {} \in \Set{\leq,\subseteq}$, $\phi$ is \emph{not} $\preordereq$\nbdash-minimal iff there exists a literal $\ell$ such that $\phi \not \models \ell$ and $\ell$ is a \glob necessary reason.
\end{restatable}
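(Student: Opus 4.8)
The plan is to reduce everything to a single structural fact: since a \condition is a conjunction of literals and $\ModelsOf{\bigwedge_j \ell_j} = \bigcap_j \ModelsOf{\ell_j}$, a condition $\psi$ is a global necessary reason iff \emph{each} of its literals is a global necessary reason. Indeed, $\ModelsOf{\M,\class} \subseteq \bigcap_j \ModelsOf{\ell_j}$ holds iff $\ModelsOf{\M,\class} \subseteq \ModelsOf{\ell_j}$ for every $j$. I would prove this observation first, as it is the workhorse for both directions: it lets me freely take conjunctions of necessary reasons and stay within the class of necessary reasons.

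For the direction $(\Leftarrow)$, suppose there is a literal $\ell$ that is a global necessary reason with $\phi \not\models \ell$. I would consider $\phi \wedge \ell \in \lang{n}$. By the observation it is again a global necessary reason, since all its literals---those of $\phi$ (which are necessary reasons because $\phi$ is), together with $\ell$---are necessary reasons. Since $\phi \not\models \ell$ means $\ModelsOf{\phi} \not\subseteq \ModelsOf{\ell}$, some model of $\phi$ violates $\ell$, so $\ModelsOf{\phi \wedge \ell} = \ModelsOf{\phi}\cap\ModelsOf{\ell} \subsetneq \ModelsOf{\phi}$. A strict shrinking of the model set gives $\phi \wedge \ell \preorder \phi$ under both preorders (strictly smaller cardinality for $\leq$, strict subset for $\subseteq$), witnessing that $\phi$ is not $\preordereq$-minimal.

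For the direction $(\Rightarrow)$, suppose $\phi$ is not $\preordereq$-minimal, so there is a global necessary reason $\phi'$ with $\phi' \preorder \phi$. The device here is to pass to $\phi'' := \phi \wedge \phi'$, which by the observation is still a global necessary reason and satisfies $\ModelsOf{\phi''} \subseteq \ModelsOf{\phi}$. I would then argue $\ModelsOf{\phi''} \subsetneq \ModelsOf{\phi}$ in both cases: for $\subseteq$ this is immediate from $\ModelsOf{\phi'} \subsetneq \ModelsOf{\phi}$; for $\leq$ it follows because $\SetSize{\ModelsOf{\phi''}} \le \SetSize{\ModelsOf{\phi'}} < \SetSize{\ModelsOf{\phi}}$ together with the containment $\ModelsOf{\phi''}\subseteq\ModelsOf{\phi}$ forces a strict subset. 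Finally, writing $\phi'' = \bigwedge_j \ell_j$, if $\phi$ entailed every $\ell_j$ then $\ModelsOf{\phi} \subseteq \bigcap_j \ModelsOf{\ell_j} = \ModelsOf{\phi''}$, contradicting $\ModelsOf{\phi''} \subsetneq \ModelsOf{\phi}$; hence some literal $\ell := \ell_j$ has $\phi \not\models \ell$, and, being a literal of the necessary reason $\phi''$, it is itself a global necessary reason. This $\ell$ is exactly what the statement requires.

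The main obstacle I anticipate is the cardinality preorder: unlike the subset preorder, $\phi' \preorder \phi$ under $\leq$ only yields $\SetSize{\ModelsOf{\phi'}} < \SetSize{\ModelsOf{\phi}}$ and says nothing about containment of the two model sets. The conjunction trick $\phi \wedge \phi'$ is precisely what converts a cardinality improvement into a genuine subset improvement while preserving the necessary-reason property, and verifying this is the one place needing care. As an alternative unifying route, one could define $\Phi^\star$ to be the conjunction of \emph{all} literals that are necessary reasons (finitely many literals exist, so $\Phi^\star \in \lang{n}$); the observation shows $\Phi^\star$ is a necessary reason whose model set is contained in that of every necessary reason, so $\phi$ is minimal under either preorder iff $\ModelsOf{\phi} = \ModelsOf{\Phi^\star}$, which again unwinds to $\phi$ entailing every necessary-reason literal, i.e.\ to the negation of the stated condition.
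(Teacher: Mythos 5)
Your proof is correct and follows essentially the same route as the paper's: both directions hinge on the conjunction trick (forming $\phi \wedge \phi'$, resp.\ $\phi \wedge \ell$), showing the conjunction is still a global necessary reason with a strictly smaller model set, and extracting a literal not entailed by $\phi$; your explicit ``workhorse'' observation that a condition is a necessary reason iff each of its literals is one is used implicitly in the paper's argument (where $\ell$'s necessity is derived from $\ModelsOf{\Gamma} \subseteq \ModelsOf{\ell}$). The $\Phi^\star$ alternative you sketch at the end is a nice unifying reformulation, but it is not needed and does not change the substance of the argument.
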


% \red{\begin{lemma}\label{lem:characterization}
% 	Consider an $n$-feature classifier $\M$, a class $\class \in \Set{0,1}$, a \glob necessary reason $\phi \in \lang{n}$ for $\class$ \wrt $\M$, and a preorder $\preordereq \in \Set{\leq, \subseteq}$.
% The following are equivalent:
% 	\begin{enumerate}
% 		\item There exists a \condition $\psi \in \lang{n}$ such that $\psi$ is a \glob necessary reason for $\class$ \wrt $\M$ and $\psi \preorder \phi$.
% 		\item There exists a literal $\ell \in \lang{n}$ such that $\phi \not \models \ell$, and such that $\ell$ is a \glob necessary reason for $\class$ \wrt $\M$.
% 	\end{enumerate}
% \end{lemma}}

% The above lemma essentially states that to check whether a \glob necessary reason $\phi$ for $\class$ \wrt $\M$ is $\preordereq$-minimal, it suffices to find a single literal $\ell \in \lang{n}$ such that $\phi \not \models \ell$ and such that $\ell$ is a \glob necessary reason for $\class$ \wrt $\M$.
% This brings us to our key result.

% \begin{corollary}\label{cor:min-necessary-characterization}
% Consider an $n$-feature classifier $\M$, a class $\class \in \Set{0,1}$, a \condition $\phi \in \lang{n}$, and a preorder $\preordereq \in \Set{\leq, \subseteq}$.
% The following are equivalent:
% 	\begin{enumerate}
% 		\item $\phi$ is a $\preordereq$-minimal \glob necessary reason for $\class$ \wrt $\M$.
% 		\item $\phi$ is a \glob necessary reason for $\class$ \wrt $\M$ and for each literal $\ell \in \lang{n}$ such that $\phi \not \models \ell$, $\ell$ is not a \glob necessary reason for $\class$ \wrt $\M$.
% 	\end{enumerate}
% \end{corollary}

Very interestingly, \Cref{lem:characterization} implies that $\leq$- and $\subseteq$\nbdash-minimality of conditions from $\lang{n}$ are actually equivalent.

\Cref{lem:characterization} moreover suggests a simple procedure to check whether a \condition $\phi$ is a $\preordereq$\nbdash-minimal \glob necessary reason (see \cref{alg:generic-min-necessary}):
first, we check that $\phi$ is a \glob necessary reason, and then we check that $\phi$ is $\preordereq$\nbdash-minimal by verifying that there is \emph{no} literal $\ell$ for which $\phi \not\models \ell$ and such that $\ell$ is a \glob necessary reason;
by what we observed above, notice how the algorithm does \emph{not} depend on the preorder $\leq$ or $\subseteq$.

% In the following, in order to further simplify our analysis, we provide a simple characterization for problem 2), i.e., checking whether $\phi_\ell \preorder \phi$.

% In what follows, for two \conditions $\phi,\psi \in \lang{n}$, for some integer $n > 0$, we write $\phi \models \psi$ if every instance $\x \in \Set{0,1}^n$ that satisifes $\phi$ also satisfies $\psi$, i.e., $\ModelsOf{\phi} \subseteq \ModelsOf{\psi}$; we write $\phi \not\models \psi$ if $\phi \models \psi$ does not hold.
% The characterization follows:

% \begin{lemma}\label{lem:preorder-check-characterization}
% 	Consider an integer $n > 0$, a \condition $\phi \in \lang{n}$, a literal $\ell$ such that $\phi_\ell = \phi \wedge \ell \in \lang{n}$, and a preorder $\preordereq \in \Set{\leq ,\subseteq}$.
% Then, $\phi_\ell \preorder \phi$ iff $\phi \not \models \ell$.
% \end{lemma}
% \begin{proof}
% 	\from{Marco}{ALL}{Straightforward?}
% \end{proof}

% The above lemma tells us that checking whether $\phi_\ell \preorder \phi$ boils down to check whether $\phi \not\models \ell$.
% With Lemma~\ref{lem:characterization} and Lemma~\ref{lem:preorder-check-characterization}, one can check whether a \condition is a $\preordereq$-minimal \glob necessary reason via the simplified procedure shown in Algorithm~\ref{alg:generic-min-necessary}.

\begin{algorithm}[t]
\caption{A generic algorithm deciding whether a condition is a $\leq$-/$\subseteq$-minimal \glob necessary reason}
\label{alg:generic-min-necessary}

\BlankLine

\KwIn{An $n$\nbdash-feature classifier $\M$, a class $\class \in \Set{0,1}$, and a \condition $\phi \in \lang{n}$}
\KwOut{$\mathsf{accept}$, if $\phi$ is a $\preordereq$\nbdash-minimal \glob necessary reason for $\class$ \wrt $\M$; $\mathsf{reject}$, otherwise}

\BlankLine

\nonl \Proc{\MinNecessaryAlgo{$\M,\class,\phi$}}{

    \lIf{$\phi$ is not a \glob necessary reason for $\class$ \wrt $\M$}{\Return{$\mathsf{reject}$}} \label{line:necessary-check-phi}
    %\ForEach{literal $\ell \in \lang{n}$ auch that $\phi \not \models \ell$}{
    \ForEach{literal $\ell \in \lang{n}$}{
        \label{line:begin-minimal_necessary}
        \label{line:foreach-literal}
        \If{$\phi \not \models \ell$}{
            \label{line:phi-not-entails-ell}
            \lIf{$\ell$ is a \glob necessary reason for $\class$ \wrt $\M$}{\Return{$\mathsf{reject}$}}
            \label{line:necessary-check-ell}
            \label{line:end-minimal_necessary}
        }
    }
    \Return{$\mathsf{accept}$}\;
    \label{line:generic-min-necessary-accept}
}
\end{algorithm}

\Cref{alg:generic-min-necessary} provides a generic framework to analyze the complexity of $\IsMinNec[\CLS,\preordereq]$, for $\CLS \in \Set{\BDDCL,\linebreak[0]\PERCL,\linebreak[0]\MLPCL}$ and ${\preordereq} {} \in \Set{\leq,\subseteq}$.
Observe that, with $n$ features, there are only $O(n^2)$ literals to consider at \cref{line:foreach-literal}.
Thus, the algorithm's complexity is essentially related to the complexity of checking whether a given \condition or literal is or is not a \glob necessary reason (\cref{line:necessary-check-phi,line:necessary-check-ell}), and checking, for a given \condition $\phi$ and a literal $\ell$, whether $\phi \not \models \ell$ (\cref{line:phi-not-entails-ell}).

Notice that the complexity of deciding whether $\phi$ or $\ell$ are \glob necessary reasons depends on the specific family of classifiers considered, whereas the complexity of deciding $\phi \not \models \ell$ does not.
We hence focus first on the latter problem, and we will study the former in the next sections, where we will consider each family of classifiers in turn.
We now show that deciding $\phi \not \models \ell$ is an easy task, i.e., feasible in logspace.

%We show that checking whether a literal is entailed by a \condition can be carried out in logarithmic space.

\begin{restatable}{theorem}{TheoComplexityCheckingEntailment}\label{thm:entail-complexity}
    Let $\phi$ be a condition, and let $\ell$ be a literal.
    Deciding whether $\phi \models \ell$ (or $\phi \not \models \ell$) is in $\LOGSPACE$.
\end{restatable}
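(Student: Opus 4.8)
The plan is to reduce the entailment test to a constant number of undirected graph reachability queries, each of which is in $\LOGSPACE$. First I would observe that a condition $\phi$ is simply a conjunction of equality/inequality constraints between terms drawn from $\{v_1,\dots,v_n,0,1\}$, so $\ModelsOf{\phi}$ is exactly the set of satisfying assignments of a system of binary (two-term) constraints over Boolean variables. I would encode this system as an undirected \emph{constraint graph} on the vertices $\{v_1,\dots,v_n\}$ together with two special vertices $c_0,c_1$ standing for the constants $0$ and $1$: each literal $t=t'$ contributes a ``same-parity'' edge between the vertices of $t$ and $t'$, each literal $t\neq t'$ a ``different-parity'' edge, and I would always add a fixed different-parity edge between $c_0$ and $c_1$. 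An assignment satisfies $\phi$ iff it is a consistent $2$-colouring of this graph that colours $c_0$ with $0$ and $c_1$ with $1$.

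To convert parity bookkeeping into ordinary reachability I would use the standard doubling trick: build a graph $H$ with two copies $u^0,u^1$ of every vertex $u$, turning a same-parity edge between $u$ and $w$ into the pair $\{u^0,w^0\},\{u^1,w^1\}$ and a different-parity edge into $\{u^0,w^1\},\{u^1,w^0\}$. Then, in every model, the values of $u$ and $w$ are forced equal iff $u^0$ and $w^0$ are connected in $H$, forced unequal iff $u^0$ and $w^1$ are connected, and $\phi$ is unsatisfiable iff $u^0$ and $u^1$ are connected for some base vertex $u$. The completeness of this characterisation, namely that whenever two terms are \emph{not} connected with the relevant parity there really is a model realising the opposite relation, follows from the usual flipping argument: flipping all variables of a connected component that contains no constant preserves every constraint (both endpoints of each same/different edge flip together, and there are no edges across components), and the constants pin down only the single component containing $c_0,c_1$.

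With this encoding the procedure is immediate. I would first test satisfiability by checking whether $u^0$ reaches $u^1$ for some base vertex $u$; if $\phi$ is unsatisfiable then $\ModelsOf{\phi}=\emptyset$ and $\phi\models\ell$ holds vacuously. Otherwise I would read the shape of $\ell$: if $\ell$ is $t=t'$, answer ``entailed'' iff $t^0$ reaches $t'^0$ in $H$; if $\ell$ is $t\neq t'$, answer ``entailed'' iff $t^0$ reaches $t'^1$. All degenerate cases (literals with constant terms, or of the form $t=t$ and $t\neq t$) are subsumed by these queries thanks to the fixed $c_0\neq c_1$ edge. The graph $H$ has $O(n)$ vertices and can be accessed on the fly by scanning $\phi$, so every query is an instance of undirected $s$--$t$ connectivity, which lies in $\LOGSPACE$ by Reingold's theorem ($\mathrm{USTCON}\in\LOGSPACE$); composing a constant number of these queries (plus the $O(n)$ sequential satisfiability checks, reusing the work tape) stays in $\LOGSPACE$. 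Since $\LOGSPACE$ is closed under complement, deciding $\phi\not\models\ell$ is in $\LOGSPACE$ as well.

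The only genuinely delicate point is landing in $\LOGSPACE$ rather than merely $\NL$: because the binary constraints are \emph{symmetric}, the constraint graph is undirected and I may invoke undirected reachability; a directed reachability problem would yield only an $\NL$ bound. Hence the crux is the reduction to \emph{undirected} reachability via parity-doubling, together with the flipping argument certifying its completeness, paying particular attention to the constant vertices $c_0,c_1$, whose values are fixed and whose component therefore cannot be flipped freely.
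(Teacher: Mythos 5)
Your proposal is correct, and it hits exactly the same crux as the paper's proof: because all the constraints are symmetric, the relevant graph is \emph{undirected}, so Reingold's theorem ($\mathrm{USTCON}\in\LOGSPACE$) applies, whereas a directed formulation would only yield an \NL bound. The packaging, however, is genuinely different. The paper folds the negated literal into the condition, encodes $\phi\wedge\neg\ell$ as a single 2CNF formula $\psi$ of a special symmetric shape, and reduces entailment to \emph{unsatisfiability} of $\psi$, inheriting correctness from the standard 2-SAT implication-graph machinery of Aspvall, Plass, and Tarjan and then observing that symmetry makes that implication graph undirected. You instead build the parity-doubled constraint graph of $\phi$ alone and answer entailment directly by a constant number of connectivity queries between copies of the terms of $\ell$ (after an $O(n)$-query satisfiability pre-check), proving completeness of the characterization yourself via the component-flipping argument. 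These are two views of the same object---your doubled graph is precisely the implication graph of the paper's symmetric 2CNF under the renaming $u^1\leftrightarrow x_u$, $u^0\leftrightarrow\neg x_u$---but your route is more self-contained (no appeal to 2-SAT theory, an explicit model-construction argument) and treats constants more uniformly: your vertices $c_0,c_1$ joined by a fixed disequality edge keep every constraint binary and symmetric, whereas the paper's encoding turns literals like $v_i=1$ into \emph{unit} clauses, for which the claimed edge-symmetry of the implication graph requires an extra word of justification. What the paper's route buys in exchange is brevity: a single unsatisfiability test replaces your case analysis on the shape of $\ell$, and the correctness of the graph-theoretic criterion comes for free from known results rather than needing the flipping lemma.
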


% \begin{theorem}\label{thm:entail-complexity}
% 	For $n > 0$, \condition $\phi \in \lang{n}$, and literal $\ell$, the problem of checking whether $\phi \models \ell$ (or $\phi \not \models \ell$) is in $\LOGSPACE$.
% \end{theorem}

The above complexity result is obtained by reducing in logspace the problem of deciding whether $\phi \models \ell$ to the problem of deciding the satisfiability of 2CNF formulas of a restricted form.
This simpler form, which guarantees that 2CNF formulas have a certain symmetry property, allows us to adapt the existing satisfiability algorithm for arbitrary 2CNF formulas and obtain an algorithm executing in logspace.

With the above result in place, the rest of this section will be devoted to study the complexity of the problem of checking whether a \condition is a \glob necessary reason, and show how this analysis, together with \cref{thm:entail-complexity}, allows us to obtain the complexity results reported in \cref{tab_SummaryComplexity}.

% \begin{center}
% 	\fbox{
% 		\begin{tabular}{rl}
% 			{\small Problem} : & $\IsNec[\CLS]$ \\
% 			{\small Input} : & An $n$-feature binary classifier $\M \in \CLS$, \\
%             & a condition $\phi \in \lang{n}$, and\\
%             & a class $\class \in \Set{0,1}$.
% 			\\
% 			{\small Question} : &  is $\phi$ a \glob necessary reason for $\class$ \wrt $\M$?
% 		\end{tabular}
% 	}
% \end{center}

\subsection{The Case of Perceptrons}
We start considering the family of classifiers based on perceptrons. %, i.e., the goal is to prove item~(1) of Theorem~\ref{thm:min-necessary-complexity}.
%For this,
As already discussed in the previous section, we first need to understand the complexity of $\IsNec[\PERCL]$.

\begin{restatable}{theorem}{ThmComplexityIsNecessaryPerceptron}
\label{thm:necessary-complexity-prc}
$\IsNec[\PERCL]$ is in $\LOGSPACE$.
\end{restatable}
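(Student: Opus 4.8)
The plan is to reduce $\IsNec[\PERCL]$ to a handful of linear optimizations over the Boolean hypercube, each solvable greedily and evaluable in logarithmic space. First I would exploit the conjunctive shape of conditions: writing $\phi = \bigwedge_{i=1}^m \ell_i$, we have $\ModelsOf{\phi} = \bigcap_i \ModelsOf{\ell_i}$, so $\ModelsOf{\PER,\class} \subseteq \ModelsOf{\phi}$ holds iff $\ModelsOf{\PER,\class} \subseteq \ModelsOf{\ell_i}$ for every $i$. Thus $\phi$ is a global necessary reason iff each of its literals $\ell_i$ is one. A logspace machine can scan the literals of $\phi$ with a pointer, run a logspace test on each, and accept iff all succeed, reusing the subroutine's workspace throughout; so it suffices to decide, in $\LOGSPACE$, whether a single literal $\ell$ is a global necessary reason for $\class$ \wrt $\PER$.

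Next I would pass to the negation: $\ell$ fails to be a global necessary reason iff some $\x$ with $\PER(\x)=\class$ satisfies $\x \not\models \ell$. A short case analysis on $\ell = t \genericrelation t'$ shows that the violation set $\Set{\x \mid \x \not\models \ell}$ is always simple. It is empty when $\ell$ is logically valid, so $\ell$ is trivially a global necessary reason; it is all of $\Set{0,1}^n$ when $\ell$ is unsatisfiable, so $\ell$ is a global necessary reason iff $\ModelsOf{\PER,\class} = \emptyset$; it fixes one coordinate $x_i$ to a bit when $\ell$ compares a variable with a constant; and it forces $x_i = x_j$ or $x_i \neq x_j$ (for distinct $i,j$) when $\ell$ compares two variables. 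In every nontrivial case, the existence of a witness amounts to testing, for $\class = 1$, whether $\max(\x \cdot \w + b) \geq 0$ over the violation set, and for $\class = 0$, whether $\min(\x \cdot \w + b) < 0$ over it.

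Since $\x \cdot \w + b = b + \sum_i w_i x_i$ is separable and the constraint couples at most two coordinates, the optimum is realized by a greedy assignment decided locally: a free coordinate $i$ takes value $1$ iff $w_i > 0$ (for the maximum; $w_i < 0$ for the minimum), a fixed coordinate takes its forced bit, a pair constrained by $x_i = x_j$ takes both values $1$ iff $w_i + w_j > 0$, and a pair constrained by $x_i \neq x_j$ takes the orientation realizing $\max(w_i, w_j)$ (resp.\ $\min(w_i, w_j)$). Each such decision is a comparison of at most two rationals, hence in $\LOGSPACE$, and the final class membership reduces to a single sign test of $b + \sum_{i \in S} w_i$ for the selected index set $S$. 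The one genuinely technical point --- and the main obstacle --- is carrying out this arithmetic in logarithmic space rather than merely polynomial time: the weights are arbitrary binary-encoded rationals, so the final quantity is a sum of up to $n+1$ rationals whose sign must be decided in logspace. I would discharge this by invoking the standard fact that iterated integer addition and comparison, and hence (after clearing denominators) the sign of a sum of rationals, are computable in logarithmic space. Combining the logspace literal scan with this logspace sign test gives the claimed $\LOGSPACE$ bound for $\IsNec[\PERCL]$.
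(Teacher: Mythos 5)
Your proposal is correct and takes essentially the same route as the paper's proof: both reduce the question to finding, per literal, an instance classified as $\class$ that violates that literal, solve the resulting constrained linear optimization over $\Set{0,1}^n$ by a greedy assignment (fixing or coupling the at most two coordinates the literal mentions), and decide the final inequality by logspace iterated addition/comparison of rationals as in \cite{HesseAB2002}. The only organizational difference is that you decompose $\phi$ into its literals \emph{before} complementing (using that $\ModelsOf{\PER,\class}\subseteq\ModelsOf{\phi}$ iff the inclusion holds for every literal), which handles unsatisfiable $\phi$ and trivial literals uniformly, whereas the paper complements first and treats $\ModelsOf{\phi}=\emptyset$ as a separate case via its logspace entailment test (\cref{thm:entail-complexity}).
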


As the complexity class $\LOGSPACE$ is closed under complement, we obtain the result above by showing membership in $\LOGSPACE$ of the complement problem $\IsNotNec[\PERCL]$:
for a perceptron $\PER$, a class $\class$, and \condition $\phi$, decide whether $\phi$ is \emph{not} a \glob necessary reason.
The condition $\phi$ can be shown not being a \glob necessary reason by finding an instance $\x$ such that $\PER(\x) = \class$ and $\x \not \models \phi$.
Intuitively, we show that the latter can be achieved by encoding within $\PER$ the \emph{opposite} of a literal from $\phi$, and by then showing that for such a modified perceptron there exists an instance classified as~$\class$.

Remember now that the generic \cref{alg:generic-min-necessary} decides $\IsMinNec[\CLS,\preordereq]$ also for $\CLS = \PERCL$, and for each ${\preordereq} {} \in \Set{\leq,\subseteq}$.
Since by \cref{thm:necessary-complexity-prc} and \cref{thm:entail-complexity}, \cref{line:necessary-check-phi,line:necessary-check-ell,line:phi-not-entails-ell} of \cref{alg:generic-min-necessary} are feasible in logspace, and each literal $\ell$ considered in each iteration can be stored in logarithmic space, the entire procedure can be carried out in logspace, when considering perceptrons.
The next result follows.

\begin{theorem}\label{thm:complexity-min-necessary-prc}
	$\IsMinNec[\PERCL,\preordereq]$ is in $\LOGSPACE$, for each ${\preordereq} {} \in \Set{\leq,\subseteq}$.
\end{theorem}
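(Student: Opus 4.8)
The plan is to prove the theorem by bounding the space complexity of \cref{alg:generic-min-necessary} when it is instantiated on inputs drawn from $\PERCL$, leaning on its correctness (which is guaranteed by \cref{lem:characterization}) and on the logspace bounds already available for each of its component tests. First I would recall that, by \cref{lem:characterization}, the algorithm accepts precisely when $\phi$ is a $\preordereq$\nbdash-minimal global necessary reason, and that this behaviour is \emph{independent} of whether $\preordereq$ is $\leq$ or $\subseteq$. Consequently a single space analysis settles both preorders simultaneously, and it suffices to show that one run of the algorithm uses only logarithmic space.

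The workspace splits into two parts: the control state of the loop at \cref{line:foreach-literal}, and the scratch space used by the three tests at \cref{line:necessary-check-phi,line:necessary-check-ell,line:phi-not-entails-ell}. For the loop, I would note that a literal $\ell \in \lang{n}$ is fully specified by its two terms---each being the constant $0$, the constant $1$, or one of the $n$ feature variables---together with a relation from $\Set{=,\neq}$; hence $\ell$ is encodable in $O(\log n)$ bits, and enumerating all $O(n^2)$ literals requires only an $O(\log n)$ counter that is reused across iterations. For the tests, \cref{line:necessary-check-phi,line:necessary-check-ell} are instances of $\IsNec[\PERCL]$ and run in logspace by \cref{thm:necessary-complexity-prc}, while \cref{line:phi-not-entails-ell} is an entailment check and runs in logspace by \cref{thm:entail-complexity}. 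Since the tests are invoked sequentially and their work tapes are released and reused between calls, the additional workspace they contribute is $O(\log n)$.

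The only point that genuinely needs care---and which I would treat as the main (if standard) obstacle---is that the test at \cref{line:necessary-check-ell} must be executed on the input $\tup{\M,\class,\ell}$, where $\ell$ is the literal currently produced by the loop rather than a substring of the original input. I would handle this with the usual logspace subroutine/composition technique: instead of materialising $\tup{\M,\class,\ell}$ on a tape (which may be too large to store), I simulate the $\IsNec[\PERCL]$ procedure and, whenever it requests a symbol of its input, compute on the fly whether that position lies in the $\M$ or $\class$ part (read directly from the original input tape) or in the $\ell$ part (read from the $O(\log n)$\nbdash-bit encoding of $\ell$ held on the work tape). Because $\M$ and $\class$ are already present on the input tape and $\ell$ occupies only $O(\log n)$ bits, this interface costs $O(\log n)$ extra space, and composing a logspace input transformation with a logspace decider stays within logspace.

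Combining these observations, \cref{alg:generic-min-necessary} runs in logarithmic space on $\PERCL$ inputs for either preorder, which gives $\IsMinNec[\PERCL,\preordereq] \in \LOGSPACE$. I expect the correctness appeal, the literal enumeration, and the invocation of the logspace tests to be immediate, so the write-up would focus essentially only on stating the subroutine-composition argument precisely enough to justify the $O(\log n)$ overall bound.
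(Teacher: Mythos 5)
Your proof is correct and takes essentially the same route as the paper: both arguments run \cref{alg:generic-min-necessary} on perceptron inputs, invoke \cref{thm:necessary-complexity-prc} and \cref{thm:entail-complexity} to handle \cref{line:necessary-check-phi,line:necessary-check-ell,line:phi-not-entails-ell} in logspace, and observe that each literal fits in $O(\log n)$ bits so the loop can reuse its workspace. The only difference is that you make explicit the standard logspace subroutine-composition argument (simulating the $\IsNec[\PERCL]$ test on the virtual input $\tup{\M,\class,\ell}$), which the paper leaves implicit; this is a reasonable level of added detail but not a different proof.
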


%\red{*** RILETTO, VERIFICATO, E RIEDITATO FIN QUÌ (DIMOSTRAZIONI IN APPENDICE COMPRESE) ***}

\subsection{The case of BDDs}
In this section we consider the family of classifiers based on BDDs. % and the goal is to prove item~(2) of Theorem~\ref{thm:min-necessary-complexity}, i.e., that $\IsMinNec[\BDDCL,\preordereq]$ is $\NL\complete$ for each ${\preordereq} {} \in \Set{\leq,\subseteq}$.
As already done for perceptrons, we first analyze the complexity of checking whether a given \condition is a \glob necessary reason.
We focus on the complement problem, which we call $\IsNotNec[\BDDCL]$, as the complexity result pertains \NL, which is %a complexity class 
closed under complement.

More specifically, we pinpoint an interesting characterization for the conditions from $\lang{n}$ that are \emph{not} \glob necessary reasons, when focusing on BDDs.
This property allows us to devise a nondeterministic procedure having very low space usage, i.e., logarithmic, which we report as \cref{alg:not-necessary-bdd}, and that decides whether a condition is \emph{not} a \glob necessary reason for a BDD.
We discuss this next.

Let $\BDD = (V,E,\lambda,\eta)$ be a BDD, and let $\paths{\BDD}{\class}$ denote the set of all paths from the root of $\BDD$ to a sink of $\BDD$ labeled with the class $\class$.
For a path $\pi = u_1,\ldots,u_m \in \paths{\BDD}{\class}$, intuitively we define $\phi_\pi$ as the \condition assigning to each feature $f_i$, labeling a node $u_i$ of $\pi$, the value $a_i$ that the path $\pi$ assigns to $f_i$---remember that this value $a_i$ is the label of the edge connecting $u_i$ to $u_{i+1}$ in $\pi$.
More formally,
$$ \phi_\pi = \bigwedge\limits_{i=1}^{m-1} (v_{f_i} = a_i),$$
where, for each $i$ with $1 \leq i \leq m-1$, $f_i = \lambda(u_i)$, and $a_i = \eta((u_i,u_{i+1}))$.
Our characterization follows.

\begin{restatable}{lemma}{TheoNotNecessaryBDDCharacterization}
\label{lem:not-necessary-bdd-characterization}
A condition $\phi$ is \emph{not} a \glob necessary reason for a class $\class$ \wrt a BDD $\BDD$ iff there exists a path $\pi \in \paths{\BDD}{\class}$ and a literal $\ell$ such that $\phi_\pi \not \models \ell$ and $\phi \models \ell$.
\end{restatable}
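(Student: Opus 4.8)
The plan is to translate the statement into set-of-models language and then trade a counterexample instance for a single violated literal, exploiting that $\phi$ is a conjunction of literals. The fact I would establish first, directly from the BDD semantics, is that an instance $\x$ satisfies $\phi_\pi$ for some $\pi \in \paths{\BDD}{\class}$ if and only if $\BDD(\x) = \class$. Indeed, if $\x \models \phi_\pi$ then at each internal node $u_i$ of $\pi$, labeled by feature $f_i = \lambda(u_i)$, we have $\ElementOfInstance{x}{f_i} = a_i = \eta((u_i,u_{i+1}))$, so $\x$ traverses exactly $\pi$ and reaches its sink, which is labeled $\class$; conversely, the path witnessing $\BDD(\x) = \class$ lies in $\paths{\BDD}{\class}$, and along it $\x$ sets each $v_{f_i}$ to the edge label $a_i$, i.e.\ $\x \models \phi_\pi$. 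Combined with the earlier remark that $\phi$ is a \glob necessary reason iff $\ModelsOf{\BDD,\class} \subseteq \ModelsOf{\phi}$, this reduces the claim to the equivalence: there is an instance $\x$ with $\x \models \phi_\pi$ for some $\pi \in \paths{\BDD}{\class}$ and $\x \not\models \phi$, if and only if there is a path $\pi \in \paths{\BDD}{\class}$ and a literal $\ell$ with $\phi_\pi \not\models \ell$ and $\phi \models \ell$.

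For the forward direction, I would assume $\phi$ is not a \glob necessary reason and pick a witness $\x$ with $\BDD(\x) = \class$ and $\x \not\models \phi$. Letting $\pi$ be the path witnessing $\BDD(\x) = \class$, the opening observation gives $\x \models \phi_\pi$. Writing $\phi = \bigwedge_j \ell_j$, the failure $\x \not\models \phi$ forces $\x \not\models \ell_j$ for some conjunct $\ell_j$; set $\ell \coloneq \ell_j$. Then $\phi \models \ell$ holds trivially, since $\ell$ is a conjunct of $\phi$, and $\phi_\pi \not\models \ell$ holds because $\x$ is a model of $\phi_\pi$ that fails $\ell$.

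For the converse, given $\pi \in \paths{\BDD}{\class}$ and a literal $\ell$ with $\phi_\pi \not\models \ell$ and $\phi \models \ell$, the non-entailment $\phi_\pi \not\models \ell$ yields an instance $\x \models \phi_\pi$ with $\x \not\models \ell$. By the opening observation $\BDD(\x) = \class$, and since $\phi \models \ell$ while $\x \not\models \ell$, we must have $\x \not\models \phi$. Hence $\x$ witnesses that $\phi$ is not a \glob necessary reason, as required.

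I expect the only delicate step to be the opening equivalence between satisfying some $\phi_\pi$ and being classified as $\class$: it rests on the free-BDD property (no feature repeats along a path, so $\phi_\pi$ is consistent and genuinely pins down the traversal) and on the fact that each classified instance follows exactly one root-to-sink path, as noted in the footnote of \cref{sec:preliminaries}. The remaining manipulation---turning a violated conjunction into a single witnessing literal and back---is immediate once $\phi$ is viewed as a conjunction of literals; note that it also explains why the statement may quantify $\ell$ over all of $\lang{n}$, even though in one direction only conjuncts of $\phi$ are ever needed.
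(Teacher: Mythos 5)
Your proof is correct and follows essentially the same route as the paper: both rest on the observation that $\ModelsOf{\BDD,\class} = \bigcup_{\pi \in \paths{\BDD}{\class}} \ModelsOf{\phi_\pi}$ (which is just the BDD semantics restated), and both then trade non-entailment of the conjunction $\phi$ for non-entailment of a single one of its literals. The paper packages this second step via a closure operator (the set of all literals entailed by a condition), whereas you unfold it into explicit witness instances, but the underlying argument is identical.
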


With the above characterization in place, we are now ready to discuss how $\IsNotNec[\BDDCL]$ is decided by \cref{alg:not-necessary-bdd}. %\from{c}{all}{(qui e altrove, anche prima per perceptron) potremmo usare la forma: the complement of $\IsNec[\BDDCL]$ per evitare $\IsNotNec[\BDDCL]$}
Remember that, since \cref{alg:not-necessary-bdd} is nondeterministic, the procedure accepts its input iff there is a way to carry out the guesses such that ``$\mathsf{accept}$'' is returned (\cref{line:final-accept}).

Observe now that for a condition $\psi$ containing \emph{only} literals of the form $(v_i = a)$, with $a \in \Set{0,1}$, and where each Boolean variable appears at most once in $\psi$, then there is a straightforward approach to test whether $\psi$ 
does \emph{not} 
entail a generic literal $\ell = (t \genericrelation t')$, with $t,t' \in \Set{v_i \mid 1 \leq i \leq n} \cup \Set{0,1}$, and ${\genericrelation} {} \in \Set{=,\neq}$.
Indeed, we can substitute within $\ell$ the term $t$ (resp., $t'$) with the value $a$ if the literal $(t = a)$ (resp., $(t' = a)$) appears in $\psi$; next, if what we have obtained 
%still contains a variable, or is a false relationship between constants (e.g., $1 = 0$, or $1 \neq 1$), then it means that $\psi \not \models \ell$.
is \emph{not} a \emph{trivially true literal}, i.e., a literal of the form $(0=0)$, $(1=1)$, $(1\neq0)$, $(0 \neq 1)$, or of the form $(v_i=v_i)$, for some $1 \le i \le n$, then $\psi \not \models \ell$.
%red{!!!!!!!!!! ALTRA OPZIONE}

% \red{Indeed, we can build $\ell[\psi]$ from $\ell$ by substituting within $\ell$ the term $t$ (resp., $t'$) with the value $a$ if the literal $(t = a)$ (resp., $(t' = a)$) appears in $\psi$.
% If $\ell[\psi]$ still contains a variable, or is a false relationship between constants (e.g., $1 = 0$, or $1 \neq 1$), then it means that $\phi \not \models \ell$.}

% \red{!!!!!!!!!!}

\begin{algorithm}[t]
\caption{A \emph{nondeterminisic} algorithm deciding whether a condition is \emph{not} a \glob necessary reason for a BDD}
\label{alg:not-necessary-bdd}

\BlankLine

\KwIn{An $n$\nbdash-feature classifier $\BDD \in \BDDCL$, a class $\class \in \Set{0,1}$, and a \condition $\phi \in \lang{n}$}
\KwOut{$\mathsf{accept}$, if $\phi$ is \emph{not} a \glob necessary reason for $\class$ \wrt $\BDD$; $\mathsf{reject}$, otherwise}

\BlankLine

\nonl \Proc{\NotNecessaryBDDAlgo{$\BDD,\class,\phi$}}{

    %$\ell \leftarrow$ \Guess{a literal $\ell$ over terms $\Set{v_i \mid 1 \leq i \leq n} \cup \Set{0,1}$}\;
    $\ell \leftarrow$ \Guess{a literal from $\lang{n}$}\;
    \label{line:lit-guess}
    \lIf{$\phi \not \models \ell$}{\Return{$\mathsf{reject}$}}
    %\Assert{$\phi \models \ell$}\;
    \label{line:lit-check}
    %Let $u$ be the root of $\BDD$, and $j \in [n]$ its label in $\BDD$;\\
    $u \leftarrow$ the root of $\BDD$\;
    %$f_u \leftarrow \lambda(u)$\;
    
    \While{$u$ is not a sink of $\BDD$}{
        %Guess an edge $(u,u')$ in $\BDD$; \\
        $e \leftarrow$ \Guess{an edge $\tup{u,u'}$ in $\BDD$}\;
        %Let $c_j$ be the label of $(u,u')$ in $\BDD$; \\
        $f \leftarrow \lambda(u)$\;
        \label{line:extract-label-node}
        $a \leftarrow \eta(e)$\;
        \label{line:extract-label-edge}
        Replace each occurrence of $v_f$ in $\ell$, if any, with $a$\;
        %Let $u := u'$ and let $j$ be the label of $u'$ in $\BDD$;
        $u \leftarrow u'$\;
    }
    %\lIf{$\lambda(u) \neq \class$}{\Return{$\mathsf{reject}$}}
    %\Assert{$\lambda(u) = \class$}\;
    \If{$\lambda(u) \neq \class$ or $\ell$ is trivially true}{\Return{$\mathsf{reject}$}}
    \label{line:final-check}
    %\If{$\ell$ is a $(0 = 0)$ {or} $(1 = 1)$ {or} $(1 \neq 0)$ {or} $(0 \neq 1)$}
    %\Assert{$\ell$ contains a variable \textbf{or} $\ell$ is false}\;
    %\label{line:final-literal-check}
    %\lElse{\textbf{reject}}
    \Return{$\mathsf{accept}$}\;
    \label{line:final-accept}
}
\end{algorithm}

We now argue that the algorithm is correct;
its space complexity will be discussed afterwards.

The algorithm's correctness follows from the fact that it essentially implements the characterization of \cref{lem:not-necessary-bdd-characterization}, where the check $\phi_\pi \not \models \ell$ is carried out as discussed above.
In particular, the algorithm non-deterministically searches for a literal $\ell$ such that $\phi \models \ell$ (\cref{line:lit-guess,line:lit-check}) and then non-deterministically searches for a path $\pi$ in $\BDD$ such that $\phi_\pi \not \models \ell$.
More specifically, the path $\pi$ is non-deterministically guessed one node at the time in the while loop (in order not to use more than logarithmic space).
Each time an edge is traversed, the feature identifier $f$ is read out from the edge's source node label (\cref{line:extract-label-node}), and $f$'s value $a$ is read out from the edge label (\cref{line:extract-label-edge}).
These two pieces form together one literal $v_f = a$ of the \condition $\phi_\pi$ corresponding to the path being traversed. %, and built one edge after the other.
Then, in $\ell$ every occurrence of $v_f$ is replaced with $a$.

When the whole path has been traversed, by \cref{lem:not-necessary-bdd-characterization} and the above discussion, it should now be clear that $\phi$ is \emph{not} a \glob necessary reason for $\class$ \wrt $\BDD$ iff the last visited node is labeled with $\class$ and the literal $\ell$ is not trivially true, which is what the procedure checks (more precisely, in \cref{line:final-check}, the procedure tests the opposite to decide whether to reject or proceed).
Observe moreover that each nondeterministc branch of the algorithm's execution terminates, because $\BDD$ is a DAG, and hence it cannot be the case that an execution branch gets stuck in a loop of $\BDD$, i.e., each execution branch reaches a sink of $\BDD$ at some point.

Regarding the algorithm's space complexity, the procedure only needs to keep in memory, overall, the literal $\ell$, the currently visited node $u$, its label $f$, the guessed edge $e = \tup{u,u'}$, and its label $a$.
All these elements can be encoded in binary, and thus requiring logarithmically many bits in the input size.
Finally, the procedure needs to check whether a \condition entails a literal (\cref{line:lit-check}) which, by \cref{thm:entail-complexity}, can be done in deterministic logspace.

\smallbreak

With the above analysis in place, we are now ready to prove the following result.

\begin{restatable}{theorem}{TheoComplexityIsNecessaryBDD}
\label{thm:necessary-complexity-bdd}
	$\IsNec[\BDDCL]$ is $\NL\complete$.
\end{restatable}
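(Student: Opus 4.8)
My plan is to prove the two halves of $\NL$\complete{}ness separately: membership by reusing the machinery already developed in this section, and hardness by a logspace reduction from a reachability problem.

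\textbf{Membership.} I would obtain $\IsNec[\BDDCL]\in\NL$ by showing the complement $\IsNotNec[\BDDCL]$ is in $\NL$ and invoking closure of $\NL$ under complement (recalled in \cref{sec:preliminaries}). This is essentially settled by the preceding discussion: \cref{alg:not-necessary-bdd} is a nondeterministic procedure for $\IsNotNec[\BDDCL]$ whose correctness rests on \cref{lem:not-necessary-bdd-characterization}, and the accompanying analysis shows that it only ever stores a literal, the current node, a guessed edge, and their labels, all encodable in logarithmically many bits. Its sole non-elementary step, the entailment test at \cref{line:lit-check}, is in deterministic logspace by \cref{thm:entail-complexity}; composing such a logspace subroutine into an $\NL$ computation stays in $\NL$. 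Hence $\IsNotNec[\BDDCL]\in\NL$, and therefore $\IsNec[\BDDCL]\in\NL$.

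\textbf{Hardness.} For the lower bound I would give a logspace many-one reduction from a $\NL$-complete reachability problem on rooted out-degree-two DAGs (a $\UniformRootedAcyclicReach$-style problem) to $\IsNec[\BDDCL]$. The key observation is that such an instance is already almost a BDD: I label each internal node with a \emph{distinct} feature identifier (e.g.\ its own index) and its two outgoing edges with $0$ and $1$; since every root-to-sink path in a DAG is simple, distinct node labels automatically satisfy the freeness/no-repeated-label condition for free. I make the designated target a sink labeled with the class $\class$, every other sink a sink labeled $1-\class$, and finally I set $\phi$ to be an \emph{unsatisfiable} condition, e.g.\ $\phi=(0=1)$, so that $\ModelsOf{\phi}=\emptyset$. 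With this choice $\phi$ is a \glob necessary reason iff $\ModelsOf{\BDD,\class}\subseteq\ModelsOf{\phi}=\emptyset$, i.e.\ iff no instance is classified as $\class$. Because every root-to-sink path extends to a full instance (assign unvisited features arbitrarily) and, conversely, every $\class$-instance traces exactly one such path landing on the unique $\class$-sink, this holds iff the target is \emph{unreachable} from the root. Thus the (non-)reachability answer is the $\IsNec[\BDDCL]$ answer, and the whole construction is plainly logspace-computable; since reachability and its complement are both $\NL$-complete ($\NL$ being closed under complement), $\NL$\hard{}ness of $\IsNec[\BDDCL]$ follows.

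The step I expect to be the main obstacle is purely structural: guaranteeing that the object produced by the reduction is a \emph{legal} BDD. Freeness is automatic as noted, and edge-labelling is routine, but enforcing a \emph{single} source and \emph{out-degree exactly two} requires care if one starts from an arbitrary DAG --- one must pad out-degree-one nodes with an edge to a fresh non-$\class$ sink, split high-out-degree nodes through fresh binary gadgets, and, most delicately, eliminate spurious sources without creating new paths to the target (which could falsify the reachability equivalence). This is precisely why it is cleaner to reduce from the already rooted, uniform-degree reachability variant and to discharge the degree/source normalization once, as a separate $\NL$-hardness lemma for that variant, rather than inline. Verifying that these local gadgets preserve reachability and remain logspace-computable is the one place where genuine bookkeeping is needed.
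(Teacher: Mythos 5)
Your membership argument is essentially identical to the paper's: show $\IsNotNec[\BDDCL]\in\NL$ via \cref{alg:not-necessary-bdd}, whose correctness rests on \cref{lem:not-necessary-bdd-characterization} and whose entailment test is in logspace by \cref{thm:entail-complexity}, then invoke closure of $\NL$ under complement. That half is fine.

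The hardness half fails, and it fails exactly at the spot you dismiss as ``bookkeeping''. In a \emph{rooted} DAG, \emph{every} node is reachable from the root: from any node, repeatedly follow incoming edges backwards; acyclicity and finiteness force this walk to terminate, and it can only terminate at a node with no incoming edges, i.e., at the unique root. Consequently, the problem you propose to reduce from---``is the designated sink reachable from the root of a uniform rooted DAG''---is trivially always-yes, hence not $\NL\hard$ (a problem with no no-instances cannot be the target of a many-one reduction from any problem that has a no-instance), and no degree/source normalization lemma can rescue it. The same fact breaks your construction directly: a BDD is by definition rooted, so in your BDD the $\class$-labelled target sink is always reachable from the root, every root-to-sink path extends to an instance (as you yourself note), and therefore $\ModelsOf{\BDD,\class}\neq\emptyset$ for \emph{every} input. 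Your unsatisfiable condition $\phi=(0=1)$ is thus \emph{never} a \glob necessary reason, i.e., your reduction outputs a constant answer.

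This is precisely why the paper's intermediate problem $\UniformRootedAcyclicReach$ asks for a path from the root to the given sink \emph{that traverses a designated edge $e$} out of the root: that question remains non-trivial on rooted DAGs, and its $\NL$-hardness is what \cref{lem:uniform-dag-hard} establishes. Correspondingly, the paper's reduction to (the complement of) $\IsNec[\BDDCL]$ cannot use an unsatisfiable condition: it labels $e$ with $1$, labels every edge entering the accepting sink with $0$, and takes the \emph{satisfiable} condition $\phi=(s=0)$, so that an instance classified as $\class$ violates $\phi$ exactly when its classifying path leaves the root through $e$. In other words, the condition $\phi$ must do real work---it is the device that encodes the ``via $e$'' constraint---and the ``$\ModelsOf{\phi}=\emptyset$'' trick is structurally incapable of expressing it. Any repair of your argument would have to reinstate both the edge constraint in the source problem and a condition/edge-labelling interplay of this kind.
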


The upper bound of \cref{thm:necessary-complexity-bdd} is obtained thanks to the procedure \NotNecessaryBDDAlgo{$\BDD,\class,\phi$} discussed above, which shows that $\IsNotNec[\BDDCL]$ is in \NL.
However, since \NL is closed under complement, we obtain that $\IsNec[\BDDCL]$ is in \NL as well.

The lower bound of \cref{thm:necessary-complexity-bdd} is shown via a reduction from the following intermediate \NL{}\hard problem, which we define next.
We say that a directed graph $G$ is \emph{uniform} if each node has either $0$ or exactly $2$ outgoing edges.

The $\UniformRootedAcyclicReach$ problem is defined as follows:
given as input a uniform, rooted directed acyclic graph (RDAG) $G=(V,E)$, a sink of $G$, and an outgoing edge $e$ of the root of $G$, decide whether there exists a path in $G$ from its root to the given sink that traverses $e$.

We can show that $\UniformRootedAcyclicReach$ is \NL{}\hard.
Finally, we reduce the latter problem to $\IsNotNec[\BDDCL]$ in logspace.
This implies that also the problem $\IsNec[\BDDCL]$ is $\NL\hard$, since $\NL$ is closed under complement.

The complexity of $\IsMinNec[\BDDCL,\preordereq]$, for each ${\preordereq} {} \in \Set{\leq,\subseteq}$, can now be shown.

\begin{restatable}{theorem}{TheoComplexityIsMinNecessaryBDD}
\label{thm:complexity-min-necessary-bdd}
	$\IsMinNec[\BDDCL,\preordereq]$ is \NL{}\complete, for each ${\preordereq} {} \in \Set{\leq,\subseteq}$.
\end{restatable}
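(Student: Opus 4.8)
The plan is to prove matching \NL upper and lower bounds, both uniformly in ${\preordereq}\in\Set{\leq,\subseteq}$ (recall that, by \cref{lem:characterization}, $\preordereq$-minimality does not depend on the chosen preorder, and neither does \cref{alg:generic-min-necessary}). For membership I would show that the \emph{complement} of $\IsMinNec[\BDDCL,\preordereq]$ lies in \NL and then invoke closure of \NL under complement. By \cref{lem:characterization}, a \condition $\phi$ fails to be a $\preordereq$-minimal \glob necessary reason exactly when either (i)~$\phi$ is not a \glob necessary reason, or (ii)~there is a literal $\ell$ with $\phi\not\models\ell$ that is itself a \glob necessary reason. Disjunct~(i) is $\IsNotNec[\BDDCL]$, which is in \NL by \cref{thm:necessary-complexity-bdd} together with closure under complement. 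Disjunct~(ii) is decided by a nondeterministic logspace machine that guesses one of the $O(n^2)$ literals $\ell$, verifies $\phi\not\models\ell$ in \LOGSPACE (by \cref{thm:entail-complexity}), and then runs the \NL procedure for $\IsNec[\BDDCL]$ on $\ell$. Since \NL is closed under union, the whole complement is in \NL, giving membership.

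For hardness I would reduce from \UniformRootedAcyclicReach, already shown \NL\hard. Given a uniform RDAG $G$, a sink $t$, and a root-edge $e=(r,r')$, the key device is an \emph{isolation gadget}: introduce a fresh feature $\var_0$ and build a BDD $\BDD'$ whose root tests $\var_0$, routes the $\var_0{=}1$ edge directly to a sink labeled $\class$, and routes the $\var_0{=}0$ edge into a BDD $H$ over the remaining features. Here $H$ is produced from $(G,t,e)$ by exactly the encoding of a uniform RDAG as a legal BDD used in the hardness half of \cref{thm:necessary-complexity-bdd} (layering to enforce the ``no repeated label on a root-to-sink path'' condition, labeling $t$ with $\class$ and every other sink with the other class, and sending the non-$e$ branch of the root to a sink of the other class), so that $H$ classifies some instance as $\class$ iff there is a path from $r$ to $t$ through $e$ in $G$. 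The reduction then outputs $\tup{\BDD',\class,\top}$, which is computable in \LOGSPACE.

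Correctness rests on analyzing the set $S$ of $\class$-instances of $\BDD'$. Irrespective of $G$, $S$ contains the whole ``blob'' $B$ of instances with $\var_0=1$. A direct inspection shows that, among all non-trivial literals, $B\subseteq\ModelsOf{\ell}$ holds only for $\ell=(\var_0=1)$ and its equivalent $(\var_0\neq 0)$: any literal constraining some $\var_i$ with $i\ge 1$, comparing two features, or forcing $\var_0=0$ already has a counterexample inside $B$. Hence $(\var_0=1)$ is, up to equivalence, the \emph{only} literal that \cref{lem:characterization} could use to witness non-minimality of $\top$ (and $\top\not\models\ell$ for every non-trivial $\ell$). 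Since $\top$ is trivially a \glob necessary reason, $\top$ is $\preordereq$-minimal iff $(\var_0=1)$ is \emph{not} a \glob necessary reason, iff some $\class$-instance has $\var_0=0$, iff $H$ has a $\class$-instance, iff the \UniformRootedAcyclicReach instance is positive. As the same instance $\tup{\BDD',\class,\top}$ serves both preorders, this establishes \NL\hardness for each ${\preordereq}\in\Set{\leq,\subseteq}$.

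The main obstacle I anticipate is the construction of $H$: turning $G$ into a \emph{legal} BDD forces one to enforce the no-repeated-label constraint and to correctly encode both the ``reaches $t$'' and the ``through $e$'' requirements simultaneously. This is precisely the technical content I would reuse from \cref{thm:necessary-complexity-bdd}; by contrast, the isolation argument and the appeal to \cref{lem:characterization} are routine once $H$ is in place.
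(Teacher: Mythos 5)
Your membership argument is correct and is essentially the paper's proof in different clothing: both rest on \cref{lem:characterization}, \cref{thm:entail-complexity}, \cref{thm:necessary-complexity-bdd}, and closure of \NL under complement (the paper packages these as a single nondeterministic logspace machine implementing \cref{alg:generic-min-necessary}, while you decide the complement as a union of two \NL languages; the two phrasings are interchangeable). The gap is in the hardness half, and it sits exactly where you anticipated difficulty: the construction of $H$. First, you cannot reuse the encoding of \cref{thm:necessary-complexity-bdd} ``exactly'': in that construction \emph{every} sink of $G$ is wired to the class-$\class$ sink and both branches of the root are kept, so that BDD has $\class$-instances for \emph{every} input $G$; the property you need, namely that $H$ has a $\class$-instance iff some path from $r$ to $t$ traverses $e$, holds only for the modified construction in your parenthetical description. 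Second, and more seriously, that modification---rerouting the non-$e$ branch $e'=(r,r'')$ of the root into a sink of the other class---need not produce a legal BDD: whenever $e'$ is the unique incoming edge of $r''$ (the typical case; e.g., in the instances produced by \cref{lem:uniform-dag-hard} the node $r''$ is a fresh uniformization node with exactly this property), after the rerouting $r''$ has no incoming edges, so the graph has two in-degree-zero nodes and is not a rooted DAG, which the paper's definition of a BDD requires.

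This is not a cosmetic defect of your scheme. In a DAG, having a unique in-degree-zero node is equivalent to every node being reachable from it, so any repair must either delete the subgraph hanging below $r''$---an (un)reachability computation that a logspace reduction cannot perform---or keep $r''$ reachable from the root, which recreates root-to-$t$ paths avoiding $e$ and destroys the equivalence ``$H$ has a $\class$-instance iff there is a path through $e$'' on which your correctness argument rests. The paper's construction is engineered precisely around this obstacle: it never cuts the branch, but reroutes it through a fresh-feature node $\alpha$ so that $r''$ stays reachable, and then compensates with the gadget $\BDD_2$ and the condition $(s=0)\wedge(t=0)$, paying with the long case analysis over literal shapes. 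Your blob idea can be salvaged in the same spirit while keeping $\phi=\top$: replace $e'$ by an edge from $r$ to a new node $\beta$ carrying the fresh feature $\var_0$, with the $1$-edge of $\beta$ going to the class-$\class$ sink $t$ (this is the blob: all instances with $\var_r=0$ and $\var_0=1$) and the $0$-edge of $\beta$ going to $r''$. Then \emph{three} candidate literals survive the blob filter instead of one, namely $(\var_r=0)$, $(\var_0=1)$, and $(\var_r\neq\var_0)$ up to equivalence, and checking each against the three kinds of $\class$-instances shows that $\top$ is $\preordereq$-minimal iff there is a path from $r$ to $t$ through $e$, for both preorders. That extra node, and the accompanying three-literal analysis, are exactly the content missing from your proposal.
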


The upper bound of \cref{thm:complexity-min-necessary-bdd} is obtained by exhibiting a nondeterministic machine $N$ deciding this problem and working in logspace.
This machine $N$ is designed in such a way that it executes the procedure \MinNecessaryAlgo{$\M,\class,\phi$} of \cref{alg:generic-min-necessary} by suitably integrating two nondeterministic logspace machines capable of deciding whether a condition, or a literal, is, or is not, a \glob necessary reason for the BDD as input;
the existence of these two machines is guaranteed by the fact that $\IsNec[\BDDCL]$ is in \NL (see \cref{thm:necessary-complexity-bdd}), and so its complement (by \NL being closed under complement).

The lower bound of \cref{thm:complexity-min-necessary-bdd} is shown by proving that the complement of $\IsMinNec[\BDDCL,\preordereq]$ is $\NL\hard$, which is shown via a non-trivial adaptation of the reduction from $\UniformRootedAcyclicReach$ to the complement of $\IsNec[\BDDCL]$ shown for Theorem~\ref{thm:necessary-complexity-bdd}.
Here the challenge is to guarantee that when there is no path from the root of the uniform RDAG to a sink going via a given edge, then not only the constructed \condition $\phi$ is a \glob necessary reason, but also that \emph{no literal} $\ell$ such that $\phi \not \models \ell$ becomes a \glob necessary reason.

\medskip
\noindent
\rev{\textbf{A Note on Decision Trees.} 
A well-known special case of BDDs is decision trees (DTs), i.e., BDDs whose underlying DAG is a tree. It turns out that we can exploit again \cref{lem:not-necessary-bdd-characterization} to also obtain complexity results for this special family of classifiers; we use $\DTL$ to denote the family of all DTs.
We can show that  $\IsNec[\DTL]$ is in $\LOGSPACE$, which in turn implies that $\IsMinNec[\DTL,\preordereq]$ is in $\LOGSPACE$, for each ${\preordereq} {} \in \Set{\leq,\subseteq}$. The latter can be shown using an argument similar to the one used for Perceptrons (cf.\ the discussion before \cref{thm:complexity-min-necessary-prc}).
In particular, by \cref{lem:not-necessary-bdd-characterization}, to check that a condition $\phi$ is \emph{not} a global necessary reason for a class $c$ w.r.t.\ a DT $\cal T$, it suffices to try all pairs of a literal $\ell$ with $\phi \models \ell$ and of a path $\pi$ from the root of $\cal T$ to a leaf of $\cal T$ labeled with $c$, and check whether $\phi_\pi \not \models \ell$. We can easily iterate over all literals $\ell$ with $\phi \models \ell$ in logarithmic space, since for each literal we only need to store at most two variables (using two numbers $i,j \in [n]$ encoded in binary) and we can reuse the space at each iteration, while $\phi \models \ell$ can be checked in logarithmic space by \cref{thm:entail-complexity}. 

The crucial part is to iterate over each path $\pi$ from the root of $\cal T$ to a leaf of $\cal T$ labeled with $c$, and to verify that $\phi_\pi \not \models \ell$, all without using more than logarithmic space. This can be achieved by iterating over each leaf $v$ of $\cal T$, and if $v$ is labeled with $c$, then traversing the (unique) path $\pi$ connecting the root of $\cal T$ to $v$, by iteratively following the parents backwards. While traversing the path $\pi$, the literal $\ell$ is modified as done in Algorithm~\ref{alg:not-necessary-bdd} for BDDs. When the root of $\cal T$ is reached, the shape of $\ell$ will determine whether $\phi_\pi \not \models \ell$, again as done in Algorithm~\ref{alg:not-necessary-bdd}.
We conclude that the complement of $\IsNec[\DTL]$ is in $\LOGSPACE$, which implies $\IsNec[\DTL]$ is in $\LOGSPACE$ as well.
}%rev

\subsection{The case of MLPs}
In this section we consider the family of classifiers based on MLPs. %and the goal is to prove item~(3) of Theorem~\ref{thm:min-necessary-complexity}, i.e., that $\IsMinNec[\MLPCL,\preordereq]$ is $\DP\complete$ for each $\preordereq \in \Set{\leq,\subseteq}$.
As usual, we first analyze the complexity of checking whether a given \condition is a \glob necessary reason. %, assuming our classifier is an MLP.
% We show that:

\begin{restatable}{theorem}{TheoComplexityIsNecessaryMLP}
\label{thm:necessary-complexity-mlp}
	$\IsNec[\MLPCL]$ is $\co\NP\complete$.
\end{restatable}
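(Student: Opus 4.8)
The plan is to prove $\co\NP$-completeness of $\IsNec[\MLPCL]$ by establishing membership in $\co\NP$ and $\co\NP$-hardness separately.

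For the upper bound, I would argue that the complement problem $\IsNotNec[\MLPCL]$ is in $\NP$. By definition, $\phi$ is \emph{not} a global necessary reason for $\class$ \wrt $\MLP$ iff there exists an $n$-instance $\x$ with $\MLP(\x) = \class$ but $\x \not\models \phi$. A nondeterministic machine can guess such an instance $\x \in \Set{0,1}^n$ (which is polynomially sized), then verify in polynomial time that (i) $\MLP(\x) = \class$ by simply evaluating the network layer-by-layer (each $\h^i = f^i(\h^{i-1}\W^i + \bb^i)$ is computable in polynomial time, using that ReLU and the step function are cheap and the rationals involved stay polynomially bounded), and (ii) $\x \not\models \phi$ by checking each literal of $\phi$ under the substitution $\phi[\x]$. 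Since both checks are deterministic polynomial-time, $\IsNotNec[\MLPCL] \in \NP$, whence $\IsNec[\MLPCL] \in \co\NP$.

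For the lower bound, the plan is to reduce from a canonical $\co\NP$-complete problem, namely $\textsc{UnSat}$ (unsatisfiability of a propositional formula, or more conveniently the \emph{tautology} problem, or the complement of $3$-$\Sat$). The key is the well-known fact (used in related work such as \citealp{BarceloM0S20}) that MLPs with ReLU hidden units and a step output neuron can simulate arbitrary Boolean circuits, and in particular can compute any given CNF formula. Concretely, given a Boolean formula $\psi$ over variables $x_1,\dots,x_n$, I would construct in polynomial time an $n$-MLP $\MLP_\psi$ such that $\MLP_\psi(\x) = 1$ iff $\x$ satisfies $\psi$; this uses standard gadgets (a ReLU layer realizing each clause/gate, with weights and biases encoding conjunctions and disjunctions, and a final step neuron). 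Then I would set $\class = 1$ and choose a fixed \emph{unsatisfiable} condition to play the role of $\phi$---for instance $\phi := (\var_1 = 0 \wedge \var_1 = 1)$, which has $\ModelsOf{\phi} = \emptyset$. With this choice, $\phi$ is a global necessary reason for $1$ \wrt $\MLP_\psi$ iff $\ModelsOf{\MLP_\psi, 1} \subseteq \ModelsOf{\phi} = \emptyset$, i.e.\ iff no instance is classified $1$, i.e.\ iff $\psi$ is unsatisfiable. Hence $\psi$ is unsatisfiable iff $\phi$ is a global necessary reason, giving the desired reduction from $\textsc{UnSat}$ and establishing $\co\NP$-hardness.

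The main obstacle I anticipate is in the hardness construction, specifically verifying that the MLP encoding of a Boolean formula is both correct and polynomially sized with polynomially bounded rational weights. The simulation of Boolean gates by ReLU neurons is standard but requires care: one must ensure the intermediate activations stay within ranges where the ReLU behaves as the intended Boolean gate (e.g.\ clamping outputs to $\Set{0,1}$ so that signals don't blow up across layers), and that the final step neuron correctly thresholds. A clean alternative that sidesteps some of this bookkeeping is to reduce directly from a formula already in a convenient normal form and to appeal to an existing lemma in the literature guaranteeing such a Boolean-function encoding for MLPs; since the paper already cites \citet{BarceloM0S20} and related foundational works for this classifier family, I would lean on that established encoding rather than re-deriving it from scratch. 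The membership direction, by contrast, is routine and should present no difficulty.
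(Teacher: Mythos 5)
Your proposal is correct and follows essentially the same route as the paper: membership via guess-and-check for the complement (guess $\x$, evaluate the network layer-by-layer, check $\x \not\models \phi$), and hardness via a reduction from \textsc{UnSat} that encodes the formula as an MLP using the known polynomial-time encoding of Boolean formulas into MLPs (the paper cites exactly the lemma of \citet{Barcelo20} you propose to lean on) and pairs it with class $1$ and an unsatisfiable condition, so that being a global necessary reason collapses to $\ModelsOf{\MLP_\psi,1}=\emptyset$. The only cosmetic difference is your choice of $\phi=(\var_1=0 \wedge \var_1=1)$ versus the paper's $\phi=(1=0)$; both have empty model sets and work identically.
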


\begin{proof}
\emph{(Membership).}
We show that the problem is in \co\NP by means of a simple polynomial-time guess and check procedure deciding the \emph{complement} of $\IsNec[\MLPCL]$.
That is, given an MLP $\MLP$, a class $\class$, and a \condition $\phi$, we can decide whether $\phi$ is \emph{not} a \glob necessary reason by guessing an instance $\x$, which is of polynomial size in $n$, and by then checking that $\MLP(\x) = \class$ and $\x \not \models \phi$.
Checking $\MLP(\x) = \class$ can be carried out in polynomial time in the size of $\x$ and $\MLP$, as it suffices to compute the result of each layer of $\MLP$ by means of matrix multiplications.
Finally, checking $\x \not \models \phi$ requires computing $\phi[\x]$ and verifying that the latter evaluates to false.

\emph{(Hardness).}
We show the \co\NP{}\hard{}ness of the problem via a polynomial-time reduction from the $\UNSAT$ problem:
given a 3CNF Boolean formula $\psi$, decide whether $\psi$ is \emph{un}\/satisfiable.
The reduction constructs an MLP $\MLP_{\psi}$ starting from $\psi$ by exploiting a result by~\cite[Lemma~13]{Barcelo20} showing that any Boolean formula can be encoded as an MLP, which can be obtained in polynomial time in the size of the formula.
Together with $\MLP_{\psi}$, the reduction constructs the class $\class = 1$ and the condition $\phi = (1 = 0)$.

Now, if $\psi$ is unsatisfiable, by \citeauthor{Barcelo20}'s result, for every instance $\x$, it holds that $\MLP_{\psi}(\x) = 0 \neq \class$, hence $\ModelsOf{\MLP_{\psi},\class} = \emptyset \subseteq \ModelsOf{\phi}$, and thus $\phi$ is a \glob necessary reason for $\class$ \wrt $\MLP_{\psi}$.
If $\psi$ is satisfiable, by \citeauthor{Barcelo20}'s result, there exists an instance $\tilde\x$ with $\MLP_{\psi}(\tilde\x) = 1 = \class$, and thus $\ModelsOf{\MLP_{\psi},\class} \neq \emptyset$, while $\ModelsOf{\phi} = \emptyset$.
Hence, $\phi$ is not a \glob necessary reason for $\class$ \wrt $\MLP_{\psi}$.
\end{proof}

We are now ready to prove that, for each ${\preordereq} {} \in \Set{\leq, \subseteq}$, $\IsMinNec[\MLPCL,\preordereq]$ is \DP{}\complete.

\begin{restatable}{theorem}{TheoComplexityIsMinNecessaryMLP}
\label{thm:complexity-min-necessary-mlp}
	$\IsMinNec[\MLPCL,\preordereq]$ is \DP{}\complete, for each ${\preordereq} {} \in \Set{\leq,\subseteq}$.
\end{restatable}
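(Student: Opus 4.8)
The plan is to prove membership in $\DP$ and $\DP$\hardness separately, leaning on \cref{lem:characterization} and \cref{thm:necessary-complexity-mlp}. For membership, I would use \cref{lem:characterization} to split the problem: a condition $\phi$ is a $\preordereq$-minimal global necessary reason iff (i)~$\phi$ is a global necessary reason, and (ii)~there is no literal $\ell$ with $\phi \not\models \ell$ that is a global necessary reason. Part~(i) is exactly $\IsNec[\MLPCL]$, which is in $\co\NP$ by \cref{thm:necessary-complexity-mlp}. Part~(ii) is the conjunction, over the $O(n^2)$ literals $\ell$ with $\phi \not\models \ell$ (each entailment check being in $\LOGSPACE$ by \cref{thm:entail-complexity}), of the statement ``$\ell$ is \emph{not} a global necessary reason''; each such statement is in $\NP$ (guess an instance $\x$ with $\MLP(\x)=\class$ and $\x \not\models \ell$), and a conjunction of polynomially many $\NP$ predicates is again in $\NP$ (guess all witnessing instances simultaneously and verify them in polynomial time). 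Thus the problem is the intersection of an $\NP$ language (part~(ii)) and a $\co\NP$ language (part~(i)), which is precisely $\DP$.

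For hardness I would reduce from $\SATUNSAT$: given Boolean formulas $(\psi_1,\psi_2)$, decide whether $\psi_1$ is satisfiable and $\psi_2$ is unsatisfiable. From $(\psi_1,\psi_2)$ I build an MLP whose features are the variables of $\psi_1$, the variables of $\psi_2$, and two fresh Boolean features, whose associated condition variables I denote $v_s,v_{s'}$ (with bit values $s,s'$), computing the Boolean function
$$\MLP(\x)=1 \iff (\lnot s \wedge \psi_2) \vee \bigl(s \wedge (\lnot s' \vee \psi_1)\bigr).$$
By \cite[Lemma~13]{Barcelo20}, this Boolean combination of $\psi_1$, $\psi_2$, and the two selector bits is realizable by an MLP in polynomial time. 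I set $\class = 1$ and $\phi = (v_s = 1)$.

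I would then verify the four cases. The condition $\phi$ is a global necessary reason iff every accepted instance has $s=1$, i.e.\ iff no accepted instance has $s=0$; such an instance exists iff $\psi_2$ is satisfiable, so $\phi$ is a global necessary reason iff $\psi_2$ is unsatisfiable. Assuming $\psi_2$ unsatisfiable, the accepted instances are exactly those with $s=1$ and either $s'=0$ (unconditionally) or $s'=1$ together with the $\psi_1$-part being a model of $\psi_1$. If $\psi_1$ is unsatisfiable, every accepted instance has $s'=0$, so the literal $(v_{s'}=0)$ is a global necessary reason while $\phi \not\models (v_{s'}=0)$, whence by \cref{lem:characterization} $\phi$ is \emph{not} minimal. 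If $\psi_1$ is satisfiable, the unconditional sub-branch $s=1,\,s'=0$ accepts an instance for \emph{every} assignment to the remaining features, so every feature other than $v_s$ takes both values among accepted instances (for $v_{s'}$, value $1$ appears via a model of $\psi_1$); hence the only literals satisfied by all accepted instances are those entailed by $(v_s=1)$, and $\phi$ is minimal. Combining with part~(i), the constructed instance is a ``yes'' instance iff $\psi_1$ is satisfiable and $\psi_2$ is unsatisfiable.

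The crux—exactly the difficulty flagged for the BDD lower bound—is guaranteeing that in the ``yes'' case \emph{no spurious literal} accidentally becomes a global necessary reason and destroys minimality; this is precisely what the unconditionally-accepting sub-branch $s=1,\,s'=0$ is engineered to prevent, by diversifying all features except $v_s$ while pinning the forceability of $(v_{s'}=0)$ solely to the (un)satisfiability of $\psi_1$. Finally, since \cref{lem:characterization} renders minimality independent of whether $\preordereq$ is $\le$ or $\subseteq$, both membership and the reduction are insensitive to the preorder, so the same argument establishes $\DP$\completeness for each ${\preordereq} {} \in \Set{\leq,\subseteq}$ simultaneously.
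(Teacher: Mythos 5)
Your proposal is correct and takes essentially the same approach as the paper: the membership argument (a \co\NP{} check that $\phi$ is a global necessary reason, conjoined with an \NP{} check over the $O(n^2)$ literals via \cref{lem:characterization}) is identical, and your hardness reduction from \SATUNSAT{} has the same architecture as the paper's, which encodes $\Psi = (\gamma \vee g) \wedge (\delta \vee d)$ as an MLP via \cite[Lemma~13]{Barcelo20} and sets $\phi = (v_d = 1)$. Your selector bits $s$ and $s'$ play exactly the roles of the paper's $d$ and $g$: the unsatisfiable formula forces $\phi$ to be a global necessary reason, while satisfiability of the other formula prevents the spurious literal ($(v_{s'}=0)$ in your construction, $(v_g=1)$ in the paper's) from becoming a global necessary reason and destroying minimality.
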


\begin{proof}
\emph{(Membership).}
Consider again the generic procedure \MinNecessaryAlgo{$\M,\class,\phi$} reported as \cref{alg:generic-min-necessary}, and deciding $\IsMinNec[\CLS,\preordereq]$, for $\CLS = \MLPCL$.
Assume the input classifier $\M$ to the above procedure is an MLP $\MLP$.

The procedure \MinNecessaryAlgo{$\M,\class,\phi$} is characterized by two distinct phases, where the second is executed only if the first succeeds, and both phases need to succeed in order to accept the input.
The first phase (\cref{line:necessary-check-phi}) succeeds iff $\phi$ is a \glob necessary reason for $\class$ \wrt $\M$.
The second phase (from \cref{line:begin-minimal_necessary} to \cref{line:end-minimal_necessary}) succeeds iff $\phi$ is $\preordereq$\nbdash-minimal. %:
% \red{thanks to \cref{lem:characterization} ?},
% this is achieved by checking, for every literal $\ell$ with $\phi \not \models \ell$, that $\ell$ is \emph{not} a \glob necessary reason for $\class$ \wrt $\M$.

%\from{enr}{all}{Versione BLU invece di quella rossa}

By \cref{thm:necessary-complexity-mlp}, the computation carried out to successfully complete the first phase is in \co\NP.
To prove the $\DP$ upper bound, %of \MinNecessaryAlgo{$\M,\class,\phi$}, 
we need to show that the computation carried to successfully complete the second phase is in \NP.

Remember that, by \cref{lem:characterization}, the $\preordereq$\nbdash-minimality of $\phi$ can be tested by checking, for every literal $\ell \in \lang{n}$ (\cref{line:foreach-literal}) for which $\phi \not \models \ell$ (\cref{line:phi-not-entails-ell}), that $\ell$ is \emph{not} a \glob necessary reason for $\class$ \wrt $\M$ (\cref{line:necessary-check-ell}).
Observe now that the number of distinct literals $\ell \in \lang{n}$ explored at \cref{line:foreach-literal} is $O(n^2)$, and the test at \cref{line:phi-not-entails-ell} can be carried out in logspace (see \cref{thm:entail-complexity}), and hence in polynomial time.
Focus now on \cref{line:necessary-check-ell}.
In order to accept at \cref{line:generic-min-necessary-accept}, the entire second phase needs to complete successfully. 
This requires that all tests at \cref{line:necessary-check-ell}, for all literals $\ell$, have to fail, i.e., every literal $\ell$ must \emph{not} be a \glob necessary reason:
checking this is feasible in \NP (see \cref{thm:necessary-complexity-mlp}). 
Therefore, the overall computation in the second phase is in \NP.

%\red{By \cref{thm:necessary-complexity-mlp}, the computation carried out not to trigger a ``$\mathsf{reject}$'' answer in the first phase is in \co\NP.
%To prove the $\DP$ upper bound of the procedure for $\MLPCL$, we need to show that the computation carried out not to trigger a ``$\mathsf{reject}$'' answer in the second phase is in \NP.
%
%To this aim, simply observe that, in order not to trigger the ``$\mathsf{reject}$'' answer at \cref{line:necessary-check-ell}, the literal $\ell$ must \emph{not} be a \glob necessary reason.
%Checking the latter, by \cref{thm:necessary-complexity-mlp}, is in \NP.
%To conclude observe that the number of these tests is polynomial, because the number of distinct literal $\ell \in \lang{n}$ explored at \cref{line:foreach-literal} is $O(n^2)$, and the test at \cref{line:phi-not-entails-ell} can be carried out in logspace (\cref{thm:entail-complexity}), and hence in polynomial time.}

\emph{(Hardness (sketch)).}
Hardness is shown via a polynomial-time reduction from the problem $\SATUNSAT$, which is \DP{}\hard:
given a pair $\tup{\gamma,\delta}$ of 3CNF Boolean formulas, decide whether $\gamma$ is satisfiable \emph{and} $\delta$ is unsatisfiable.
We point out that such a reduction, which, given $(\gamma,\delta)$, builds an MLP $\MLP$, a class $\class$, and a \condition $\phi$, is more complex than the reduction to show that $\IsNec[\MLPCL]$ is \co\NP{}\hard, because the two formulas $\gamma$ and $\delta$ must be encoded \emph{together} into a \emph{single} MLP $\MLP$ that needs to enjoy two properties at the same time:
the \condition $\phi$ is a \glob necessary reason for $c$ \Wrt $\MLP$, \emph{and} $\phi$ is $\preordereq$\nbdash-minimal. %, which, by Lemma~\ref{lem:characterization}, means that for \emph{every} literal $\ell$ with $\phi \not \models \ell$, $\ell$ must not be a \glob necessary reason.
\end{proof}

\rev{
\subsection{Computing Minimal Global Necessary Reasons}\label{sec:computing-complexity}
In this section we discuss how the complexity results we obtained for $\IsNec$ and $\IsMinNec$ allow us to study the complexity of the problem of \emph{computing} a minimal global necessary reason.

In what follows, for a family $\CLS$ of classifiers, and a preorder ${\preordereq} {} \in \{\le, \subseteq\}$, we use $\FindMinNec[\CLS,\preordereq]$ to denote the problem of \emph{computing}, given a classifier $\M \in \CLS$ and a class $c \in \{0,1\}$, a $\preordereq$-minimal global necessary reason $\phi$ for $c$ w.r.t.\ $\M$.

\medskip
\noindent
\textbf{Lower bounds.}
We observe that the complexity of the problem $\FindMinNec[\CLS,\preordereq]$ is no lower than that of $\IsMinNec[\CLS,\preordereq]$, which is shown by reducing the latter to the former in polynomial time. That is, we show that $\IsMinNec[\CLS,\preordereq]$ can be solved in polynomial time by using an oracle for the problem $\FindMinNec[\CLS,\preordereq]$.
First, note that any two $\preordereq$-minimal global necessary reasons have the same models---otherwise their conjunction would also be a global necessary reason (strictly) preceding both of them \wrt $\preordereq$. 
Consider a classifier $\M \in \CLS$, a class $c \in \{0,1\}$, and a condition $\phi$. To decide whether $\phi$ is a $\preordereq$-minimal global necessary reason for $c$ w.r.t.\ $\M$, it is enough to compute a $\preordereq$-minimal global necessary reason $\psi$ using the oracle for $\FindMinNec[\CLS,\preordereq]$, and then verify that $\phi$ and $\psi$ have the same models by checking that they entail ($\models$) exactly the same set of literals. The latter can be done in polynomial time since there are $O(n^2)$ literals to check, and by the fact that entailment of a literal can be checked in logarithmic space, by \cref{thm:entail-complexity}.

\medskip
\noindent
\textbf{Upper Bounds.} We show that the problem $\FindMinNec[\CLS,\preordereq]$ can be solved in polynomial time using an oracle for $\IsNec[\CLS]$. Consider a classifier $\M \in \CLS$, and a class $c \in \{0,1\}$. To compute a $\preordereq$-minimal global necessary reason for $c$ w.r.t.\ $\M$, perform the following steps: (1) set $i=1$, and let $\phi_1=\top$; (2) if there is a literal $\ell$ such that (i) $\phi_i\not\models\ell$ and (ii) $\phi_i\wedge\ell$ is a global necessary reason, then (3) set $\phi_{i+1}=\phi_i\wedge\ell$, increment $i$, and go back to step (2), otherwise output $\phi_i$. Note that (ii) can be checked by using the oracle for $\IsNec[\CLS]$, the number of oracle calls is polynomial since there are $O(n^2)$ literals to consider, and the procedure is correct by \cref{lem:characterization}.

\medskip
From the above discussions we conclude that for each preorder ${\preordereq} {} \in \{\le, \subseteq\}$:
  \begin{itemize}
      \item $\FindMinNec[\CLS,\preordereq]$, with $\CLS \in \{\PERCL,\BDDCL\}$, can be solved efficiently, i.e., in polynomial time, since $\IsNec[\PERCL]$ and $\IsNec[\BDDCL]$ are in $\LOGSPACE$ and $\NL$, respectively.
      \item $\FindMinNec[\MLPCL,\preordereq]$ can be solved in polynomial time with a polynomial number of calls to a $\co\NP$ oracle, since the problem $\IsNec[\MLPCL]$ is in $\co\NP$. Moreover, the above upper bound cannot be significantly improved (i.e., reduced to polynomial time), unless $\PTIME = \NP$, since the problem $\IsMinNec[\MLPCL,\preordereq]$ is $\DP\hard$, which implies that $\FindMinNec[\MLPCL,\preordereq]$ is $\DP\hard$ as well.
  \end{itemize}
}%rev

\section{Related Work}\label{sec:related}

Explaining \emph{global} classifiers' decisions has been considered in previous work in different forms.

\citet{IgnatievNM19} proposed two notions of global explanations, providing also a relationship between them.
Specifically, an \emph{absolute explanation} for a class $\class$ \wrt a classifier $\M$ is a subset-minimal set $\mathcal{E}$ of feature-value pairs $(f_i,c_i)$, where no feature occurs twice in $\mathcal{E}$, such that every instance $\x$ matching $\mathcal{E}$ (i.e., the instance has value $c_i$ on feature $f_i$, for every feature $f_i$ in $\mathcal{E}$) is such that $\M(\x)=\class$.
A \emph{counterexample} for a class $\class$ \wrt a classifier $\M$ is a subset-minimal set $\mathcal{E}$ of feature-value pairs $(f_i,c_i)$, where no feature occurs twice in $\mathcal{E}$, such that every instance $\x$ matching $\mathcal{E}$ is such that $\M(\x)\neq \class$.
Thus, the ``negation'' of $\mathcal{E}$ can be seen as a \glob necessary reason for $\class$, whose form is a disjunction of literals though. %Formally, if we consider the disjunction $\phi$ of literals of the form $\bigvee_{(f_i,c_i) \in \mathcal{E}} (v_i \neq c_i)$, we have that for every instance $\x$, $\M(\x) = c$ implies that $\x$ satisfies $\phi$.
%However, in this paper, a global necessary reason says that if a classifier $\M$ assigns a class $\class$, then certain conditions (i.e., literals) are \emph{jointly} met. Hence, (minimal) global necessary reasons are expressed as \emph{conjunctions} of literals, as it is customary in the literature (see below), rather than disjunctions.
% Conjunctions are often used to express explanations in the literature (see below), as they allow to express an explanation as a to implications that can be naturally read; in our case, a global necessary reason says that if a classifier $\M$ assigns a class $\class$, then certain conditions (i.e., literals) are \emph{jointly} met.
%Our notion of explanation and the two aforementioned ones adopt two different (somehow complementary) ways of explaining models' decisions, in that we provide \emph{necessary} conditions for a class $\class$ while the two notions of \cite{IgnatievNM19} provide \emph{sufficient} reasons for $\class$ (or a class different from $\class$).
Our explanations are expressed in terms of conjunctions, which is a widely adopted form---most of the work discussed in this section employs this form.
\rev{Furthermore, our language goes beyond simple conjunctions of feature-value pairs, as already discussed in \cref{sec:gloabl-necessary-reason}.}
%Furthermore, our language goes beyond simple conjunctions of feature-value pairs, as it additionally allows equalities and inequalities between feature variables, enabling \emph{relationships} among features to be expressed.
% while the explanations of \cite{IgnatievNM19} are restricted to equalities between a variable and a value.
%Indeed, (in)equalities between variables allow for a controlled form of disjunction, such as specifying alternatives between values assigned to features%, which cannot be done with the language of \cite{IgnatievNM19}
%---e.g., $\var_1=\var_2$ expresses the two alternatives $\var_1=\var_2=0$ and $\var_1=\var_2=1$. 
Importantly, we deepen the study of global necessary reasons by providing a complexity analysis for concrete families of classifiers and consider different minimality criteria.

\rev{
\citet{BassanAK24} introduced a notion of ``global necessary reason'' where both concepts of necessity and globality have fundamentally different meanings.
    
In \cite{BassanAK24}, \emph{necessity} applies to a \emph{single} feature $i$ of an instance $\x$, and means that $i$ \emph{must belong to all local sufficient} reasons for $\x$.\footnote{A local sufficient reason for $\x$ is a set $S$ of features such that when features in $S$ are fixed to their corresponding values in $\x$, the class remains the one assigned to $\x$, regardless of the values assigned to the other features.} Equivalently,
a feature $i$ is \emph{locally necessary} for $\x$ if changing the value of $i$ in $\x$ changes the class that the classifiers assigns to $\x$.
Similarly, in \cite{BassanAK24}, \emph{globality} applies to a \emph{single} feature $i$, and means that $i$ is a local necessary reason for \emph{all} instances.
Thus, a global necessary reason, according to the definition proposed by~\cite{BassanAK24}, focuses only on whether changing the value of a feature changes the class, and it does not take any specific class of interest into account, while doing so.
    
In contrast, our notion of ``necessity'' is rooted in logic: in the implication $A\to B$, $B$ is necessary for $A$ to hold, because if $B$ does not hold, $A$ cannot hold either. Accordingly, for us, a formula $\phi$ is a necessary reason for a class $\class$ w.r.t.\ a classifier $\M$ if, for all instances $\x$, $(\M(\x)=\class)\to(\x\models\phi)$, that is, if $\x$ does not satisfy $\phi$, then $\M$ does not classify $\x$ with $c$.
Finally, in our case, by ``globality'' we mean the ability of an explanation (in our case, a condition) to \emph{best approximate via a logic formula} the family of instances classified by the classifier with a specific class of interest.
Employing logic formulas to characterize conditions that are necessary (and/or sufficient) for classifiers' decisions has been adopted by different works---e.g., see the recent tutorial by 
\citet{Darwiche23}.
}

\citet{DarwicheH20} proposed several notions of explanations, including  \emph{necessary reasons}, which differ significantly form ours, in that the former are ``locally'' defined \wrt a specific instance.

The works discussed in the following focus on different forms of \emph{sufficient} properties for a classifier to assign a certain class, and thus differ from our approach in that we adopt \emph{necessary} properties.
Furthermore, the works below %do not provide formal guarantees on the explanation and 
do not provide a complexity analysis.

\citet{WangRDLKM17} learn a \emph{rule set model}, which is a set of rules, where each rule is a conjunction of conditions. Rule set models predict that an observation is in the positive class when at least one of the rules is satisfied. Given a set of records, each associated with a positive or negative class, the goal is to find a rule set model that covers mostly the positive class, but little of the negative class, while keeping the rule set model a small set of short rules. 

\citet{SetzuGMT19} propose an approach to derive a global explanation from local ones, where both are expressed as decision rules.
Specifically, the global explanation is a subset of the local explanation decision rules and is obtained by selecting only the local explanations with a score above a given threshold. 
The score of a rule is defined so as to measure its generality and accuracy.
\citet{SetzuGMTPG21} also construct a global explanation from local ones (again, both expressed as decision rules), but the global explanation is computed by iteratively generalizing the local explanations into global ones by hierarchically aggregating them---thus, the approach goes beyond the rule selection method by \citet{SetzuGMT19}.
    %The goal is to learn accurate yet simple interpretable models to emulate the given black box, and, if possible, replace it entirely.

\citet{RawalL20} propose general \emph{recourse rules}, which are if-then statements saying which changes should be applied under certain conditions to obtain a desired prediction. 
A set of recourse rules is associated with a filter specifying which subgroup of instances the rules apply to.

%\cite{abs-1811-12615} is much less relevant, they construct a global model for some financial data released by FICO.

%Logical conditions have been employed for explaining classifiers' decisions on specific instances---e.g., see the recent tutorial by \cite{Darwiche23}.
%This kind of explanations are often named \emph{local}.

We also mention that there has been an extensive body of work on \emph{local} explanations, %e.g., \cite{DarwicheH20}.
whose goal is clearly different from the one of this paper, as we aim at providing an explanation of the overall behavior of a classifier, without referring to any specific instance.
%Different kinds of such notions have been proposed in \cite{DarwicheH20}.
Among approaches on local explanations, \citet{RudinS23,GengSS22} proposed notions with some sort of ``global consistency'', meaning that a local explanation must be coherent with the prediction of  a restricted set of instances.
%Also, \citet{DarwicheH20} proposed several notions of explanations, including  \emph{necessary reasons}, which differ significantly form ours, in that the former are ``locally'' defined \wrt a specific instance.

%\cite{GengSS22} address the following problem: given an instance $x_i$ with an ``underised'' outcome, compute one (or several) globally consistent, interpretable rules, defined as follows. Rules \emph{are defined relative to the specific instance $x_i$} (because the paper is about local explanations) and are \emph{conjunctions of literals} of the form $F_j \leq f_{ij}$ or $F_j \geq f_{ij}$, where $F_j$ plays the role of a variable and $f_{ij}$ is the value of $x_i$ on the $j$-th feature.
%Rules must contain a minimum number of literals (Interpretability) and must be such that all instances satisfying the rules have an ``undesired'' outcome (Global Consistency).

%\cite{RudinS23} provide local \emph{summary-explanations} that must be globally-consistent in that an explanation must be coherent with the prediction of  a restricted set of instances.

%\textbf{\emph{Logic-based} explanations:}
%\begin{itemize}
%\item
We conclude by mentioning that there has been work that focuses on defining more abstract explainability frameworks which can be specialized to define concrete explanability notions. 
The prominent example of this line of research is the work by \citet{Arenas21}, where they define a formal language, based on first-order logic, to express different ``explainability queries'' \Wrt a given classifier. 
The language, dubbed FOIL, is flexible enough to encode different notions of explanations (both local and global) from the literature. 
However, a key limitation of FOIL is the inability to reason about individual features of the instances. 
The latter means that FOIL cannot encode explanations such as the \glob necessary reasons of our paper, which strongly rely on the ability to relate individual features to specific values or to other features, as a mean to explain the behaviour of a classifier.
%\end{itemize}

\section{Summary and Outlook}\label{sec:conclusion}

We have considered the problem of explaining classifiers' behavior in terms of (minimal) global necessary reasons.
We have provided a complete picture of the complexity of relevant problems for different families of classifiers and minimality criteria, which was lacking in the literature.

There are several natural directions for future work.
%First, it would be interesting to consider other (more expressive) languages to express conditions, and analyze their impact on the computational complexity. %---e.g., by allowing the use of disjunction.

First, our study might be extended along different dimensions, such as by considering other (more expressive) languages to express conditions, different minimality criteria, and other families of classifiers, analyzing their impact on the computational complexity. %---e.g., by allowing the use of disjunction.

Also, a natural next step is the development of algorithms to compute (minimal) global necessary reasons (along with their experimental evaluation)---in this respect, our results provide valuable insights into how to tackle their development, as already discussed in the introduction.

Another avenue for future work is to carry out complexity analyses for other important notions proposed in the literature where a systematic study is still lacking, such as for \emph{global sufficient reasons}, 
\rev{which are the logical dual of global necessary reasons: precisely, a global sufficient reason for a class $\class$ w.r.t.\ a model $\M$ is a formula $\phi$ such that, for each instance $\x$, $(\x\models\phi)\to(\M(\x)=\class)$.}

\clearpage

\begin{appendices}
%%% APPENDIX SECTIONS
%\section{Proofs of Section~\ref{sec:necessary}}
\section{Detailed Proofs}

In this section we provide the missing proofs for the claims stated in Section~\ref{sec:necessary}.

\TheoShortcutCardinalityMinimality*

\begin{proof}
    $(\Rightarrow)$.
    We first show that if $\phi$ is not $\preordereq$\nbdash-minimal, then there is a literal $\ell$ such that $\phi \not \models \ell$ and $\ell$ is a \glob necessary reason.
    
    Assume that $\psi$ is a \glob necessary reason for $\class$ \wrt $\M$ such that $\psi \preorder \phi$, and consider the \condition $\Gamma = \phi \wedge \psi$.
	
	We start by proving that:
    (i)~$\Gamma$ is also a \glob necessary reason for $\class$ \wrt $\M$, and 
    (ii)~$\ModelsOf{\Gamma} \subsetneq \ModelsOf{\phi}$.
    Regarding (i), since both $\phi$ and $\psi$ are \glob necessary reasons for $\class$ \wrt $\M$,
    $\ModelsOf{\M,\class} \subseteq \ModelsOf{\phi}$ and $\ModelsOf{\M,\class} \subseteq \ModelsOf{\psi}$.
    By this, we have that $\ModelsOf{\M,\class} \subseteq \ModelsOf{\phi} \cap \ModelsOf{\psi}$.
    Since $\ModelsOf{\Gamma} = \ModelsOf{\phi \wedge \psi} = \ModelsOf{\phi} \cap \ModelsOf{\psi}$, we can conclude that $\Gamma$ is a \glob necessary reason for $\class$ \wrt $\M$.
    Regarding (ii), since $\Gamma = \phi \wedge \psi$, we have that $\ModelsOf{\Gamma} \subseteq \ModelsOf{\phi}$ and $\ModelsOf{\Gamma} \subseteq \ModelsOf{\psi}$.
    Moreover, regardless of $\preorder$ actually being $<$ or $\subsetneq$, from $\psi \preorder \phi$ it follows that $\SetSize{\ModelsOf{\psi}} < \SetSize{\ModelsOf{\phi}}$.
    The latter, together with $\ModelsOf{\Gamma} \subseteq \ModelsOf{\psi}$ and $\ModelsOf{\Gamma} \subseteq \ModelsOf{\phi}$, imply that the inclusion $\ModelsOf{\Gamma} \subseteq \ModelsOf{\phi}$ is actually strict, i.e., $\ModelsOf{\Gamma} \subsetneq \ModelsOf{\phi}$.
	
	We now claim that %, since $\ModelsOf{\Phi} \subsetneq \ModelsOf{\phi}$,
    there must be a literal $\ell$ in $\psi$ such that 
	$\phi \not \models \ell$.
    Indeed, if this were not the case, i.e., if every literal $\ell$ in $\psi$ were such that $\phi \models \ell$, then it would mean that $\ModelsOf{\Gamma} = \ModelsOf{\phi \wedge \psi} = \ModelsOf{\phi}$, which cannot be the case, since we have proved that $\ModelsOf{\Gamma} \subsetneq \ModelsOf{\phi}$ (see above).
    
	Consider now such a literal $\ell$.
    Since $\ell$ belongs to $\psi$, and since $\Gamma = \phi \wedge \psi$, we have that $\ModelsOf{\Gamma} \subseteq \ModelsOf{\ell}$.
	Since we proved that $\Gamma$ is a \glob necessary reason for $\class$ \wrt $\M$, then $\ell$ is also a \glob necessary reason for $\class$ \wrt $\M$.
 
    Thus, $\ell$ is a literal such that $\phi \not \models \ell$ and $\ell$ is a \glob necessary reason.

    %%%

    $(\Leftarrow)$.
    We now show that if there is a literal $\ell$ such that $\phi \not \models \ell$ and $\ell$ is a \glob necessary reason, then $\phi$ is not $\preordereq$\nbdash-minimal.
    
    Since $\phi$ and $\ell$ are both \glob necessary reasons for $\class$ \wrt $\M$, we have that $\ModelsOf{\M,\class} \subseteq \ModelsOf{\phi}$ and $\ModelsOf{\M,\class} \subseteq \ModelsOf{\ell}$.
    Because $\ModelsOf{\phi \wedge \ell} = \ModelsOf{\phi} \cap \ModelsOf{\ell}$, it must be the case that $\ModelsOf{\M,\class} \subseteq \ModelsOf{\phi \wedge \ell}$, which implies that $\phi \wedge \ell$ is a \glob necessary reason for $\class$ \wrt $\M$.
	Moreover, since $\phi \not \models \ell$, there is an instance in $\ModelsOf{\phi}$ that does not belong to $\ModelsOf{\ell}$.
    The latter, together with the facts that $\ModelsOf{\phi \wedge \ell} \subseteq \ModelsOf{\phi}$ and $\ModelsOf{\phi \wedge \ell} = \ModelsOf{\phi} \cap \ModelsOf{\ell}$, imply that $\ModelsOf{\phi \wedge \ell} \subsetneq \ModelsOf{\phi}$.
    Hence, we conclude that $\psi = \phi \wedge \ell$ is a \glob necessary reason for $\class$ \wrt $\M$ such that $\psi \subsetneq \phi$, by which $\phi$ is \emph{not} $\subseteq$\nbdash-minimal.
    Observe that $\ModelsOf{\psi} \subsetneq \ModelsOf{\phi}$ implies $\SetSize{\ModelsOf{\psi}} < \SetSize{\ModelsOf{\phi}}$, and hence $\phi$ is also \emph{not} $\leq$\nbdash-minimal.
\end{proof}

\TheoComplexityCheckingEntailment*

\begin{proof}
Since $\LOGSPACE$ is closed under complement, it suffices to show that checking $\phi \models \ell$ is in $\LOGSPACE$;
the fact that deciding $\phi \not \models \ell$ is in $\LOGSPACE$ will follow immediately.

The general strategy of the proof is to show that a \condition $\phi$, together with a literal $\ell$, can equivalently be rewritten as a propositional formula $\psi$ in 2CNF of a particular shape, so that $\phi \models \ell$ iff $\psi$ is unsatisfiable.
We then argue that, although checking whether a 2CNF formula is unsatisfiable is $\NL\complete$ in general, for 2CNF formulas of the particular shape we build the problem becomes solvable in logspace.

\smallskip

Remember that a 2CNF formula $\psi$ is a Boolean formula, over a set of Boolean variables $X$, of the form $C_1 \wedge \cdots \wedge C_m$, where each $C_i$ is a \emph{clause}.
A clause is an expression of the form $x$ or $\neg x$, for $x \in X$, or of the form $(l_1 \vee l_2)$, where $l_1$ and $l_2$ can be either variables from $X$, or their negation.
The disjunction $(l_1 \vee l_2)$ can be written as the implication $(\neg l_1 \rightarrow l_2)$.
The formula $\psi$ is satisfiable iff there is a way to assign to its variables either $1$ or $0$ so that the formula evaluates to $1$, under the standard semantics of Boolean operators.
	
\smallskip

Let $\ell$ be a literal from $\lang{n}$.
We can convert $\ell$ to a formula $\psi_\ell$ which is either a single clause or a conjunction of two clauses as follows.
If $\ell$ is of the form $v_i = 1$ (resp., $v_i = 0$, $v_i \neq 0$, $v_i \neq 1$) then $\psi_\ell$ is $x_i$ (resp., $\neg x_i$, $x_i$, $\neg x_i$).
If $\ell$ is of the form $v_i = v_j$ (resp., $v_i \neq v_j$), $\psi_\ell$ is the conjunction $(x_i \rightarrow x_j) \wedge (x_j \rightarrow x_i)$ (resp., $(\neg x_i \rightarrow x_j) \wedge (x_j \rightarrow \neg x_i)$).
	
Consider now a literal $\ell$ and a condition $\phi$ from $\lang{n}$, with $\phi = \bigwedge_{i=1}^{k} \ell_i$.
It is not difficult to verify that the 2CNF formula
$$ \psi =  \psi_{\ell_1} \wedge \cdots \wedge \psi_{\ell_k} \wedge \neg \psi_\ell$$
is unsatisfiable iff $\phi \models \ell$;
here we assume that $\neg \psi_\ell$ is written in 2CNF by distributing the operator $\neg$ over the terms in $\psi_\ell$, and then distributing over $\vee$.
Hence, checking whether $\phi \models \ell$ is tantamount to checking whether $\psi$ is unsatisfiable.

It is well-known that the latter check can be performed by checking whether a path exists between a certain pair of nodes of the so-called \emph{implication graph} of $\psi$, which is a directed graph that can be built from $\psi$ in logspace (see, e.g., \citealp{Aspvall1979}).
Our observation is that for each implication $(l_1 \rightarrow l_2)$ in the formula $\psi$, we always have in $\psi$ a corresponding ``symmetric'' implication $(l_2 \rightarrow l_1)$.
This implies that the implication graph of $\psi$ is actually \emph{undirected}, and it is known that reachability over undirected graphs can be decided in logspace~\cite{Reingold08}.

To conclude, observe that $\psi$ can be obtained in logspace from $\phi$ and $\ell$, the implication graph of $\psi$ can be built in logspace, and reachability over undirected graphs is in $\LOGSPACE$. 
Since a fixed number of logspace procedures can be composed to a single logspace procedure (see, e.g., \citealp{Arora09}), $\phi \models \ell$ can be checked in logspace as well.
\end{proof}

\ThmComplexityIsNecessaryPerceptron*

\begin{proof}
As \LOGSPACE is closed under complement, we can focus on the complement problem:
given a perceptron $\PER = (\w,b)$, a class $\class$, and \condition $\phi$, is $\phi$ \emph{not} a \glob necessary reason? % for $\class$ \wrt $\M$.
%We show it is in $\LOGSPACE$ which will immediately imply that $\IsNec[\PERCL]$ is in $\LOGSPACE$ as well, since $\LOGSPACE$ is closed under complement.

By definition of \glob necessary reason, deciding the above problem means deciding whether there exists an instance $\x$ such that $\PER(\x) = c$ and $\x \not \models \phi$.
We show that this test can be done in logspace when $c = 1$;
the case when $c = 0$ is analogous (more details at the end of the proof).

% \from{c}{all}{mi sembra che l'unica differenza sia come costruire un'istanza che serve a verificare se un'espressione della forma ... $<$ 0 è soddisfacibile--variabili con pesi positivi a 0 e le altre a 1.
% Si potrebbe mettere alla fine della prova il seguente testo:

% The case of $c=0$ is analogous, with the only difference being that we need to verify the satisfiability of the inequality $\x \cdot \w + b < 0$, as well as inequalities derived from it as discussed above.
% In all cases, this can be done by first building a candidate solution that takes value 0 on features having a non-negative weight in the inequality,
% and takes value 1 on features having a negative weight in the inequality; then, we test whether the candidate solution indeed satisfies the inequality.
% %$\x^* = \tup{x^*_1,\dots,x^*_n}$, where, for each $i$ with $1 \leq i \leq n$, $x^*_i = 0$ if $w_i \ge 0$, and $x^*_i = 1$ otherwise, and by then testing whether the candidate solution indeed satisfies the inequality.
% }

Two different approaches need to be followed depending on whether $\ModelsOf{\phi} = \emptyset$ or not.
So, as a first step of a procedure for $\IsNec[\PERCL]$, we check whether $\ModelsOf{\phi} = \emptyset$.
This can be done by testing, e.g., whether $\phi \models \linebreak[0] (1=0)$, which is feasible in logspace by \cref{thm:entail-complexity}.

If $\ModelsOf{\phi} = \emptyset$, then it suffices to verify that there is an instance $\x$ such that $\PER(\x) = 1$, which means checking whether $\x \cdot \w + b \ge 0$ has a solution.
This can be verified by first building the candidate solution $\x^* = \tup{x^*_1,\dots,x^*_n}$, where, for each $i$ with $1 \leq i \leq n$, $x^*_i = 1$ if $w_i \ge 0$, and $x^*_i = 0$ otherwise, and by then testing that $\x^* \cdot \w + b \ge 0$.
The latter is feasible in logspace, as addition and multiplication can be carried out in logspace \cite{HesseAB2002}.

Assume now $\ModelsOf{\phi} \neq \emptyset$.
Remember that $\phi$ is not a \glob necessary reason iff there exists an instance $\tilde \x$ such that $\PER(\tilde \x) = 1$ \emph{and} $\tilde \x \not \models \phi$.
Observe that $\tilde \x$ does not satisfy $\phi$ iff there is a literal $\ell$ of $\phi$ such that $\tilde \x \not \models \ell$, and hence iff $\tilde \x \models \bar\ell$, where $\bar\ell$ is the literal obtained from $\ell$ by substituting the equality with the inequality, and the inequality with the equality.
Therefore, there is an instance $\x$ such that $\PER(\x) = 1$ and $\x \not \models \phi$ iff there is an instance $\x$ such that $\x \cdot \w + b \ge 0$ and, at the same time, $\x \models \bar\ell$ for \emph{some} literal $\ell$ of $\phi$.
By this, to disprove that $\phi$ is a \glob necessary reason, it suffices to check, for each literal $\ell$ of $\phi$ in turn, whether $\x \cdot \w + b \ge 0$, which expanded is
\begin{equation}\label{eq:perceptron}
x_1 \cdot w_1 + \cdots + x_n \cdot w_n + b \ge 0,
\end{equation}
has a solution complying with $\bar\ell$.
We show how to do this, depending on the shape of the literal $\bar\ell = (t \genericrelation u)$. %---notice $\bar\ell$ and not just $\ell$.

If both $t$ and $u$ are Boolean variables, say $v_i$ and $v_j$, respectively (assume w.l.o.g.\ that $i < j$), and $\genericrelation$ is the equality relation, then we first replace $x_j$ in \cref{eq:perceptron} with $x_i$, to obtain the following inequality with $n-1$ arithmetic variables:
\begin{multline*}
x_1 \cdot w_1 + \cdots + x_i \cdot (w_i+ w_j) + \cdots + x_{j-1} \cdot w_{j-1} +
x_{j+1} \cdot w_{j+1} + \cdots + x_n \cdot w_n + b \ge 0.
\end{multline*}
Next, we test whether the latter inequality admits a solution, as it was done for the case above when $\ModelsOf{\phi} = \emptyset$.
This test, as already discussed, can be done in logspace.
Observe that a solution of the latter inequality must comply with $v_i = v_j$, by the very definition of the inequality.

On the other hand, if both $t$ and $u$ are again Boolean variables, say $v_i$ and $v_j$, respectively, but $\genericrelation$ is the inequality relation, then we first replace $x_j$ in \cref{eq:perceptron} with $1 - x_i$, group the terms similarly to what we did for the previous case, and then solve the inequality in logspace as already discussed.

Finally, if one of the two terms, say $t$, is a Boolean variable $v_i$ and $u$ is a constant $d \in \Set{0,1}$, then we replace $x_i$ in \cref{eq:perceptron} with the constant $d$ if $\genericrelation$ is the equality, or with $1 - d$ otherwise.
We then group the terms as done before, and check the existence of a solution for the inequality in logspace.

Observe that there is no actual need to consider the case when $t$ and $u$ are both constants, because, as we are assuming $\ModelsOf{\phi} \neq \emptyset$, it cannot be the case that the literal is a contradiction, such as $1 = 0$.
On the other hand, if the literal is a tautology, such as $1 = 1$, this literal can actually be skipped.

We conclude by discussing the case when $c = 0$. The only difference is that we need to verify the satisfiability of the inequality $\x \cdot \w + b < 0$, as well as inequalities derived from the latter as discussed above.
In all cases, this can be done by first building a candidate solution that takes value 0 on features having a non-negative weight in the inequality,
and takes value 1 on features having a negative weight in the inequality; then, we test whether the candidate solution indeed satisfies the inequality.
\end{proof}

\TheoNotNecessaryBDDCharacterization*

\emph{Proof.}
Recall that $\ModelsOf{\BDD,\class}$ is the set of all instances $\x$ such that $\BDD(\x) = \class$.
By definition of BDDs, we have that the set of all instances that $\BDD$ classifies as $\class$ coincides with the set of all instances that agree with any of the paths in $\BDD$ leading to a node labeled with $\class$;
by this,
$\ModelsOf{\BDD,\class} = \bigcup_{\pi \in \paths{\BDD}{\class}} \ModelsOf{\phi_\pi}$.

Hence, $\phi$ is \emph{not} a \glob necessary reason for $\class$ \wrt $\BDD$ iff there exists a path $\pi \in \paths{\BDD}{\class}$ such that $\ModelsOf{\phi_\pi} \not \subseteq \ModelsOf{\phi}$, which is $\phi_\pi \not \models \phi$.
Observe that the latter is equivalent to $\closure{\phi_\pi} \not \supseteq \closure{\phi}$, where, for a \condition $\psi$, $\closure{\psi}$ is the set of all literals $\ell$ such that $\psi \models \ell$.
Hence, $\phi$ is \emph{not} a \glob necessary reason for $\class$ \wrt $\BDD$ iff there exists a path $\pi \in \paths{\BDD}{\class}$ and a literal $\ell$ such that $\phi_\pi \not \models \ell$ and $\phi \models \ell$, as needed.
\qed

\medskip

\TheoComplexityIsNecessaryBDD*

\begin{proof}
\emph{(Membership).}
Let $\BDD = (V,E,\lambda,\eta)$ be a BDD, let $\class$ be a class, and let $\phi$ be a condition.
As already discussed in the main body of the paper, the fact that the procedure \NotNecessaryBDDAlgo{$\BDD,\class,\phi$}, reported as \cref{alg:not-necessary-bdd}, decides whether $\phi$ is \emph{not} a \glob necessary reason for $\class$ \wrt $\BDD$, and executes in nondeterministic logspace, proves that the complement problem $\IsNotNec[\BDDCL]$ is in \NL.
Because \NL is closed under complement, we can conclude that $\IsNec[\BDDCL]$ is in \NL as well.

\smallbreak

\emph{(Hardness).}
%The upper bound has been shown in the main body of the paper. We are only left to prove the lower bound.
We show that the complement of $\IsNec[\BDDCL]$ is $\NL\hard$, via a logspace reduction from \UniformRootedAcyclicReach, which we formally define in the following:

\begin{center}
\fbox{
\begin{tabular}{rl}
{\small Problem} : & $\UniformRootedAcyclicReach$ \\
{\small Input} : & A uniform RDAG $G$ with root $s$, \\
& a sink $t \neq s$ of $G$, and \\
& an outgoing edge $e$ of $s$ in $G$ \\
{\small Question} : &  is there a path from $s$ to $t$ in $G$ \\
& that traverses $e$?
\end{tabular}
}
\end{center}

We first show that $\UniformRootedAcyclicReach$ is $\NL\hard$. This is shown by reducing the classical reachability problem $\DagReach$ over directed acyclic graphs which, given a DAG $G=(V,E)$ and two nodes $s,t \in V$, asks whether $s$ reaches $t$ in $G$.

\begin{lemma}\label{lem:uniform-dag-hard}
$\UniformRootedAcyclicReach$ is $\NL\hard$.
\end{lemma}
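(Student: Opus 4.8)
The plan is to reduce from $\DagReach$, the standard $\NL$\hard{} reachability problem on DAGs: given a DAG $G=(V,E)$ and nodes $s,t$, decide whether $s$ reaches $t$. I will transform such an instance, in logspace, into a uniform RDAG $G'$ with root $r$, a sink $t'\neq r$, and a designated outgoing edge $e$ of $r$, so that a path from $r$ to $t'$ traversing $e$ exists iff $s$ reaches $t$ in $G$. The guiding observation is that in any DAG the root has in-degree $0$, so it is visited only at the very start of any path; hence a root-to-sink path traverses a fixed out-edge $e=(r,x)$ iff its first edge is $e$, which holds iff the head $x$ reaches the sink. Thus the task reduces to building a uniform RDAG in which a designated node reaches a designated sink exactly when $s$ reaches $t$.

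The construction proceeds by three independent ``repairs'' of $G$. First, to enforce out-degree exactly $2$ (uniformity), I replace each node $v$ with successor list $w_1,\dots,w_d$ by a gadget: if $d=0$ leave $v$ a sink; if $d=1$ add a second edge from $v$ to a fresh dead-end sink; if $d\ge 2$ expand $v$ into a short chain $c_1=v,c_2,\dots,c_{d-1}$ of out-degree-$2$ nodes that fans out to $w_1,\dots,w_d$. This preserves reachability among the original nodes exactly. Second, to make the target a sink, I add a fresh sink $t'$ whose only incoming edge comes from $t$, so that $s$ reaches $t$ iff $s$ reaches $t'$. Third, I add a fresh root $r$ with one out-edge $e=(r,s)$ (the designated edge) and a second out-edge leading into a fan-out gadget $H$ (a binarized chain) that supplies an incoming edge to every source of $G$ other than $s$; if $G$ has no other source, this second edge simply goes to a dead-end.

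I would then verify the structural properties, all of which are immediate from the construction: $G'$ is acyclic (every new edge emanates from the fresh nodes $r$ and $H$, which nothing points back to), every node except $r$ acquires an incoming edge (so $r$ is the unique source, making $G'$ rooted), every non-sink has out-degree $2$, and $t'$ is a sink with $e$ an out-edge of $r$. The whole transformation is logspace: the binarization gadgets are produced locally from each node's successor list in canonical (index) order, identifying the sources of $G$ only requires scanning for nodes with no incoming edge, and fresh node names can be encoded deterministically from indices. Correctness is the chain of equivalences: $s$ reaches $t$ in $G$ iff $s$ reaches $t$ in the binarized graph iff $s$ reaches $t'$ in $G'$ iff there is an $r$-to-$t'$ path through $e$.

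The step I expect to be the main obstacle is reconciling the rooted (unique-source) requirement with correctness, since $G$ may have many sources and naively connecting them to $r$ risks creating spurious paths to $t'$. The key that unblocks this is precisely the observation above: because the query concerns paths through $e$ --- equivalently, reachability from $s$, the head of $e$ --- and not reachability from $r$ in general, I am free to route $r$'s second out-edge through the gadget $H$ that absorbs all other sources. Indeed, $H$ (and hence every source it feeds) is reachable only from $r$, and nothing reaches $r$, so $s$ cannot reach $H$; therefore the edges added to fix the source count create no new path from $s$ to $t'$, and correctness is unharmed.
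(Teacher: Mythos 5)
Your proof is correct and follows essentially the same route as the paper's: a logspace reduction from $\DagReach$ that adds a fresh root whose designated out-edge points to $s$, feeds every other source of $G$ from the root (your chain gadget $H$ is exactly what the paper's root fan-out becomes after its own binarization step), makes the target a sink, enforces out-degree $0$ or $2$ by local gadgets, and derives correctness from the observation that $s$ cannot reach any newly added node, so the added edges create no spurious $s$-to-target paths. One minor caution: your three repairs are not order-independent as claimed --- the edge $(t,t')$ must be folded into $t$'s successor list before binarization (i.e., binarize last), otherwise $t$ ends up with out-degree $3$ (or $1$); the paper sidesteps this by deleting $t$'s outgoing edges and using $t$ itself as the designated sink.
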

\begin{proof}
    Let $G=(V,E)$ be a DAG, and let $s,t \in V$. We show how to construct a uniform RDAG $G'=(V',E')$ with root $s'$, a sink $t' \neq s'$, and an edge $e$ of $G'$, such that there is a path from $s$ to $t$ in $G$ iff there is a path from $s'$ to $t'$ in $G'$ that traverses $e$.
    The graph $G'$ is obtained from $G$ a follows.
    
    First, remove all outgoing edges of $t$, and all incoming edges of $s$ from $G$. Then, add a new node $s'$ to the obtained graph and add an edge from $s'$ to each node different from $s'$ left in the graph without incoming edges (note that $(s',s)$ will be such an edge). Finally, for each node $u$ left in the obtained graph with only one outgoing edge, add a new node $u^*$ and add the edge $(u,u^*)$, and for each node $u$ left in the obtained graph with more than 2 outgoing edges $(u,v_1),\ldots,(u,v_n)$, add $n-2$ new nodes $u_1^*,\ldots,u_{n-2}^*$, and replace the edges $(u,v_1),\ldots,(u,v_n)$ with the edges $(u^*_i,v_{i+1}), (u^*_i,u_{i+1}^*)$, for each $i \in \{0,\ldots,n-2\}$, where $u^*_0 = u$ and $u^*_{n-1} = v_n$. It is not difficult to verify that $G'$ remains acyclic, every node of $G'$ has either 0 or exactly 2 outgoing edges, i.e., it is uniform, $t$ is a sink, and $s'$ is the only node without incoming edges, and thus $G'$ is rooted with $s'$ as its root.
    Hence, $G'$ is a uniform RDAG.

    Consider now the edge $e=(s',s)$, which belongs to $G'$ by construction, and the sink $t' = t$. We show that there exists a path from $s$ to $t$ in $G$ iff there exists a path from the root $s'$ of $G'$ to $t'$ in $G'$ that traverses the edge $e$.

    Assume there exists a path from $s$ to $t$ in $G$. Note that such a path never traverses an incoming edge of $s$ or an outgoing edge of $t$, otherwise the path visits either $s$ or $t$ more than once, which implies $G$ has a cycle, while $G$ is a DAG. Hence, even after removing such edges from $G$ in the construction of $G'$, we still have a path from $s' = s$ to $t' = t$ in $G'$, possibly going via the auxiliary nodes of the form $u^*$ and $u^*_i$.

    Assume now that there is a path from $s'$ to $t'$ in $G'$ going via the edge $e=(s',s)$. Then, there is a path $\pi$ from $s$ to $t'$ in $G'$. Since $t' = t$, by construction of $G'$, we can easily extract from $\pi$ a path from $s$ to $t$ in $G$ by simply removing the auxiliary nodes of the form $u^*_i$ in $\pi$.

    \medskip
    We are only left to argue that the graph $G'$, the sink $t'= t$, and the edge $e = (s',s)$ can be constructed in logarithmic space. It suffices to argue that $G'$ can be constructed in logarithmic space.
    This can be easily done in two phases: the first phase outputs the set of nodes of $G'$, and the second outputs the set of edges of $G'$. For the set of nodes, it suffices to first output $s'$, and then iterate over each node $u \in V \cup \{s'\}$, one at the time, and at each iteration do the following: first output $u$, then, if $u = s'$, also compute the number $n$ of nodes of $G$ having no incoming edges with a source \emph{different} than $t$ (this can be done with a linear scan of $G$) and output the auxiliary node $u^*$ if $n = 1$, or the auxiliary nodes $u^*_1,\ldots,u^*_{n-2}$ if $n > 2$. Otherwise, compute the number $n$ of outgoing edges of $u$ that lead to a node \emph{different} from $s$ (again, we can do it with a linear scan of $G$), and then output the only auxiliary node $u^*$ if $n = 1$, or the nodes $u^*_1,\ldots,u^*_{n-2}$, if $n > 2$.

    Regarding the second phase, the edges of $G'$ can be constructed using a similar strategy.
\end{proof}

We are now ready to show that $\IsNec[\BDDCL]$ is $\NL\hard$. We focus on the complement of the problem, i.e., for an integer $n > 0$, given an $n$-feature BDD $\BDD = (V,E,\lambda,\eta)$, a \condition $\phi \in \lang{n}$, and a class $\class \in \{0,1\}$, check whether $\phi$ is \emph{not} a \glob necessary reason for $\class$ \wrt $\BDD$.
The reduction is from $\UniformRootedAcyclicReach$.

Consider a uniform RDAG $G=(V,E)$ with root $s \in V$, a sink $t \neq s$ of $G$, and an outgoing edge $e$ of $s$. We show how to construct an $n$-feature BDD $\BDD = (V',E',\lambda,\eta)$, for some integer $n > 0$, a \condition $\phi \in \lang{n}$, and a class $\class \in \{0,1\}$, such that there is a path from $s$ to $t$ in $G$ traversing $e$ iff $\phi$ is \emph{not} a \glob necessary reason for $\class$ \wrt $\BDD$.

Assume $V = \{u_1,\ldots,u_n\}$. We consider $n$ features, and our \conditions from $\lang{n}$ will be then over the terms $\{v_i \mid 1 \leq i \leq n\} \cup \{0,1\}$. With an abuse of notation, we might directly write $u_i$ for the Boolean variable $v_i$ in our \conditions, for $1 \leq i \leq n$.
The reduction constructs the class $\class = 1$, the condition $\phi = (s = 0)$, and the BDD $\BDD=(V',E',\lambda,\eta)$ obtained from $G$ as follows.
The set of nodes of $\BDD$ is
$$V' = V \cup \{u_{\text{yes}},u_\text{no}\}.$$
That is, we add two additional nodes which will be the nodes at which an instance is classified as positive (resp., negative) by the BDD. Then, the set of edges of $\BDD$ is
$$E' = E \cup \{(u,u_\text{yes}),(u,u_\text{no}) \mid u \text{ is a sink of } G\}.$$
In other words, each sink of $G$ will have two outgoing edges in $\BDD$, one going to the node $u_\text{yes}$ and the other going to the node $u_\text{no}$; note that the only nodes without outgoing edges in $\BDD$ are $u_\text{yes}$ and $u_\text{no}$.

Moreover, the nodes $u_\text{yes}$ and $u_\text{no}$ are labeled in $\BDD$ with $1$ and $0$ respectively, while every other node $u_i \in V' \setminus \{u_\text{yes},u_\text{no}\}$, for $1 \leq i \leq n$, is labeled with $i$.

Finally, recalling that $G$ is uniform, by construction of $\BDD$, each node $u \in V' \setminus \{u_\text{yes},u_\text{no}\}$ has exactly two outgoing edges, which are labeled with $0$ and $1$ respectively, with the condition that the edge $e$ is labeled with $1$, and every edge entering the node $u_\text{yes}$ is labeled with $0$. This completes the reduction.

It is not difficult to verify that $\BDD$, $\phi$, and $\class$ can be constructed in logarithmic space. The interesting part is showing that the reduction is correct.
Assume that there exists a path $\pi$ from $s$ to $t$ in $G$ traversing the edge $e$. Since $t$ is a sink of $G$, there is an edge $(t,u_\text{yes})$ in $\BDD$, and thus the path $\pi' = \pi,u_\text{yes}$ is a path from the root of $\BDD$ to a sink of $\BDD$ labeled with $\class = 1$. Moreover, since $\pi$ traverses $e$, and $e$ is labeled with $1$ in $\BDD$, it means that the \condition $\phi_{\pi'} \models s=1$. However, $\phi \models s = 0$, and thus there exists an instance $\x \in \{0,1\}^n$ such that $\BDD(\x) = \class = 1$ and such that $\x \not \models \phi$, and thus $\phi$ is not a \glob necessary reason for $\class$ \wrt $\BDD$.

Assume instead that $\phi$ is not a \glob necessary reason for $\class$ \wrt $\BDD$. Thus, there is a path $\pi$ in $\BDD$ from its root $s$ to the node $u_\text{yes}$, such that $\ModelsOf{\phi_\pi} \not \subseteq \ModelsOf{\phi}$. That is, there is an instance $\x \in \{0,1\}^n$ such that $\x \models \phi_\pi$ but $\x \not \models \phi$. Since the instances that satisfy $\phi$ are all instances in which $s = 0$, $\x$ must necessarily be an instance for which $s = 1$. This necessarily implies that $\pi$ is a path in $\BDD$ that follows the outgoing edge $e$ of $s$ as this is the outgoing edge of $s$ labeled with $1$. Hence, by simply removing the last node from $\pi$ (i.e., $u_\text{yes}$) we obtain a path from $s$ to $t$ in $G$ that traverses $e$. This completes our proof.
\end{proof}

\TheoComplexityIsMinNecessaryBDD*

\begin{proof}
\emph{(Membership).}
$\IsMinNec[\BDDCL,\preordereq]$ can be shown in \NL by exhibiting a nondeterministic logspace machine $N$ deciding this problem.
To this aim, we refer to \cref{alg:generic-min-necessary} for $\IsMinNec[\CLS,\preordereq]$, which provides us a skeleton to delineate the working of $N$ in two phases.

Since $\IsNec[\BDDCL]$ is in \NL by \cref{thm:necessary-complexity-bdd}, and \NL is closed under complement, there are nondeterministic logspace machines $T$ and $\bar{T}$ deciding $\IsNec[\BDDCL]$ and $\IsNotNec[\BDDCL]$, respectively.

The machine $N$, initially acts like $T$ on the input $\tup{\BDD,\class,\phi}$. 
If $T$ reaches a rejecting state, $N$ halts and rejects.
This first phase is needed for $N$ to carry out the task at \cref{line:necessary-check-phi}.
If $N$ does not reject during the first phase, $N$ proceeds to the second phase and carries out the tasks from \cref{line:begin-minimal_necessary} to \cref{line:end-minimal_necessary}.
In the second phase, for each literal $\ell \in \lang{n}$, $N$ checks if $\phi \not \models \ell$, %which is feasible in (deterministic) logspace by \cref{thm:entail-complexity},
and if this is the case, $N$ behaves like $\bar{T}$ with input $\tup{\BDD,\class,\ell}$.
If $\bar{T}$ reaches a rejecting state, then $N$ rejects.
Otherwise, $N$ continues with the next literal in the for-loop.
If no more literals are available, the second phase is over, and $N$ accepts.

By its definition, $N$ is a nondeterministic logspace machine, as $T$ and $\bar{T}$ are nondeterministic logspace machines, and testing $\phi \not \models \ell$ is feasible in logspace by \cref{thm:entail-complexity}.

Furthermore, $\IsMinNec[\BDDCL,\preordereq]$ can be shown to be correctly decided by $N$.

Indeed, if $\phi$ is a $\preordereq$\nbdash-minimal \glob necessary reason for $\class$ \wrt $\BDD$, then $\phi$ is a \glob necessary reason as well, and thus, when in the first phase $N$ acts like $T$, $N$ has a way not to halt in a rejecting state.
By this, $N$ has a way to proceed to the second phase and iterate over the literals.
Since $\phi$ is also $\preordereq$\nbdash-minimal, when $N$ acts like $\bar{T}$ on each literal $\ell$, by \cref{lem:characterization}, $N$ has a way not to halt in a rejecting state.
Therefore, $N$ has a way not to halt in a rejecting state during the entire second phase, and can proceed to accept.

On the other hand, if $\phi$ is not a $\preordereq$\nbdash-minimal \glob necessary reason for $\class$ \wrt $\BDD$, then either (i)~$\phi$ is not a \glob necessary reason at all, or (ii)~$\phi$ is a \glob necessary reason but not $\preordereq$\nbdash-minimal.
In case~(i), there is no way for $N$ to avoid to end up in a rejecting state in the first phase when acting like $T$, and hence $N$ rejects.
In case~(ii), by \cref{lem:characterization}, there exists a literal $\ell$ for which there is no way for $N$ to avoid to end up in a rejecting state when acting like $\bar{T}$, and thus $N$ rejects.

\smallbreak

\emph{(Hardness).}
We provide a logspace reduction from the problem $\UniformRootedAcyclicReach$ to the complement of $\IsMinNec[\BDDCL,\preordereq]$.

Consider a uniform RDAG $G=(V,E)$ with root $s \in V$, a sink $t \neq s$ of $G$, and an outgoing edge $e$ of $s$ in $G$. We show how to construct in logarithmic space an $n$-feature BDD $\BDD = (V',E',\lambda,\eta)$, a \condition $\phi \in \lang{n}$, and a class $\class \in \{0,1\}$ such that there exists a path from $s$ to $t$ in $G$ traversing $e$ iff $\phi$ is \emph{not} a $\preordereq$-minimal \glob necessary reason for $\class$ \wrt $\BDD$, for each $\preordereq \in \{\le,\subseteq\}$.

Assume $V = \{u_1,\ldots,u_n\}$, with $u_1 = s$ and $u_n = t$. We consider $n+1$ features, and our \conditions from $\lang{n+1}$ will be then over the terms $\{v_{i} \mid 1 \leq i \leq n+1\} \cup \{0,1\}$. With an abuse of notation, we might directly write $u_i$ for the Boolean variable $v_i$ in our \conditions, for $1 \leq i \leq n$.
The reduction constructs the class $\class = 1$, and the condition $\phi = (s = 0) \wedge (t = 0)$. Moreover, the BDD $\BDD=(V',E',\lambda,\eta)$ is obtained from $G$ with a similar approach to the one in the proof of Theorem~\ref{thm:necessary-complexity-bdd}. However, here $\BDD$ will contain an additional subgraph that will be crucial for the correctness of the reduction. At the high level, $\BDD$ is constructed as done in the proof of Theorem~\ref{thm:necessary-complexity-bdd}, with the difference that the outgoing edge $(s,u)$ of $s$ labeled with $0$ in $\BDD$, which is different from $e$, is replaced with an edge from $s$ to an auxiliary node $\alpha$, and the edge is labeled with $0$, where $\alpha$ is in turn connected with an edge labeled with $1$ to $u$, and with an edge labeled with $0$ to a completely new graph $\BDD'$  of a particular shape. We now formalize the above discussion.

\medskip
\noindent
\textbf{The BDD $\BDD$.} The BDD $\BDD$ is the union of two subgraphs $\BDD_1 = (V_1,E_1,\lambda_1,\eta_1)$ and $\BDD_2 = (V_2,E_2,\lambda_2,\eta_2)$.
We first show how $\BDD_1$ is constructed, which is similar to the one built in the proof of Theorem~\ref{thm:necessary-complexity-bdd}.

The set of nodes of $\BDD_1$ is $V_1 = V \cup \{u_{\text{yes}},u_\text{no}\} \cup \{\alpha\}$. That is, we add two additional nodes which will be the nodes at which an instance is classified as positive (resp., negative) by the BDD, and a third auxiliary node which will be the bridge between $\BDD_1$ and $\BDD_2$. Then, assuming that $e' \neq e$ is the other outgoing edge of $s$ in $G$ different than $e$, the edges of $\BDD_1$ are
$$
E_1 = (E \setminus \{e'\} \cup \{(s,\alpha)\}) \cup 
 \{(u,u_\text{yes}),(u,u_\text{no}) \mid u \text{ is a sink of } G\}.
$$%
In other words, each sink of $G$ will have two outgoing edges in $\BDD_1$, one going to the node $u_\text{yes}$ and the other going to the node $u_\text{no}$. Moreover, the edge $e'$ is replaced with the edge $(s,\alpha)$. Hence, $s$ will have two outgoing edges in $\BDD_1$, the edge $e$, and the edge $(s,\alpha)$.

Regarding the node and edge labels, the nodes $u_\text{yes}$ and $u_\text{no}$ are labeled in $\BDD_1$ with $1$ and $0$ respectively, while each node $u_i \in V_1 \setminus \{u_\text{yes},u_\text{no},\alpha\}$, for $1 \leq i \leq n$, is labeled with $i$, and the node $\alpha$ is labeled with $n+1$.

Finally, each node $u \in V_1 \setminus \{u_\text{yes},u_\text{no},\alpha\}$ has exactly two outgoing edges, which are labeled with $0$ and $1$ respectively, with the condition that the edge $e$ is labeled with $1$, and every edge entering the node $u_\text{yes}$ is labeled with $0$. This completes the definition of $\BDD_1$.
Hence, the difference between $\BDD_1$ and the BDD constructed in the proof of Theorem~\ref{thm:necessary-complexity-bdd} is that the edge $e'$ is replaced with the edge $(s,\alpha)$, where $\alpha$ is a fresh node associated to the $n+1$-th feature. Note that $\BDD_1$ is not a valid BDD yet.
We now discuss how $\BDD_2 = (V_2,E_2,\lambda_2,\eta_2)$ is constructed.

The goal of $\BDD_2$, when joined with $\BDD_1$ to form $\BDD$, is to allow the BDD $\BDD$ to classify with $\class$ an arbitrary instance $\x$, as far as its entries corresponding to $s$ and $t$ are $0$.

Formally, assuming $e' = (s,u)$, the set of nodes of $\BDD_2$ is
$$ V_2 = \{\alpha,u\} \cup \{u'_2\} \cup \{u'_i,v'_i \mid i \in \{3,\ldots,n\}\} \cup \{u_\text{yes}, u_\text{no}\},$$
where $u'_2$ is a fresh copy of the node $u_2$ of $G$, and $u'_i,v'_i$ are two fresh copies of the node $u_i$ of $G$, for $i \in \{3,\ldots,n\}$, i.e., excluding $s$ and $u_2$ (recall that $s = u_1$).
The set of edges of $\BDD_2$ is defined as
\begin{align*}
E_2 = {} &\{(\alpha,u),(\alpha,u'_2)\} \cup {} \\
 &\{(u'_i,u'_{i+1}),(u'_i,v'_{i+1}) \mid i \in \{2,\ldots,n-1\}\} \cup {} \\
 &\{(v'_i,u'_{i+1}),(v'_i,v'_{i+1}) \mid i \in \{3,\ldots,n-1\}\} \cup {} \\
 &\{(u'_n,u_\text{yes}),(u'_n,u_\text{no})\} \cup \{(v'_n,u_\text{yes}),(v'_n,u_\text{no})\}.
\end{align*}

In other words, $\alpha$ can either go to $u$ or the (only) copy of $u_2$, $u'_2$, and then, from $u'_2$, we can follow any path where at each step it either visits the first or the second copy of a node $u_i$, with $i \in \{3,\ldots,n\}$. Any such path will eventually reach a copy of $t$ (recall that $t = u_n$), which in turn can go to either the node $u_\text{yes}$ or the node $u_{\text{no}}$.

Regarding the node labels of $\BDD_2$, $\alpha$ is labeled with $n+1$, $u$ is labeled with $i$, assuming $u = u_i$, for some $1 \leq i \leq n$, each node of the form $u'_i$ or $v'_i$ is labeled with $i$, and $u_\text{yes}$ and $u_\text{no}$ are labeled with $1$ and $0$ respectively. Finally, the edge $(\alpha,u)$ is labeled with $1$, $(\alpha,u'_2)$ with $0$, and for every other node of $\BDD_2$ with two outgoing edges, the two edges are labeled with $0$ and $1$ arbitrarily.

We obtain $\BDD$ from the union of $\BDD_1$ and $\BDD_2$. It is not difficult to verify that $s$ is the only node without incoming edges in $\BDD$, and that there are no cycles in $\BDD$. Moreover, the only sinks are the nodes $u_\text{yes}$ and $u_\text{no}$, and every other node has exactly two outgoing edges. Thus, $\BDD$ is a well-defined BDD. The BDD $\BDD$ can be clearly constructed in logarithmic space. In particular, $\BDD_1$ is almost a copy of $G$, with only 3 additional nodes, while the edges are easily constructible in logarithmic space, since they require scanning $G$ searching for its sinks. Similarly, $\BDD_2$ is easily constructible in logarithmic space since the set of nodes can be built by first computing $n$ via a linear scan of $G$, and by choosing some arbitrary id for the fresh nodes, in order to easily construct the edges.
We now focus on the interesting part of the proof, i.e., that the reduction is correct.

\medskip
$(\Rightarrow)$ Assume that there exists a path $\pi$ from $s$ to $t$ in $G$ traversing the edge $e$. We show that $\phi$ is not a \glob necessary reason for $\class$ \wrt $\BDD$. For the latter to hold, it is enough to show that there is a path $\pi'$ in $\BDD$ from $s$ to $u_\text{yes}$ such that $\ModelsOf{\phi_{\pi'}} \not \subseteq \ModelsOf{\phi}$; recall that $\phi_{\pi'}$ denotes the condition assigning to each feature mentioned in $\pi'$, the corresponding label associated to the outgoing edge of the node in $\pi'$ corresponding to that feature. Since $\pi$ is a path from $s$ to $t$ in $G$ traversing $e$, and since $\BDD_1$ contains all nodes of $G$ and all edges of $G$, except for the edge $e'$, $\pi$ is also a path in $\BDD_1$ from $s$ to $t$. Moreover, by construction of $\BDD_1$, $t$ has two outgoing edges in $\BDD_1$: one going to $u_\text{yes}$ with label $0$, and the other going to $u_\text{no}$ with label $1$. Hence, we consider the path $\pi' = \pi,u_\text{yes}$. Clearly, since $\pi'$ traverses $e$, which is labeled with $1$ in $\BDD_1$, and traverses the edge $(t,u_\text{yes})$, which is labeled with $0$, we have that every instance $\x \in \ModelsOf{\phi_{\pi'}}$ is such that $\x[1] = 1$ and $\x[n] = 0$; recall that $s$ is labeled with $1$ in $\BDD_1$, and $t$ is labeled with $n$ in $\BDD_1$.
However, every instance $\y \in \ModelsOf{\phi}$ is such that $\y[1] = \y[n] = 0$. Hence, $\ModelsOf{\phi_{\pi'}} \not \subseteq \ModelsOf{\phi}$, as needed.

$(\Leftarrow)$ Assume that there is no path from $s$ to $t$ in $G$ that traverses the edge $e$. We show that $\phi$ is a \glob necessary reason for $\class$ \wrt $\BDD$ and that $\phi$ is $\preordereq$-minimal. We start by showing that $\phi$ is a \glob necessary reason for $\class$ \wrt $\BDD$. Since no path exists from $s$ to $t$ in $G$ traversing $e$, by construction of $\BDD$, the only paths in $\BDD$ from $s$ to $u_\text{yes}$ are necessarily paths starting in $s$ traversing the edge $(s,\alpha)$. Since $(s,\alpha)$ is labeled with $0$ in $\BDD$, and since $u_\text{yes}$ is only reachable by first reaching $t$, with the edge $(t,u_\text{yes})$ being labeled with $0$ in $\BDD$, we conclude that for every path $\pi$ in $\BDD$ from $s$ to $u_\text{yes}$, every instance $\x \in \ModelsOf{\phi_\pi}$ is such that $\x[1] = \x[n] = 0$. Since $\phi$ is defined as $(s = 0) \wedge (t = 0)$, we conclude that $\ModelsOf{\phi_\pi} \subseteq \ModelsOf{\phi}$, for every path $\pi$ in $\BDD$ from $s$ to $u_\text{yes}$. Hence, $\phi$ is a \glob necessary reason for $\class$ \wrt $\BDD$.

We now show that $\phi$ is $\preordereq$-minimal. By Lemma~\ref{lem:characterization}, it suffices to show that for each literal $\ell$ with $\phi \not \models \ell$, $\ell$ is not a \glob necessary reason for $\class$ \wrt $\BDD$.
For this, we first observe that $G$ always admits a path from $s$ to $t$ (not necessarily going via $e$), since $G$ is a rooted DAG. Indeed, one can build such a path in reverse order starting from $t$, and iteratively following an arbitrary edge backwards. Since $s$ is the only node without incoming edges in $G$, and since there are no cycles in $G$, eventually, this process must lead to $s$.
We now prove the desired claim with a case by case analysis on the shape of the literal $\ell$ for which $\phi \not \models \ell$.

\begin{enumerate}

\item \textit{$\ell$ mentions a Boolean variable $v_i$, for $i \in \{2,\ldots,n-1\}$.} In this case, we mean that $\ell$ mentions at least one Boolean variable corresponding to a node of $G$, excluding $s$ and $t$, as well as excluding the Boolean variable corresponding to the $n+1$-th feature which only labels the node $\alpha$ in $\BDD$. Let $\ell$ be of the form $v_i \genericrelation q$, for some ${\genericrelation} {} \in \{=,\neq\}$ and term $q$, and let $\ngenericrelation$ be $=$ if $\genericrelation$ is $\neq$, and vice versa. Consider now the path $\pi$ in $\BDD$ of the form
$$s,\alpha,\alpha_2,\alpha_3,\ldots,\alpha_{n},u_\text{yes},$$
with $\alpha_2 = u'_2$, and $\alpha_j \in \{u'_j,v'_j\}$, for $j \in \{2,\ldots,n\}$, such that the edge $(\alpha_i,\alpha_{i+1})$ is chosen to be the one labeled with a value $a$ for which $a \ngenericrelation q$, if $q \in \{0,1\}$, or, if $q = v_k$, $a \ngenericrelation b$, where $b$ is the value of the edge that connects the node labeled with $k$ in $\pi$ to its successor in $\pi$ (this edge always exists in $\pi$ by construction of $\BDD$ and $\pi$). Clearly, by construction of $\pi$, every instance $\x \in \ModelsOf{\phi_\pi}$ is such that $\x \not \models \ell$, and thus $\ell$ is not a \glob necessary reason for $\class$ \wrt $\BDD$, in which in turn implies that $\phi$ is not $\preordereq$-minimal, thanks to Lemma~\ref{lem:characterization}. 
 
\item \text{$\ell$ mentions only the Boolean variables $v_1$, $v_n$, or $v_{n+1}$.} In this case, we mean that $\ell$ mentions only the Boolean variables corresponding to the nodes $s$ ($v_1$), $t$ ($v_n$), and $\alpha$ ($v_{n+1}$). Since $\phi$ is of the form $(s=0) \wedge (t=0)$ which actually corresponds to $(v_1 = 0) \wedge (v_n = 0)$, and since $\ell$ is such that $\phi \not \models \ell$, $\ell$ can only be of one of the following forms:
	\begin{itemize}
		\item $s = 1$. In this case, we can build a path $\pi$ in $\BDD$ from $s$ to $u_\text{yes}$ going via the edge $(s,\alpha)$, which is labeled with $0$, as did for case~(1) above. In this case, $\pi$ requires $s = 0$, while $\ell$ requires $s=1$, and thus $\ell$ is not necessary.
		\item $t = 1$. In this case, every path in $\BDD$ from $s$ to $u_\text{yes}$ must necessarily traverse the edge $(t,u_\text{yes})$ which is labeled with $0$, while $\ell$ requires $t = 1$, and thus is not necessary.
		\item $\alpha = q$, for $q \in \{0,1\}$. In this case, if $\bar{q} = 1 -q$, we can build the path $\pi$ in $\BDD$ from $s$ to $u_\text{yes}$ which first traverses the edge $(s,\alpha)$, and then chooses the edge exiting $\alpha$ which is labeled with $\bar{q}$. In both cases, $\pi$ reaches $t$, and thus $u_\text{yes}$ (recall that $G$ always has a path from $s$ to $t$, and $\BDD_2$ always allows to reach $u_\text{yes}$ starting from $u'_2$). Hence, $\pi$ requires that $\alpha = \bar{q}$, while $\ell$ requires $\alpha = q$.
		\item $s \neq t$. In this case, we consider any path $\pi$ in $\BDD$ from $s$ to $u_\text{yes}$ which follows the edge $(s,\alpha$), which is labeled with $0$, and then continues via the edge $(\alpha,u'_2)$ and so on. Since any path of $\BDD$ that ends at $u_\text{yes}$ must follow the edge $(t,u_\text{yes})$, which is labeled with $0$, we conclude that $\pi$ requires that $s = t$, while $\ell$ requires $s \neq t$.
		\item $s \genericrelation \alpha$, for ${\genericrelation} {} \in \{=,\neq\}$. Let $\ngenericrelation$ be the opposite operator of $\genericrelation$. In this case, we consider the path $\pi$ in $\BDD$ from $s$ to $u_\text{yes}$ which follows the edge $(s,\alpha$), which is labeled with $0$, and then follows the edge exiting $\alpha$ wich is labeled with a value $q$ such that $q \ngenericrelation 0$. Note that such a path always exists since either the path follows $(\alpha,u'_2)$, which leads to $u_\text{yes}$ by construction of $\BDD_2$, or $\pi$ follows the edge$(\alpha,u)$, where $u$ is the node such that $e' = (s,u) \neq e$ is the other outgoing edge of $s$ in $G$. Since $G$ always has a path from $s$ to $t$, and since, by assumption, $G$ has no path from $s$ to $t$ traversing $e$, this path must necessarily traverse $e'$.
		Hence, we conclude that $\pi$ requires $s \ngenericrelation \alpha$, while $\ell$ requires $s \genericrelation \alpha$.
		\item $t \genericrelation \alpha$, for ${\genericrelation} {} \in \{=,\neq\}$. This case is treated similarly to the case where $\ell$ is of the form $\ell = s \genericrelation \alpha$.
		\item $1 = 0$ or $q \neq q$, for $q \in \{0,1\}$. In this case, it suffices to note that $\BDD$ admits at least one path from $s$ to $u_\text{yes}$, while $\ell$ is a contradiction, and thus $\ModelsOf{\ell} = \emptyset$.
	\end{itemize}

With the above discussion in place, we finally obtain that $\phi$ is a $\preordereq$-minimal \glob necessary reason for $\class$ \wrt $\BDD$, which concludes our proof.\qedhere
\end{enumerate}
\end{proof}

\TheoComplexityIsMinNecessaryMLP*

\begin{proof}
\emph{(Membership).}
The complexity upper bound has already been shown in the main body of the paper.

\emph{(Hardness).}
The complexity lower bound is shown via a polynomial-time reduction from the problem $\SATUNSAT$:
for a pair $\tup{\gamma,\delta}$ of 3CNF Boolean formulas, decide whether $\gamma$ is satisfiable \emph{and} $\delta$ is unsatisfiable;
in what follows, $\gamma$ and $\delta$ are assumed to be over the disjoint sets of Boolean variables $V_\gamma = \{x_1,\ldots,x_p\}$ and $V_\delta = \{y_1,\ldots,y_q\}$, respectively.

The reduction transforms a pair $\tup{\gamma,\delta}$ of \SATUNSAT{} into a triple of $\IsMinNec[\MLPCL,\preordereq]$, below defined more precisely, comprising an MLP $\MLP_{\Psi}$, a class $\class$, and a condition $\phi$.
The instance domain for $\MLP_{\Psi}$ is over $n = p + q + 2$ features, where the first $p$ features are associated with the Boolean variables $V_\gamma = \{x_1,\ldots,x_p\}$, the following $q$ features are associated with the Boolean variable $V_\delta = \{y_1,\ldots,y_q\}$, and there are two extra features associated with two extra Boolean variables $g$ and $d$ not in $V_\gamma$ or $V_\delta$ (see below).
In what follows, for presentation purposes, given an instance $\z$ for $\MLP_{\Psi}$,  we will denote by $\ElementOfInstance{z}{x_i}$ (resp., by $\ElementOfInstance{z}{y_j}$, $\ElementOfInstance{z}{g}$, and $\ElementOfInstance{z}{d}$) the value in $\z$ of the feature associated with $x_i$ (resp., $y_j$, $g$, and $d$).
The \conditions from $\lang{n}$ are then over the terms $\Set{v_{x_i} \mid x_i \in V_\gamma} \cup \Set{v_{y_j} \mid y_j \in V_\delta} \cup \Set{v_g, v_d} \cup \Set{0,1}$.

% \red{With an abuse of notation, when writing \conditions of $\lang{n}$, we might use $x_i$ in place of the term $v_i$, for $i \in [p]$, $y_i$ in place of the term $v_{p+i}$, for $i \in [m]$, and $f$ (resp., $r$) in place of the term $v_{n-1}$ (resp., the term $v_{n}$). *** RIMUOVERE? ***}
The reduction builds the class $\class = 1$, the condition $\phi = (v_d = 1)$, and the MLP $\MLP_\Psi$ encoding, according to \citeauthor{Barcelo20}'s (\citeyear[Lemma~13]{Barcelo20}) rules, the formula
$$\Psi = (\gamma \vee g) \wedge (\delta \vee d),$$
where $g$ and $d$ are two fresh Boolean variables not in $V_\gamma$ and $V_\delta$. %, which $v_g$ and $v_d$ are associated with, respectively.
By ``$\MLP_{\Psi}$ encoding $\Psi$'', we mean that, for each instance $\z$, $\MLP_\Psi(\z) = \Psi[\z]$, where $\Psi[\z]$ denotes the truth value of $\Psi$ after assigning $x_i = \z[x_i]$, $y_j = \z[y_j]$, $g = \z[g]$, and $d = \z[d]$.
We know from \citeauthor{Barcelo20}~(\citeyear[Lemma~13]{Barcelo20}) that the MLP $\MLP_\Psi$ always exists and can be built in polynomial time \wrt the size of $\Psi$. Hence, the reduction is a polytime one.
The interesting part is proving the reduction is correct.

\medskip

$(\Rightarrow)$
Assume that $\gamma$ is satisfiable and that $\delta$ is unsatisfiable.
To see why $\phi$ is a \glob necessary reason for $\class$ \wrt $\MLP_\Psi$, consider a generic instance $\z$ such that $\MLP_\Psi(\z) = \class = 1$.
Since $\delta$ is unsatisfiable, for $\MLP_\Psi(\z) = 1$ to hold, it must be the case that $\z[d] = 1$, otherwise the subformula $(\delta \vee d)$ of $\Psi$, which $\MLP_\Psi$ encodes, would be false.
Since $\phi$ contains only the literal $(d = 1)$, it holds that $\ModelsOf{\MLP_\Psi,\class} \subseteq \ModelsOf{\phi}$, and hence $\phi$ is a \glob necessary reason for $\class$ \wrt $\MLP_\Psi$.

We now show that $\phi$ is also $\preordereq$-minimal, for ${\preordereq} {} \in \Set{\leq,\subseteq}$.
By Lemma~\ref{lem:characterization}, it suffices to prove that, for each literal $\ell$ such that $\phi \not \models \ell$, $\ell$ is \emph{not} a \glob necessary reason for $\class$ \wrt $\MLP_\Psi$, i.e., there is an instance $\z$ such that $\MLP_\Psi(\z) = \class = 1$ and $\z \not \models \ell$.
Consider a generic literal $\ell$ such that $\phi \not \models \ell$.
There are two cases:
either (i)~both $v_g$ and $v_d$ do \emph{not} appear in $\ell$,
or (ii)~at least one of them does.
Remember that the relation ${\genericrelation} {} \in \Set{=,\neq}$ appearing in the literals is symmetric, hence literals $(a \genericrelation b)$ and $(b \genericrelation a)$ are equivalent, and hence there will be the need to consider only one of the two.

\begin{itemize}
\item[(i)] Let $\ell$ be a generic literal either of the form $(v_a \genericrelation v_b)$, or of the form $(v_a \genericrelation u)$, where $a,b \in V_\gamma \cup V_\delta$ and $u \in \Set{0,1}$. %is either of the form $v_i \theta v_j$ or of the form $v_i \theta t$, where $i,j \in [n-2]$, $t \in \{0,1\}$, and $\theta \in \{=,\neq\}$, i.e., $\ell$ mentions only the terms $0$ and $1$ and the terms corresponding to the Boolean variables in $V_\gamma \cup V_\delta$.
Let $\ngenericrelation$ denote the opposite operator of $\genericrelation$.

Consider now a generic instance $\z$ such that $\ElementOfInstance{z}{g} = \ElementOfInstance{z}{d} = 1$, and such that, if $\ell$ is $(v_a \genericrelation v_b)$, then $\ElementOfInstance{z}{a} \ngenericrelation \ElementOfInstance{z}{b}$, and, if $\ell$ is $(v_a \genericrelation u)$, then $\ElementOfInstance{z}{a} \ngenericrelation u$.

Since $\ElementOfInstance{z}{g} = \ElementOfInstance{z}{d} = 1$, by definition of $\Psi$ it holds that $\MLP_\Psi(\z) = \class = 1$.
However, $\z \not \models \ell$, because $\ell$ requires an opposite relationship between its terms \wrt what the corresponding feature values in $\z$ have.
Therefore, $\z$ is \emph{not} a \glob necessary reason of $\class$ \wrt $\MLP_\Psi$.

\item[(ii)] Assume now that $v_g$ or $v_d$, or both, appear in $\ell$.
By this, $\ell$ is in one of the following forms:
$(v_g \genericrelation v_d)$, or $(v_g \genericrelation t)$, or $(v_d \genericrelation t)$, with $t \in \Set{v_{x_i} \mid x_i \in V_\gamma} \cup \Set{v_{y_j} \mid y_j \in V_\delta} \cup \Set{0,1}$.
Let $\ngenericrelation$ again denote the opposite operator of $\genericrelation$, and let $\mu \colon V_\gamma \rightarrow \Set{0,1}$ be a truth assignment to the variables $V_\gamma$ satisfying $\gamma$;
$\mu$ must exist, as $\gamma$ is assumed satisfiable.
We now look at the three forms for $\ell$ in turn, and we show that, in all the three cases, such a literal is \emph{not} a \glob necessary reason for $\class$ \wrt $\MLP_\Psi$.

\begin{itemize}
	\item Consider the case when $\ell$ is $(v_g \genericrelation v_d)$.
    Let $\z$ be an instance such that $\ElementOfInstance{z}{d} = 1$, $\ElementOfInstance{z}{g}$ is set so that $\ElementOfInstance{z}{g} \ngenericrelation \ElementOfInstance{z}{d}$, and $\ElementOfInstance{z}{x_i} = \mu(x_i)$, for each $x_i \in V_\gamma$.
    
    Since $\mu$ satisfies $\gamma$ and $\ElementOfInstance{z}{d} = 1$, regardless of the value of $\ElementOfInstance{z}{g}$, it holds that $\Psi[\z]$ is true, and thus $\MLP_\Psi(\z) = \class = 1$.
    However, since $\ElementOfInstance{z}{g} \ngenericrelation \ElementOfInstance{z}{d}$, we have that $\z \not \models \ell$.
    Therefore, $\ell$ is \emph{not} a \glob necessary reason. % for $\class$ \wrt $\MLP_\Psi$.

	\item Consider the case when $\ell$ is $(v_g \genericrelation t)$.
    Let $\z$ be an instance such that $\ElementOfInstance{z}{d} = 1$, $\ElementOfInstance{z}{g}$ is set so that $\ElementOfInstance{z}{g} \ngenericrelation \ElementOfInstance{z}{x_i}$ if $t$ is $v_{x_i}$ (resp., $\ElementOfInstance{z}{g} \ngenericrelation \ElementOfInstance{z}{y_j}$ if $t$ is $v_{y_j}$), or $\ElementOfInstance{z}{g} \ngenericrelation t$ if $t \in \Set{0,1}$, and $\ElementOfInstance{z}{x_i} = \mu(x_i)$, for each $x_i \in V_\gamma$.
    
    Since $\mu$ satisfies $\gamma$ and $\ElementOfInstance{z}{d} = 1$, regardless of the value of $\ElementOfInstance{z}{g}$, it holds that $\Psi[\z]$ is true, and thus $\MLP_\Psi(\z) = \class = 1$.
    However, since $\ElementOfInstance{z}{g} \ngenericrelation \ElementOfInstance{z}{x_i}$ when $t$ is $v_{x_i}$ (resp., $\ElementOfInstance{z}{g} \ngenericrelation \ElementOfInstance{z}{y_j}$ when $t$ is $v_{y_j}$), or $\ElementOfInstance{z}{g} \ngenericrelation t$ when $t \in \Set{0,1}$, we have that $\z \not \models \ell$.
    Therefore, $\ell$ is \emph{not} a \glob necessary reason. % for $\class$ \wrt $\MLP_\Psi$.

	\item Consider the case when $\ell$ is $(v_d \genericrelation t)$.
    Let $\z$ be an instance such that $\ElementOfInstance{z}{g} = \ElementOfInstance{z}{d} = 1$, and $\ElementOfInstance{z}{x_i}$ (resp., $\ElementOfInstance{z}{y_j}$) is set so that $\ElementOfInstance{z}{d} \ngenericrelation \ElementOfInstance{z}{x_i}$ if $t$ is $v_{x_i}$ (resp., $\ElementOfInstance{z}{d} \ngenericrelation \ElementOfInstance{z}{y_j}$ if $t$ is $v_{y_j}$);
    if $t \in \Set{0,1}$ no additional constraint is imposed over $\z$.
    
	Since $\ElementOfInstance{z}{g} = \ElementOfInstance{z}{d} = 1$, we have that $\Psi[\z]$ is true, and thus $\MLP_\Psi(\z) = \class = 1$.
    To see why $\z \not \models \ell$ we distinguish two cases:
    either (a)~$t \in \Set{v_{x_i} \mid x_i \in V_\gamma} \cup \Set{v_{y_j} \mid y_j \in V_\delta}$, or (b)~$t \in \Set{0,1}$.
    
    In case~(a), we have that $\ElementOfInstance{z}{d} \ngenericrelation \ElementOfInstance{z}{x_i}$ (resp., $\ElementOfInstance{z}{d} \ngenericrelation \ElementOfInstance{z}{y_j}$), while $\ell$ requires $v_d \genericrelation v_{x_i}$ (resp., requires $v_d \genericrelation v_{y_j}$), and thus $\z \not \models \ell$.
    
    In case~(b), since $\phi$ is $(v_d = 1)$ and we are assuming $\phi \not \models \ell$, either $\genericrelation$ is the equality and $t$ is $0$, or $\genericrelation$ is the inequality and $t$ is $1$.
    In both cases, the value $\ElementOfInstance{z}{d}$ is in contradiction with what $\ell$ requires, and hence $\z \not \models \ell$.
    Therefore, $\ell$ is \emph{not} a \glob necessary reason. % for $\class$ \wrt $\MLP_\Psi$.
	\end{itemize}
\end{itemize}
 
%\smallskip

$(\Leftarrow)$ Assume that either $\delta$ is satisfiable, or both $\delta$ and $\gamma$ are unsatisfiable.
We show that $\phi$ is not a \glob necessary reason %for $\class$ \wrt $\MLP_\Psi$
at all, or $\phi$ is a \glob necessary reason but it is not $\preordereq$\nbdash-minimal.

If $\delta$ is satisfiable, let $\mu \colon V_\delta \rightarrow \Set{0,1}$ be a truth assignment to the variables $V_\delta$ satisfying $\delta$.
Let $\z$ be an instance such that $\ElementOfInstance{z}{g} = 1$, $\ElementOfInstance{z}{d} = 0$, and $\ElementOfInstance{x}{y_j} = \mu(y_j)$, for each $y_j \in V_\delta$.

Since $\mu$ satisfies $\delta$ and $\ElementOfInstance{z}{g} = 1$, we have that $\MLP_\Psi(\z) = \class = 1$.
However, since $\ElementOfInstance{z}{d} = 0$, we have that $\z \not \models \phi$, as $\phi$ is $(v_d = 1)$.
By this, $\phi$ is \emph{not} a \glob necessary reason for $\class$ \wrt $\MLP_\Psi$.

Assume now that both $\gamma$ and $\delta$ are unsatisfiable.
By this, the only way for a truth assignment to satisfy the subformulas $(\gamma \vee g)$ and $(\delta \vee d)$ of $\Psi$ is by assigning true to $g$ and $d$.
Hence, the set of instances $\ModelsOf{\MLP_\Psi,\class}$ are precisely all the instances $\z$ for which $\ElementOfInstance{z}{g} = \ElementOfInstance{z}{d} = 1$.
Notice that, since $\phi$ is $(v_d = 1)$, it holds that $\ModelsOf{\MLP_\Psi,\class} \subseteq \ModelsOf{\phi}$, and thus $\phi$ is a \glob necessary reason for $\class$ \wrt $\MLP_\Psi$.
However, we can show that $\phi$ is \emph{not} $\preordereq$\nbdash-minimal.

To prove this, by \cref{lem:characterization}, it suffices to show that there exists a literal $\ell$ with $\phi \not \models \ell$ such that $\ell$ is a \glob necessary reason for $\class$ \wrt $\MLP_\Psi$.
Consider the literal $\ell = (v_g = 1)$. 
By the discussion above, every instance $\z \in \ModelsOf{\MLP_\Psi,\class}$ is such that $\ElementOfInstance{z}{g} = 1$, and thus every such instance satisfies $\ell$.
Therefore, $\ell$ is a \glob necessary reason for $\class$ \wrt $\M_\Psi$.
The existence of $\ell$ witnesses that $\phi$ is \emph{not} $\preordereq$\nbdash-minimal.
\end{proof}

\end{appendices}

%%% This updated version of the `abbrvnat' for natbib style prints the doi (if present)
%%% also for the techreport type of references, and print the url for
%%% PhD Thesis type of references at the end (after the note field).
%%% Furthemore, there are fixes to print the eid (the article number for those
%%$ *journal* articles that do not have page numbers but an article id).
%%% These fixes print "art. " + eid, with a small (and not full)
%%% blank space in between the two (otherwise it would not look nice).
%\bibliographystyle{abbrvnat-updated}
%%\bibliographystyle{alphaurl}
%\bibliography{refs}

%% From this point onward, URLs and DOIs are typeset in the same font of the normal text
%% (so to have links in the bilbiography to be typset in the normal font)
\urlstyle{same}

\printbibliography[heading=bibintoc]

\end{document}